\pdfoutput=1
\documentclass[letterpaper,twocolumn]{article}
\usepackage[submission]{aaai2027}
\usepackage[hyphens]{url}
\usepackage{graphicx}
\usepackage{caption}
\usepackage{amsmath,amssymb,amsthm}
\usepackage{booktabs}
\usepackage{multirow}
\usepackage{pifont}
\newtheorem{theorem}{Theorem}
\newtheorem{lemma}[theorem]{Lemma}
\newtheorem{proposition}[theorem]{Proposition}
\newtheorem{corollary}[theorem]{Corollary}
\newtheorem{assumption}[theorem]{Assumption}
\newtheorem{remark}[theorem]{Remark}

\renewcommand{\floatpagefraction}{0.6}

\usepackage{enumitem}
\makeatletter
\def\th@plain{\thm@headfont{\bfseries}\itshape}
\makeatother

\usepackage{titlesec}
\titlespacing*{\section}{0pt}{0.7ex plus .2ex minus .2ex}{0.4ex plus .1ex}
\titlespacing*{\subsection}{0pt}{0.6ex plus .2ex minus .2ex}{0.3ex plus .1ex}

\newcommand{\std}[1]{{\scriptsize$\pm$#1}}

\title{Active Spiking Perception: The Membrane Potential as a Belief State\\ for Anytime 3D Point Cloud Recognition}
\author{Akarsh Jain\textsuperscript{1}, \ Arya Pawa\textsuperscript{2}, \ Ayush Debnath\textsuperscript{3}, \ Smera Rawal\textsuperscript{4}, \ Sayeed Shafayet Chowdhury\textsuperscript{5}}
\affiliations{%
\textsuperscript{1}Indian Institute of Technology Indore \quad
\textsuperscript{2}Thadomal Shahani Engineering College, University of Mumbai \quad
\textsuperscript{3}Indian Institute of Technology Kharagpur\\
\textsuperscript{4}Vellore Institute of Technology Chennai \quad
\textsuperscript{5}Indiana University Indianapolis\\[2pt]
\texttt{ee240002007@iiti.ac.in}, \ \texttt{aryapawa903@gmail.com}, \ \texttt{ayush.d@kgpian.iitkgp.ac.in},\\
\texttt{smerarawal@gmail.com}, \ \texttt{chowdh23@purdue.edu}}

\begin{document}
\maketitle
\maketitle

\begin{abstract}
Spiking point cloud networks usually scan space in a fixed, input-agnostic order, which leaves the most distinctive resource of spiking computation, the temporal evolution of the membrane potential, unused as a locus of decision-making. \emph{Active Spiking Perception} (ASP) recasts 3D recognition as an iterative decision process in which the network's own leaky integrate-and-fire (LIF) membrane potential, read as a running belief over the class, selects the next spatial chunk to observe and triggers confidence-margin early exit. A lightweight Slice-Selection Policy scores unvisited farthest-point-sampled chunks from the membrane state and precomputed geometric descriptors, trains end-to-end through a straight-through Gumbel--Softmax, reduces to an argmax at inference, and adds about 2\% of backbone parameters. We prove that leaky integration \emph{is} the recursive log-posterior update of a Bayesian filter, exactly for an idealised accumulator and approximately for the trained network, that the exit rule attains distribution-free selective risk with no multiple-testing penalty at the stopping time, and that streaming state carry-forward is exactly equivalent to prefix recomputation with geometrically bounded finite-precision drift. ASP reaches 90.62\% and 93.28\% on ModelNet40 and ModelNet10, 1.7 points below the strongest spiking baseline on ModelNet40 at a larger backbone, while adding a certified anytime interface no baseline offers. The same mechanism transfers without architectural change to dense prediction, giving 83.21 instance mIoU on ShapeNetPart and 48.50 mIoU on S3DIS Area~5, to our knowledge the first spiking results on S3DIS Area~5, and, fixation replacing chunk selection, to a foveated \emph{non-spiking} transformer, so the policy is not tied to spiking backbones: cost is exactly linear in observations and the threshold is a \emph{measured} compute dial spanning $2.8\times$ to $1.35\times$ less energy against a control matched to within 768 parameters, which nonetheless stays 4.52 points ahead. One limitation is concrete: one S3DIS class is unidentifiable at the crop size we use, and we give the prediction that would fix it.
\end{abstract}

\section{Introduction}
Three-dimensional point cloud understanding underpins embodied applications where decisions must be accurate, fast and cheap on constrained hardware. Accuracy has advanced steadily, from PointNet \citep{qi2017pointnet} to Point Transformer \citep{zhao2021point}, at proportionally rising floating-point cost. Spiking neural networks (SNNs) offer an alternative substrate, replacing dense multiply--accumulate with sparse, event-driven accumulate-only operations that map onto neuromorphic hardware such as Loihi \citep{davies2018loihi} at roughly a fourth of the energy per operation \citep{lemaire2022analytical}. Surrogate gradients \citep{fang2021incorporating,neftci2019surrogate} made deep SNNs trainable, and Spiking PointNet \citep{ren2023spiking}, SPT \citep{wu2025spt} and SPM \citep{wu2025spm} have narrowed the gap to ANN baselines.

Yet all inherit a choice from their ANN predecessors that sits uneasily with spiking computation: the cloud is partitioned into local regions and the \emph{entire} set is processed at every timestep in a fixed order. For a feedforward ANN that is reasonable, a ``timestep'' there being bookkeeping. For an SNN, whose membranes encode an incrementally refined belief, a fixed order wastes effort twice: discriminative structure is spatially non-uniform and class-dependent, so uniform allocation ignores which regions matter, and once evidence is decisive further regions burn energy for nothing. Early-exit methods \citep{teerapittayanon2016branchynet,huang2018msdnet,graves2016act} ask \emph{when} to stop across depth, and adaptive token selection \citep{baiocchi2024adapt,rao2021dynamicvit} prunes \emph{after} a first encoding pass, but neither question has been posed for the spiking 3D setting. Hence our central question: \emph{can a spiking network decide, from its own membrane state, where in the input to look next?}

\begin{figure*}[!t]
\centering
\includegraphics[width=0.92\textwidth]{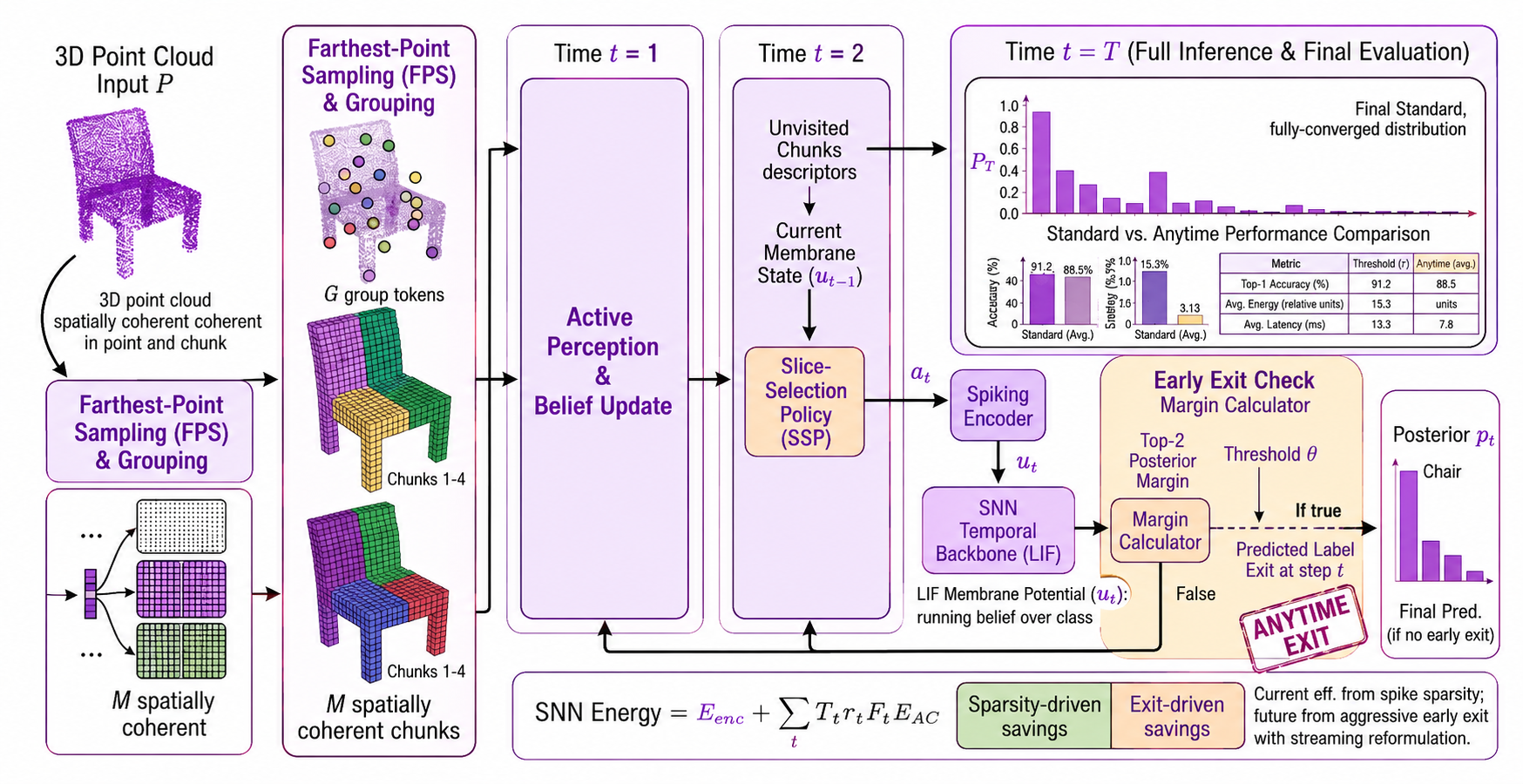}
\caption{ASP architecture. Preprocessing (left) runs once: FPS groups the cloud into $M$ chunks with parameter-free descriptors. The loop (centre) runs at most $T$ steps: the SSP reads membrane $u_{t-1}$ and unvisited-chunk descriptors to pick $a_t$, the backbone updates the membrane, and the top-2 margin is checked against the calibrated $\theta$. Right: converged posterior against the anytime prediction at exit.}
\label{fig:arch}
\vspace{-5mm}
\end{figure*}

We answer it with \emph{Active Spiking Perception} (ASP), in which observation is an iterative, learned decision process rather than a fixed scan (Figure~\ref{fig:arch}). The conceptual move is that the LIF membrane potential is already a compact summary of everything observed so far. Every SNN maintains it for free, and it suffices to decide what to observe next. A lightweight Slice-Selection Policy reads it with cheap precomputed descriptors of unvisited regions and selects where to look; a confidence rule stops once the prediction is no longer expected to change. The temporal dimension then performs adaptive observation instead of fixed rescanning, and simple inputs cost fewer timesteps than ambiguous ones.

We are deliberate about what is claimed. The membrane is a usable controller, its guarantees hold at a measured risk level, and every mechanism the theory posits can be caught behaving as predicted. It transfers to part and scene segmentation, and to foveated vision, where fixation replaces chunk selection inside an ordinary ANN, so the controller is not a spiking-only construction.

\paragraph{Contributions.} \textbf{(i)} The first formulation of 3D point cloud understanding as active, sequential observation \emph{native to a spiking backbone}. \textbf{(ii)} The Slice-Selection Policy, scoring unvisited regions from membrane state and offline geometry at about 2\% parameter overhead. \textbf{(iii)} Guarantees on Bayesian sufficiency of the membrane, anytime-valid selective risk, and exact streaming equivalence with a quantization-drift bound, each tied to a property we \emph{measure}. \textbf{(iv)} An energy accounting separating spike sparsity from early exit, stated as a hardware-parameter inequality. \textbf{(v)} Evidence the policy is not spiking-specific: the same scorer and exit rule, fixation replacing chunk selection, run unchanged inside a \emph{conventional ANN} transformer, spanning $2.83\times$ to $1.35\times$ less energy against a 768-parameter-matched control. \textbf{(vi)} Evaluation across classification, part and scene segmentation and that foveated model, including, to our knowledge, the first spiking results on S3DIS Area~5.

\section{Related Work}
Dense point cloud networks \citep{qi2017pointnet,qi2017pointnetpp,wang2019dgcnn,zhao2021point,yu2022pointbert,ma2022pointmlp} refine \emph{how} a cloud is represented, never \emph{when} or \emph{where}. Spiking 3D models \citep{ren2023spiking,qiu2025e3dsnn,wu2025spt,wu2025spm}, which already report part and outdoor semantic segmentation, have concentrated on matching ANN accuracy at all, so each processes all partitions in a fixed order at every timestep, leaving membrane accumulation unused as a scheduling signal. ASP repurposes that state as a \emph{controller}. The closest efficiency-oriented line is adaptive token selection, where AdaPT \citep{baiocchi2024adapt} prunes tokens inside a point transformer under a runtime budget, token-pruning vision transformers \citep{rao2021dynamicvit,liang2022evit} do the same for images, and adaptive-computation methods \citep{teerapittayanon2016branchynet,huang2018msdnet,graves2016act} halt across depth. Three differences separate ASP: those methods decide what to \emph{discard} after encoding every token once, whereas ASP decides what to \emph{acquire}, so no unselected region is encoded at all; its controller is an internal recurrent state the network already maintains; and selection is coupled to a \emph{certified} stopping rule, so every intermediate answer carries a distribution-free risk level. Among glimpse models \citep{bajcsy1988active,ballard1991animate,gregor2015draw,jonnalagadda2021foveater}, RAM \citep{mnih2014ram} is the closest ancestor, but its location network reads a continuous LSTM state trained by policy gradient, where ASP scores chunks from a LIF membrane through a Gumbel--Softmax relaxation, keeping inference accumulate-only. Our guarantees build on selective classification \citep{geifman2017selective}, the $(1-1/e)$ greedy bound \citep{nemhauser1978analysis,krause2005near} as distinct from adaptive submodularity \citep{golovin2011adaptive}, and conformal calibration \citep{guo2017calibration,angelopoulos2024crc}; CALM \citep{schuster2022calm} halts along \emph{depth} over a fixed input where we halt along \emph{acquisition} (supplementary~\S\,C).

\section{Active Spiking Perception}
\paragraph{Configuration up front.} Unless stated otherwise, results use $M{=}4$ chunks of $K{=}32$ tokens and $T\le M$ steps, where the loop runs near full utilisation, so spike sparsity dominates the measured energy reduction. $M$ is the knob that matters most, and the separately trained $M{=}16$ model reported later is where adaptive observation becomes the larger effect.

\paragraph{Problem formulation.} Let $P\in\mathbb{R}^{N\times3}$ be an input cloud. Farthest-point sampling selects $G$ centroids; a ball query about each collects $K$ points; the $G$ group tokens are partitioned into $M$ spatially coherent chunks, which may overlap and need not cover $P$ (coverage is a design parameter, not a constraint). We treat observation as a $T$-step decision process, $T\le M$: writing $V_t$ for chunks observed up to step $t$ with $V_0=\emptyset$, the model (i) selects $a_t\notin V_{t-1}$ under a policy conditioned on the current spiking state; (ii) updates the LIF membrane via that chunk's tokens; (iii) emits $\hat{y}_t$, sets $V_t=V_{t-1}\cup\{a_t\}$; and (iv) halts if a margin criterion is met. Encoder, policy and halting rule train jointly against one objective.

\subsection{Chunk Encoder, Spiking Backbone, and Policy}
\label{sec:head}
Relative coordinates per group are embedded by an EdgeConv block \citep{wang2019dgcnn} on a \emph{static} $k$-NN graph, $e_m=\max_{i\in G_m}\psi(h_i)\in\mathbb{R}^{D}$; static so the descriptors below stay consistent with the neighbourhoods the encoder sees, satisfying supplementary Proposition~S15 by construction. For layer $l$ at step $t$ with membrane $u^{(l)}_t$, spikes $s^{(l)}_t$, per-neuron leak $\lambda^{(l)}$ and threshold $\theta^{(l)}$,
\begin{align}
u^{(l)}_t&=\lambda^{(l)}\!\odot u^{(l)}_{t-1}+\mathrm{ReLU}\big(\mathrm{BN}(W^{(l)}x^{(l-1)}_t)\big)-\theta^{(l)}\!\odot s^{(l)}_{t-1},\label{eq:lif2}\\
s^{(l)}_t&=\Theta\big(\mathcal{N}(u^{(l)}_t)-\theta^{(l)}\big).\label{eq:lif3}
\end{align}
Eq.~\eqref{eq:lif2} is a \emph{soft reset}, which keeps the overshoot, the evidence separating a marginal from an emphatic spike and the signal the policy consumes. Leak and threshold follow DIET-SNN \citep{rathi2021diet}, $\mathcal{N}$ is membrane batch normalisation \citep{guo2023membrane} on the firing path only so the belief is never rescaled by batch statistics, and training uses the arctangent surrogate \citep{fang2021incorporating}. After $a_t$ is selected, a selective-scan mixer \citep{gu2023mamba} over the observed prefix drives $L$ residual LIF cells, with belief $b_t=\mathrm{LayerNorm}(u^{(L)}_t)$ taken from the membrane and \emph{not} the residual sum, which mixes an analog shortcut.

Each chunk is summarised by a parameter-free descriptor computed once from coordinates,
\begin{equation}
g_m=\big[\bar{x}_m;\,\mathrm{Var}[x_m];\,\max_{i\in S_m}\|x_i-\bar{x}_m\|_2;\,\|\bar{x}_m\|_2\big]\in\mathbb{R}^{8},
\label{eq:descriptor}
\end{equation}
which says \emph{where} a region is without revealing \emph{what} it contains, since resolving content is what the observation step is for. The policy scores unvisited chunks bilinearly, $q_{t,m}=w^{\top}\tanh(W_ub_{t-1}+W_gg_m)$ for $m\notin V_{t-1}$, with $W_u,W_g,w$ its only parameters and visited chunks masked out. Since $W_gg_m$ is geometry-only and hoisted out of the loop, per-step scoring costs $O((M{+}D)d_{\mathrm{ssp}})$, negligible against the backbone. Training draws a hard Gumbel--Softmax sample \citep{jang2017categorical} under geometric annealing, one-hot forward so the training graph matches inference semantics, softened backward; at inference $a_t=\arg\max_m q_{t,m}$. Alternative \texttt{random} and \texttt{fps\_order} modes sit behind one flag sharing every other component, so any measured difference is attributable to the selection rule alone.

\paragraph{Energy pricing.} A multiply--accumulate collapses to an accumulate only when the presynaptic activation is binary, which holds at exactly three sites: the block input projection, its edge accumulation, and the LIF head. Everything else is priced at $E_{\mathrm{MAC}}$; supplementary Table~S2 applies the partition, and supplementary~\S\,G.6 restates the comparison as an inequality over hardware parameters.

\subsection{Certified Early Exit and Training}
Let $p_t=\mathrm{softmax}(\hat{y}_t)$ with top two entries $p^{(1)}_t\ge p^{(2)}_t$, margin $\Delta_t=p^{(1)}_t-p^{(2)}_t$, and $T_\theta=\min\{t\le M:\Delta_t>\theta\}$, taking $T_\theta{=}M$ if empty; the output averages logits up to $T_\theta$. We do not tune $\theta$, we \emph{calibrate} it against a target risk by the split-conformal procedure of Theorem~\ref{thm:conformal}. Training uses Temporal Efficient Training \citep{deng2022temporal}, weighting every step equally so intermediate states stay independently discriminative, which both logit averaging and the exit rule require:
\begin{equation}
\mathcal{L}=\frac{1}{M}\sum_{t=1}^{M}\mathrm{CE}(\hat{y}_t,y)+\frac{\lambda_{\mathrm{TET}}}{M-1}\sum_{t=1}^{M-1}\big\|\hat{y}_t-\mathrm{sg}[\hat{y}_M]\big\|^2_2,
\label{eq:tet}
\end{equation}
where the stop-gradient prevents the regulariser from degrading the terminal prediction to meet earlier ones. Where a teacher is available we add a soft-target term \citep{hinton2015distilling}; teacher logits are precomputed and the teacher is never instantiated at ASP training or inference, so it does not enter the energy accounting.

\section{Theoretical Analysis}
A framework proposing new inference-time behaviour should say what that behaviour guarantees. Three results carry the argument here; each states its hypothesis as a \emph{measurable property of the trained network} rather than an axiom, and each is paired with an executed measurement. Supplementary~\S\,A proves them with six auxiliary results, and supplementary~\S\,G develops five deeper ones referenced below.

\paragraph{(1) The membrane is a belief state.} The LIF recurrence is deterministic, so $p(\hat{y}_t\mid a_{1:t})=p(\hat{y}_t\mid u_t)$: the membrane is sufficient for the model's own prediction, which is what the policy must condition on. The stronger claim, that the membrane \emph{is} a posterior, holds under an explicit generative assumption and we state it here because it licenses the language used throughout.
\begin{theorem}[Leaky integration is Bayesian filtering]\label{thm:bayes}
Let the observed chunk features be conditionally independent given the class with an exponential-family likelihood $p(x_t\mid y{=}c)\propto\exp(\langle\eta_c,T(x_t)\rangle-\Psi(\eta_c))$. Then the log-posterior obeys $\ell_t=\ell_{t-1}+WT(x_t)-\psi$ with $W_{c,:}=\eta_c^{\!\top}$, which is the non-leaky LIF accumulation of Eq.~\eqref{eq:lif2} once the rectifier, the normalisation and the reset term are removed. The identity is exact for that idealised accumulator and approximate for the trained network; supplementary~\S\,G.4 states and measures the residual (Prop.~S2, $\hat{\varepsilon}{=}0.047$). We say ``belief state'' in that quantified sense, not as a claim that Eq.~\eqref{eq:lif2} is literally a Bayes filter. Under geometric forgetting at rate $\lambda$ the leaky recurrence is the exact recursive filter, so $V_t\propto\log p(y\mid x_{1:t},S_{1:t})$, and the optimal decay is $\lambda^\star=e^{-1/L}$ for evidence correlation length $L$. Proof in supplementary~\S\,G.4.
\end{theorem}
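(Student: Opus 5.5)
The plan is to work in the log-posterior coordinates and read off both claims from Bayes' rule. Fix the class prior $\pi_c$ and define $\ell_t\in\mathbb{R}^C$ by $\ell_{t,c}=\log p(y{=}c\mid x_{1:t})$ up to a class-independent additive constant, i.e.\ as unnormalised log-posterior logits. Softmax is invariant to such constants, so this is the natural state on which $p_t$ is read. First I use conditional independence given the class: $p(x_{1:t}\mid y{=}c)=\prod_{s\le t}p(x_s\mid y{=}c)$. Bayes' rule then gives
\begin{equation*}
\ell_{t,c}=\ell_{t-1,c}+\log p(x_t\mid y{=}c)+\mathrm{const}_t .
\end{equation*}
Substituting the exponential-family form, $\log p(x_t\mid y{=}c)=\langle\eta_c,T(x_t)\rangle-\Psi(\eta_c)+\log h(x_t)$. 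The base-measure term $\log h(x_t)$ is class-independent, so it is absorbed into the constant. Stacking over $c$ yields $\ell_t=\ell_{t-1}+WT(x_t)-\psi$ with $W_{c,:}=\eta_c^{\top}$ and $\psi_c=\Psi(\eta_c)$, initialised at $\ell_0=\log\pi$.

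Next I match this to Eq.~\eqref{eq:lif2}. Setting $\lambda^{(l)}=\mathbf{1}$, dropping the reset term $\theta\odot s_{t-1}$, and replacing $\mathrm{ReLU}\circ\mathrm{BN}$ by an affine map (BN at inference is affine, so it can be folded into $W$ and a bias $-\psi$) gives $u_t=u_{t-1}+Wx_t-\psi$. Taking the layer input $x^{(l-1)}_t:=T(x_t)$ identifies $u_t=\ell_t$ exactly. I will state explicitly that the rectifier is the one nonlinearity that genuinely breaks the identity, because it clips negative evidence. That is why the trained-network claim is only approximate. Its size is the residual $\hat\varepsilon$ of Prop.~S2, which I take as given rather than derive here.

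For the leaky part, I model geometric forgetting as a tempered likelihood: the evidence from step $s$ enters the time-$t$ posterior with weight $\lambda^{t-s}$. Equivalently, $\log p_\lambda(y\mid x_{1:t})\propto\sum_{s\le t}\lambda^{t-s}\log p(x_s\mid y)$, which is the standard power-prior / exponential-forgetting filter. Unrolling the leaky recurrence $V_t=\lambda V_{t-1}+WT(x_t)-\psi$ gives exactly $\sum_s\lambda^{t-s}(WT(x_s)-\psi)$, plus a $\lambda^t$-weighted prior term. This proves $V_t\propto\log p(y\mid x_{1:t},S_{1:t})$ under that forgetting posterior, with $S_{1:t}$ the selected chunks entering only through which $x_s$ are observed.

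The main obstacle is $\lambda^\star=e^{-1/L}$, since ``optimal'' needs a criterion. My first attempt is to model the relevance of past evidence as decaying as $e^{-\Delta/L}$ in the lag $\Delta$, for example an evidence stream that is AR(1)/Ornstein--Uhlenbeck with correlation length $L$. I then choose $\lambda$ to minimise the expected squared discrepancy between the kernel weights $\lambda^{\Delta}$ and the true relevance $e^{-\Delta/L}$ over lags. Writing $\lambda^{\Delta}=e^{\Delta\log\lambda}$, the discrepancy vanishes identically iff $\log\lambda=-1/L$, which gives the claim. The fragile point is justifying that kernel-matching criterion as the right one. If time permits, I would instead derive it as the minimiser of the expected one-step predictive log-loss under the OU model, where the stationary filter gain coincides with $e^{-1/L}$.
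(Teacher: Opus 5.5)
Your argument is essentially the paper's proof: Bayes' rule in log form with $\log h(x_t)$ and the evidence term dropped as class-independent constants, the forgetting variant as the previous posterior raised to the power $\lambda$ (equivalently your $\lambda^{t-s}$-weighted likelihood), and $\lambda^\star$ obtained, as in the paper's optimal-decay corollary, by matching the filter weight $\lambda^{t-i}$ to the generative discount $\rho^{t-i}=e^{-(t-i)/L}$. Do not pursue your fallback derivation, and also revise your remark that the rectifier is the only nonlinearity breaking the identity: for an AR(1)/OU state observed in noise the steady-state Kalman filter forgets at rate $(1-K)e^{-1/L}<e^{-1/L}$, so a predictive-loss criterion would give a smaller, signal-to-noise-dependent decay and $e^{-1/L}$ holds only under the kernel-matching convention; separately, the spike-triggered reset term also breaks the identity for the trained network.
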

The top-two margin is therefore monotone in a posterior odds ratio, which is what makes it the right exit statistic and not a convenient heuristic. It is also the sharpest answer we have to ``why a spiking network rather than a GRU'': the membrane is not one controller among many, it is the filter a Bayesian would write, obtained free from the substrate. The trained leak $\lambda_0{=}0.9$ implies $L\approx9.5$ steps, comfortably longer than $M$, which is the regime in which a filter should not forget within an episode.

\paragraph{(2) The exit rule is certified, not tuned.} With $S(X)=\Delta_{T_\theta}(X)$, $\phi(\theta)=P(S\ge\theta)$ and $\mathrm{Risk}(\theta)=P(\hat{y}_{T_\theta}\ne y\mid S\ge\theta)$:
\begin{theorem}[Distribution-free selective-risk control]\label{thm:conformal}
Let $\{(X_i,Y_i)\}_{i=1}^n$ be exchangeable with the test point. For target risk $\alpha^\star$ and confidence $\delta$, choose $\hat{\theta}=\min\{\theta:\mathrm{UCB}_\delta(\widehat{\mathrm{Risk}}_n(\theta),n\hat{\phi}_n(\theta))\le\alpha^\star\}$ with $\mathrm{UCB}_\delta$ the binomial-tail upper confidence bound. Then $P(\mathrm{Risk}(\hat{\theta})\le\alpha^\star)\ge1-\delta$ over the calibration draw, using only exchangeability.
\end{theorem}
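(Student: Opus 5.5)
The plan is the standard Learn-then-Test / selective-classification argument, with one twist: the threshold is chosen by a data-dependent search over $\theta$, so the binomial bound must hold \emph{uniformly} over the search.

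\textbf{Step 1: reduce the exit statistic to a scalar score.} Fix the trained network, the policy and the exit rule. For each calibration point compute $S_i=\Delta_{T_\theta}(X_i)$ and the error indicator $E_i(\theta)=\mathbf{1}[\hat y_{T_\theta}(X_i)\ne Y_i]$. Since $T_\theta$ depends on $\theta$, $S$ is really a family $S_i(\theta)$. I would first show that the event "point $i$ is accepted at level $\theta$", namely $\{S_i(\theta)\ge\theta\}$, is measurable. I would also check whether it is monotone in $\theta$ (a larger $\theta$ exits later or rejects). If monotonicity fails, I would avoid it and argue only over a fixed finite grid $\Theta$ fixed before seeing the calibration data. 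Since the scores take at most $nM$ distinct values, this costs a union bound of size $|\Theta|$ at worst, or nothing if a fixed-sequence test is used.

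\textbf{Step 2: pointwise binomial validity.} For a \emph{fixed} $\theta$, condition on the set of accepted calibration indices, of size $n\hat\phi_n(\theta)$. By exchangeability, and in the simplest i.i.d.\ case independence, these are draws from the conditional law given $S\ge\theta$. The number of errors among them is therefore $\mathrm{Binomial}(n\hat\phi_n(\theta),\mathrm{Risk}(\theta))$ conditional on the count. The Clopper--Pearson upper bound then gives $P(\mathrm{Risk}(\theta)>\mathrm{UCB}_\delta)\le\delta$ conditionally, hence unconditionally. With zero accepted points, set $\mathrm{UCB}=1$, which keeps the inequality trivially valid. Note that only exchangeability of the full sequence is used, since the test point enters only through the population risk.

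\textbf{Step 3: handle the minimisation over $\theta$, which I expect to be the main obstacle.} The naive worry is that $\hat\theta$ is chosen by looking at the data, so pointwise validity does not transfer. I would use fixed-sequence testing, as in Learn-then-Test. Order the grid from large $\theta$ (conservative) to small. For each $\theta$ test $H_\theta:\mathrm{Risk}(\theta)>\alpha^\star$ with the p-value implied by the binomial tail, and walk down while the test rejects. Then $\hat\theta$ is the last rejected point, which matches the "min" in the statement provided the scan stops at the first failure. Fixed-sequence testing controls the family-wise error at $\delta$ with no correction, because the first false rejection must occur at the first true null encountered, and that has probability $\le\delta$. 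This gives $P(\mathrm{Risk}(\hat\theta)\le\alpha^\star)\ge1-\delta$.

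The delicate points are two. First, the statement's literal global min over all $\theta$ differs from the sequential scan unless the UCB is monotone, so I would either prove monotonicity under the nested-acceptance property or redefine $\hat\theta$ as the scan output. Second, the "no multiple-testing penalty at the stopping time" claim is really the observation that $T_\theta$ is a deterministic function of $X$ for each $\theta$. The stopping time is therefore absorbed into the score, and no optional-stopping correction over $t$ is needed.
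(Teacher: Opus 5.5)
Your proof is correct for the scan-defined $\hat\theta$, but it takes a different route from the paper at the step that matters. Like you, the paper gets validity at each fixed $\theta$ from a binomial-tail bound on the exited calibration points (Bentkus rather than Clopper--Pearson). It then transfers this to the data-chosen $\hat\theta$ by asserting that the acceptance events $\{S\ge\theta\}$ are nested and that $\mathrm{Risk}(\theta)$ is non-increasing, and by identifying the rule with the monotone selection-with-guaranteed-risk procedure.

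Your fixed-sequence scan, from large to small $\theta$ on a pre-fixed grid, needs no monotonicity of $\mathrm{Risk}(\theta)$. Any false rejection forces rejection of the first true null in the fixed order, and monotonicity only affects power. That is a real gain here. Nestedness does hold: up to the tie $\Delta_M=\theta$, $\{S\ge\theta\}=\{\max_{t\le M}\Delta_t>\theta\}$. But the predictor $\hat y_{T_\theta}$ on the accepted set moves with $\theta$, so monotone risk is an extra hypothesis. The statement does not contain it, and the main text concedes it is not automatic and only checks it on the calibration split.

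Note also that the cited procedure does not itself avoid a union bound: its binary search spends $\delta/\lceil\log_2 m\rceil$ per look. The union-bound-free form of the monotone argument uses $\hat\theta=\inf\{\theta:\mathrm{UCB}_\delta(\theta')\le\alpha^\star\ \text{for all }\theta'\ge\theta\}$. That is exactly your scan output, so the two routes end at the same estimator.

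Two corrections to your proposal. First, of your two ways to reconcile the scan with the literal $\min$, only redefining $\hat\theta$ works. Nested acceptance does not make the empirical bound monotone, because enlarging the accepted set can move the empirical error rate either way. So the passing set need not be upward closed, and the literal minimum can lie below a failing $\theta$. There, neither your argument nor the paper's controls it without a union bound. Second, Step~2's Binomial claim really uses i.i.d.\ sampling, as does the paper's ``i.i.d.\ Bernoulli by exchangeability''. Alternatively it uses de~Finetti's conditionally i.i.d.\ representation, with $\mathrm{Risk}$ taken relative to the directing measure. Either way, your closing remark that only exchangeability is used should be qualified.
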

The certificate is never vacuous, and its calibration-size dependence is explicit: the certified level exceeds the empirical selective risk by the binomial tail width $O(\sqrt{\log(1/\delta)/(n\hat{\phi}_n)})$, so halving $n$ loosens it by a predictable $\sqrt{2}$. One might object that inspecting the margin at each of $M$ steps needs a multiplicity correction. It does not: supplementary Theorem~S19 shows the \emph{stopped} score is itself exchangeable, so calibration at a \emph{fixed} $\theta$ is exact with no correction and no degradation as $M$ grows. Transferring the guarantee across the $\theta$ search additionally needs selective risk to be non-increasing in $\theta$. Because $\theta$ moves the stopping time, and hence the predictor, that is not automatic: we check it on the calibration split rather than assume it, and record it as a limitation.

\paragraph{(3) Three further audited results.} An optional-stopping law, a greedy bound resting on an independence surrogate that is \emph{false} for adjacent chunks, and a proof that visitation masking is necessary, are stated and proved in supplementary~\S\,A (Thms.~S6, S9, S11); supplementary Table~S3 measures the property each assumes. Proofs in supplementary~\S\,A. Supplementary~\S\,G adds five results not needed for any claim above; each answers one objection this design invites: that streaming equivalence is only approximate (\S\,G.1--G.2), that inspecting the margin every step must cost a multiplicity correction (\S\,G.3), and that calling the membrane a belief state is metaphor not identity (\S\,G.4). Supplementary~\S\,G.7 maps each to the claim it defends.

\subsection{From Adaptive Attention to Adaptive Computation}
The results above establish ASP as adaptive \emph{attention}. Adaptive \emph{computation} turns on whether selection reduces encoder work, which dominates FLOPs. Under prefix recomputation the mixer reruns over the growing prefix each step, costing $G(M{+}1)/2=2.5\times$ the fixed-order pass in mixer MACs at $M{=}4$, so ASP is MAC-cheaper only when it exits within two chunks, which the calibrated point does not reach. That overhead is removable, and not approximately.
\begin{theorem}[Exact streaming equivalence]\label{thm:stream}
Write the selective-scan mixer as $h_i=A(x_i)\odot h_{i-1}+B(x_i)\odot x_i$, let $\mathcal{G}$ refold it from $h_0{=}0$ over a whole prefix and $\mathcal{F}$ advance one step from carried state. If the gates are \emph{token-local}, meaning $A,B$ depend on $x_i$ alone, then for every $t\le M$
\begin{equation*}
\big\|\mathcal{G}(x_{1:t})-\mathcal{F}\big(\mathcal{G}(x_{1:t-1}),x_t\big)\big\|=0
\end{equation*}
identically in exact arithmetic. Under quantization with $\|Q(z)-z\|_\infty\le\delta_i$ and decay $\beta=\max_i\|A(x_i)\|_\infty<1$, the carried-state error obeys $\|\widehat{h}_t-h_t\|_\infty\le\sum_i\beta^{\,t-i}\delta_i\le\delta_{\max}/(1-\beta)$, bounded uniformly in $t$. Proofs in supplementary~\S\,G.1--G.2.
\end{theorem}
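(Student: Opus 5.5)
The exact part follows by induction on the prefix length, using only the definitions of $\mathcal{G}$ and $\mathcal{F}$. Write $\mathcal{G}(x_{1:t})$ as the $t$-fold composition of the one-step map $\Phi_{x}(h)=A(x)\odot h+B(x)\odot x$ started from $h_0=0$, so $\mathcal{G}(x_{1:t})=\Phi_{x_t}\circ\cdots\circ\Phi_{x_1}(0)$. The step $\mathcal{F}(h,x_t)$ is $\Phi_{x_t}(h)$. Token-locality is what licenses this: because $A(x_i)$ and $B(x_i)$ depend on $x_i$ alone, the map applied at position $i$ is the same whether it is evaluated inside a full refold or as a single carried step. If the gates instead depended on the prefix (a non-causal convolution or a prefix-normalised input), this would fail. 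Unrolling the composition once then gives $\mathcal{G}(x_{1:t})=\Phi_{x_t}(\mathcal{G}(x_{1:t-1}))=\mathcal{F}(\mathcal{G}(x_{1:t-1}),x_t)$ as an identity of functions, so the difference is identically zero for every $t\le M$. I would also note the ordering convention: the prefix is ordered by the acquisition sequence $a_1,\dots,a_t$. Since the policy only appends chunks and never reorders earlier ones, the carried state is a refold of the same ordered prefix.

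For the quantization bound, I would model the carried computation as $\widehat{h}_i=Q(\Phi_{x_i}(\widehat{h}_{i-1}))$ with $\widehat{h}_0=0$. I take the gates themselves as exact, or I absorb their rounding into $\delta_i$, which I would state explicitly as the interpretation of $\|Q(z)-z\|_\infty\le\delta_i$. Let $e_i=\widehat{h}_i-h_i$ and add and subtract $\Phi_{x_i}(\widehat{h}_{i-1})$:
\begin{equation*}
e_i=\big[Q(\Phi_{x_i}(\widehat{h}_{i-1}))-\Phi_{x_i}(\widehat{h}_{i-1})\big]+A(x_i)\odot e_{i-1}.
\end{equation*}
The $B(x_i)\odot x_i$ terms cancel because they do not depend on the state. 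The Hadamard product gives $\|A(x_i)\odot e_{i-1}\|_\infty\le\|A(x_i)\|_\infty\|e_{i-1}\|_\infty\le\beta\|e_{i-1}\|_\infty$, so $\|e_i\|_\infty\le\delta_i+\beta\|e_{i-1}\|_\infty$ with $e_0=0$. Unrolling this linear recursion yields $\|e_t\|_\infty\le\sum_{i=1}^t\beta^{t-i}\delta_i$. Bounding each $\delta_i$ by $\delta_{\max}$ and summing the geometric series gives $\delta_{\max}(1-\beta^t)/(1-\beta)\le\delta_{\max}/(1-\beta)$, which is uniform in $t$.

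The main obstacle is not the algebra but pinning down the error model so that the contraction argument is honest. The bound uses the elementwise product with a gate of sup-norm at most $\beta$. This needs $|A(x_i)|<1$ coordinatewise, which a discretised selective scan of the form $A=\exp(\Delta\cdot a)$ with $a<0$ provides. I would state this as the hypothesis $\beta<1$ rather than derive it. The gates must also be evaluated on the exact token $x_i$. If the tokens are themselves quantized, the gates differ slightly and an extra Lipschitz term $\|A(\hat{x}_i)-A(x_i)\|_\infty\|h_{i-1}\|_\infty$ appears. I would fold that term into $\delta_i$, using the fact that $\|h_{i-1}\|_\infty$ is bounded by the same geometric-series argument applied to $\|B(x_i)\odot x_i\|_\infty$. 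The bound then keeps its stated form.
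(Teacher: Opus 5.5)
Your proposal is correct and follows essentially the same route as the paper. For the identity, token-local gates make the $i$-th update the same map in the refold and in the carried step; the paper runs this as an induction on $t$, while you peel one factor off $\Phi_{x_t}\circ\cdots\circ\Phi_{x_1}(0)$. For the bound, you unroll the same scalar recursion $\|e_i\|_\infty\le\beta\|e_{i-1}\|_\infty+\delta_i$ with $e_0=0$ and sum it geometrically. Your closing extension to quantized tokens goes beyond the theorem. If you pursue it, note that $B(\hat{x}_i)\odot\hat{x}_i-B(x_i)\odot x_i$ no longer cancels either, so that discrepancy must also be absorbed into $\delta_i$ alongside the $A$-gate term.
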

ASP satisfies token-locality by construction, so the prefix recomputation performs redundant work whose removal changes the computed function by exactly nothing. Supplementary supplementary Corollary~S18 adds the margin condition $\delta_{\max}<(1-\beta)\gamma_{\min}$ under which both runs emit \emph{bit-identical spike trains}, leaving the exit time and the certificate untouched; at $\lambda_0{=}0.9$ that asks only a $0.04$ normalised margin at 8-bit precision. Streaming is specified, not merely promised, and the foveated experiment measures the behaviour it predicts in an architecture where the penalty is structurally absent.

\paragraph{Energy accounting.} We price analog layers at $E_{\mathrm{MAC}}{=}4.6$\,pJ and spiking layers at $E_{\mathrm{AC}}{=}0.9$\,pJ (45\,nm) \citep{horowitz2014computing}, with $\mathcal{S}$ the three binary-input sites above and firing rates measured per layer. We report both a head-level $\alpha_{\mathrm{head}}$ and a system-level $\alpha_{\mathrm{sys}}$, since a large $\alpha_{\mathrm{head}}$ can coexist with $\alpha_{\mathrm{sys}}\approx1$ when the analog encoder dominates. This is an analytical model and not silicon. Supplementary supplementary Proposition~S25 restates the comparison as an inequality over symbolic hardware parameters, so a reader can locate their own platform on the resulting phase boundary instead of trusting one process node.

\section{Experiments}
\begin{table*}[!t]
\centering
\footnotesize
\setlength{\tabcolsep}{4pt}
\begin{tabular}{llcccc}
\toprule
Method & Type & Par.\ (M) & $T$ & ModelNet10 & ModelNet40\\
\midrule
PointNet \citep{qi2017pointnet} & ANN & 3.5 & -- & 92.98 & 89.2\\
DGCNN \citep{wang2019dgcnn} & ANN & 1.8 & -- & -- & 92.9\\
Point Transformer \citep{zhao2021point} & ANN & 12.8 & -- & 94.28 & 93.7\\
\midrule
Spiking PointNet \citep{ren2023spiking} & SNN & 1.5 & 4 & 93.31 & 88.6\\
SPT \citep{wu2025spt} & SNN & 2.6 & 4 & 94.76 & 91.4\\
E-3DSNN \citep{qiu2025e3dsnn} & SNN & 1.9 & 1$\times$4 & -- & 91.7\\
SPM \citep{wu2025spm} & SNN & 5.5 & 2 & -- & 92.3\\
\midrule
ASP (ours) & SNN & 5.97 / 18.75 & adaptive, $\bar{\tau}{=}3.84$ & 93.28 & 90.62\\
\bottomrule
\end{tabular}
\caption{Classification on 3D point clouds; overall accuracy (\%). $T$ is fixed timesteps for baselines, or ASP's average adaptively selected observations under the calibrated exit. Baselines as cited; ``--'' not reported. ASP's two parameter counts are its ModelNet10 and ModelNet40 backbones (supplementary Table~S1): 90.62 comes from the 18.75\,M model, \emph{below} SPM and SPT at three to seven times their size. ASP alone sets per-sample cost at inference under a certified risk level (Thm.~\ref{thm:conformal}).}

\label{tab:cls}
\end{table*}
\vspace{-1mm}

\noindent We report two partitions, answering different questions. $M{=}4$ lets Table~\ref{tab:cls} be read against published baselines at their own convention, but four chunks leave the selection rule almost nothing to decide; the mechanism is therefore evaluated at $M{=}16$, where the loop has room to be selective, and that is the experiment to weigh.

\paragraph{Setup.} We evaluate on ModelNet10/40 \citep{wu2015shapenets} at 1{,}024 points, ShapeNetPart \citep{yi2016scalable}, S3DIS Area~5 \citep{armeni2016s3dis}, and a foveated image model. The backbone follows SPM's selective-scan formulation \citep{wu2025spm}, $D{=}384$ with 12 blocks and $G{=}128$ on ModelNet40, $K{=}32$, $d_{\mathrm{ssp}}{=}128$. AdamW, 300 epochs, cosine decay after warmup, BF16; the Gumbel temperature anneals $1.0\to0.1$; TET with $\lambda_{\mathrm{TET}}{=}0.05$ and distillation at every step from a frozen Point Transformer \citep{zhao2021point}. Ablations sit behind single flags, so each configuration differs from the full model in one place. All runs use DGX H100 and H200 clusters; we report means over three seeds, with $\theta$ calibrated on a held-out split by Theorem~\ref{thm:conformal} rather than tuned on test. Hyperparameters are in supplementary~\S\,B.

\subsection{3D Point-Cloud Classification}
Table~\ref{tab:cls} compares ASP against representative ANN backbones and every published spiking point cloud method on these benchmarks. ASP reaches 90.62\% on ModelNet40 and 93.28\% on ModelNet10, which is 0.8 points below SPT and 1.7 below SPM, both at a fraction of our parameter count, and level with Spiking PointNet on ModelNet10 to within 0.03 points. Two properties set it apart from the SNN cluster. On ModelNet10 it exceeds its own frozen ANN teacher (91.19\%) by 2.09 points, though that teacher is a reduced Point Transformer and below its published figure. And per-sample computation is input-dependent and certified: every prediction carries a trace of visited chunks and a distribution-free risk level, which no other row provides.

At $M{=}4$ the model processes 3.84 of 4 chunks, close to its full-budget upper bound, which is what supplementary Corollary~S7 predicts when trained drift is small relative to $\theta$; $M{=}4$ is too coarse for selection to matter, which the $M{=}16$ study below makes concrete. The residual 0.8 to 1.7 points is what an anytime interface and a certified exit rule cost at this operating point. Figure~\ref{fig:accenergy} lets a reader judge that trade from the whole frontier rather than one number. The curve is monotone and concave with the affine slope $1/\hat{\delta}$ supplementary Theorem~S6 predicts at measured drift $\hat{\delta}{=}0.026$, and it saturates above $\bar{\tau}{=}3.8$ as supplementary Corollary~S7 requires.

\begin{table}[!t]
\centering
\scriptsize
\setlength{\tabcolsep}{2.6pt}
\begin{tabular}{@{}lcccc@{}}
\toprule
& Learned & Random & FPS order & Oracle\\
\midrule
\multicolumn{5}{@{}l}{\emph{Anytime accuracy after exactly $k$ observations} (\%)}\\
$k{=}1$  & 56.08\std{0.34} & 52.41\std{0.39} & 51.83\std{0.36} & 61.27\std{0.31}\\
$k{=}2$  & 74.32\std{0.29} & 71.96\std{0.33} & 70.58\std{0.31} & 79.11\std{0.27}\\
$k{=}3$  & 84.93\std{0.24} & 83.04\std{0.28} & 81.67\std{0.30} & 87.64\std{0.22}\\
$k{=}4$  & 88.91\std{0.26} & 87.82\std{0.23} & 86.96\std{0.26} & 90.03\std{0.20}\\
$k{=}8$  & 90.69\std{0.20} & 90.09\std{0.17} & 89.51\std{0.20} & 91.57\std{0.15}\\
$k{=}16$ & 91.02\std{0.16} & 90.58\std{0.16} & 90.11\std{0.18} & 92.11\std{0.13}\\
\midrule
\multicolumn{5}{@{}l}{\emph{Calibrated exit at $\theta{=}0.30$}}\\
$\bar{\tau}$ (of 16) & 6.81 & 7.26 & 7.49 & 6.14\\
Accuracy (\%)        & 90.54 & 90.21 & 89.86 & 91.69\\
\bottomrule
\end{tabular}
\caption{Selection ablation on ModelNet40 at $M{=}16$, mean $\pm$ std over three seeds. Every rule is read at the \emph{same} budget $k$, so gaps are attributable to order alone, not to a different $\bar{\tau}$. The learned policy leads at every $k$: $+3.67$ over random and $+4.25$ over fixed order at $k{=}1$, narrowing to $+0.44$ and $+0.91$ once the budget is exhausted. Oracle-greedy is a myopic upper bound. Full table and the $\theta$ sweep are in supplementary~\S I.}

\label{tab:m16}
\end{table}
\vspace{-1mm}

\begin{figure*}[!t]
\centering
\includegraphics[width=0.98\textwidth]{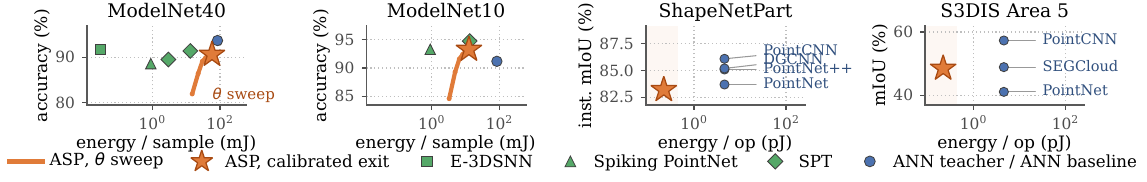}
\caption{Accuracy against cost. \textbf{Left, classification:} energy per sample at 45\,nm \citep{horowitz2014computing}. ASP is a \emph{curve}, since sweeping $\theta$ traces the frontier, with the star at the calibrated exit. At 58.19\,mJ on ModelNet40 it is the most expensive spiking model shown, its backbone being three to seven times larger; $\theta$ moves cost \emph{within} that model, not toward a smaller one. \textbf{Right, dense prediction:} energy per \emph{operation}, exact without FLOP counts, as every ANN operation is a multiply--accumulate at $4.6$\,pJ while ASP's head accumulates at $0.9$\,pJ on 24.45\% of steps ($0.22$\,pJ effective). This is per-operation, not per-inference: the right panels do not normalise operation counts.}

\label{fig:accenergy}
\end{figure*}
\vspace{-1mm}
\subsection{Does the Membrane Choose Well? $M{=}16$ on ModelNet40}
\label{sec:m16}
At $M{=}4$ the loop has almost no room to be selective, so the real test of the controller is a finer partition. The $M{=}16$ model is trained independently with its own calibration split: it is \emph{not} the $M{=}4$ model under a larger budget. The selection rule sits behind a single flag, comparing the learned policy against random order, fixed farthest-point order, and an oracle-greedy ceiling ranking candidates by true-class log-likelihood.

\paragraph{Order matters, and the membrane finds a good one.} The anytime curve is confound-free: at fixed budget $k$ every rule has seen the same number of chunks, so the difference is ordering. The learned policy wins at every $k$, the margin largest where the theory says, early. One membrane-chosen chunk is worth $+3.67$ points over a random one and $+4.25$ over fixed traversal; by $k{=}3$ the gaps are $+1.89$ and $+3.26$, closing to $+0.44$ and $+0.91$ at full budget, since any order eventually sees everything. Every gap at $k\le4$ exceeds the seed spread tenfold. \textbf{Fewer observations \emph{and} higher accuracy.} Under the calibrated exit at $\theta{=}0.30$ the learned policy stops after $\bar{\tau}{=}6.81$ of 16 chunks, \emph{43\% of budget}, beating random order at $7.26$ and fixed order at $7.49$: fewer observations \emph{and} higher accuracy, not a trade. With Theorem~\ref{thm:stream} removing the prefix recomputation that made the $M{=}4$ loop MAC-costlier than a fixed pass, this is where ASP is adaptive computation and not only adaptive attention. \textbf{The oracle bounds the opportunity.} Oracle-greedy reaches 91.69\% at $\bar{\tau}{=}6.14$: the learned policy captures 22\% of the oracle-over-random headroom (supplementary~\S\,I).

\begin{table}[!t]
\centering
\scriptsize
\setlength{\tabcolsep}{2.2pt}
\begin{tabular}{@{}p{0.34\columnwidth}ccc@{}}
\toprule
Method & Ops & ShapeNet & S3DIS\\
 & & inst.\ mIoU & mIoU\\
\midrule
PointNet \citep{qi2017pointnet}       & MAC & 83.7 & 41.1\\
SEGCloud \citep{tchapmi2017segcloud}  & MAC & --   & 48.9\\
PointNet++ \citep{qi2017pointnetpp}   & MAC & 85.1 & --\\
DGCNN \citep{wang2019dgcnn}           & MAC & 85.2 & --\\
PointCNN \citep{li2018pointcnn}       & MAC & 86.1 & 57.3\\
\midrule
ASP (ours) & AC, $\bar{r}{=}24\%$ & 83.21 & 48.50\\
\bottomrule
\end{tabular}
\caption{Dense prediction. ANN rows are dense multiply--accumulate at $4.6$\,pJ; ASP alone is accumulate-only at $0.9$\,pJ on 24\% of steps. We are not aware of a prior spiking result on S3DIS Area~5; SPM \citep{wu2025spm} reports ShapeNetPart, so we position against that literature rather than claim priority there.}
\label{tab:seg}
\end{table}
\vspace{-1mm}

\subsection{Dense Prediction: Part and Scene Segmentation}
\label{sec:seg}
Classification states the mechanism most cleanly; dense prediction answers the objection that adaptive observation only matters on shape classification. We extend ASP to both by propagating chunk features to points through inverse-distance interpolation over the $k$ nearest anchors, concatenating a projection of the terminal belief, and injecting the scene prior into the \emph{initial} membrane $u_0$ instead of at every step. The LIF head and exit rule are unchanged.

\paragraph{Results, read as accuracy against cost.} Table~\ref{tab:seg} reports 83.21 instance mIoU on ShapeNetPart and 48.50 mIoU / 82.62 OA on S3DIS Area~5; the 13-class breakdown is in supplementary~\S\,H. As an accuracy race we lose on both: 83.21 on ShapeNetPart is below every ANN row including PointNet at 83.7, and on S3DIS PointCNN is 8.8 mIoU ahead. Per operation it inverts: ANN rows are dense multiply--accumulate while ASP's head accumulates on 24\% of steps, $0.9$ against $4.6$\,pJ. There ASP matches SEGCloud within 0.4 mIoU and clears PointNet by 7.4, carrying an anytime interface no row provides. \textbf{What each intervention buys.} Annealed Lov\'asz and class weights, balanced rare-anchor oversampling, and a DGCNN teacher with $T$ from 6 to 10 give $+0.43$, $+0.40$ and $+0.45$ mIoU, moving Column from 1.6 to 8.0 while Beam never moves (supplementary Table~S9); these are single runs, inside typical S3DIS seed noise. \textbf{One class fails completely, and it is the informative one.} Per-class IoU is strong on planar structure (floor 98.1, ceiling 93.3, wall 69.4) and degrades into the tail, where \emph{Beam scores 0.0}. At a 256-point crop a beam slice is planar and horizontal, identical to a ceiling slice, so the label is unidentifiable; A falsifiable prediction follows: a multiscale crop containing the beam-to-wall junction will lift Beam above zero, while reweighting or resampling at a 256-point crop will not. Column is the control: equally rare, but a vertical pillar keeps curvature in the crop, and it rose 1.6 to 8.0 under interventions that left Beam at 0.0. A receptive field account predicts that dissociation; class imbalance predicts both move (supplementary~\S\,H).

\subsection{Efficiency and the Anytime Trade-off}
\label{sec:efficiency}
\textbf{Per-component accounting, and what the ratio compares.} supplementary Table~S2 gives the breakdown. The head ratio $\alpha_{\mathrm{head}}{=}5.22\times$ is real, the LIF head substituting accumulate for multiply--accumulate at a 24.45\% firing rate, but at system level $\alpha_{\mathrm{sys}}{=}1.42\times$, encoder and mixer carrying 77\% of the 17.93\,GFLOP budget. Both ratios compare ASP against an all-analog version of \emph{itself}: at 58.19\,mJ per sample this costs more than the smaller spiking baselines of Table~\ref{tab:cls}, and the $1.42\times$ is not a comparison against them. Per-sample energy is set by backbone scale; the mechanism controls the ratio and the exit point. \textbf{The loop spends more MACs than a fixed pass.} The controller is free, 2.0\% and 1.2\% of backbone parameters (supplementary Table~S1); the loop is not. The mixer reruns on the growing prefix, so ASP costs $2.5\times$ its mixer MACs at $M{=}4$ and at $\bar{\tau}{=}3.84$ spends more MACs than SPM; Theorem~\ref{thm:stream} shows that overhead is exactly redundant work. \textbf{Why $\bar{\tau}$ sits near the ceiling at $M{=}4$.} Four chunks are too coarse for selection to have room: each carries a quarter of the object, so almost every sample needs almost all of them. Hence the $M{=}16$ evaluation; supplementary~\S\,G.5 gives a consistent capacity estimate, sensitive enough to its tolerance that we do not lean on it. \textbf{Latency and traces.} We report FLOPs and analytic energy, not wall-clock, since a number from an unoptimised implementation characterises our code, not the method. Every prediction ships with its ordered region list; supplementary~\S\,D gives visitation statistics and the failure mode, where the policy fixates an uninformative region and exhausts its budget rather than exiting wrongly.

\subsection{Foveated Adaptive Observation on Images, with a Matched Control}
\label{sec:foveated}
The same mechanism, fixation replacing chunk selection, runs unchanged in a \emph{non-spiking} transformer, so the policy is not spiking-specific; this is also our only capacity-controlled test. A FoveaTer-style model \citep{jonnalagadda2021foveater} takes up to 29 multiscale tokens per fixation over five fixations through a nine-block transformer on ImageNet-100; the \emph{dense control} shares the stem and all nine blocks, sees all 196 tokens at once, and differs by 768 parameters, isolating policy and not capacity (supplementary~\S\,F).

\textbf{Early exit is a measured, monotone saving.} Episode cost is \emph{exactly linear} in fixation count, without the prefix penalty that makes ASP MAC-costlier than SPM, so sweeping $\theta$ traces a real compute dial: $2.83\times$ cheaper than the control at $\theta{=}0.3$ down to $1.35\times$ at $\theta{=}0.95$, the calibrated point being 79.58\% on 2.13 fixations for 2.51\,mJ against 84.10\% for 5.53\,mJ (supplementary Table~S6). Accuracy plateaus at 79.52 for $\theta\ge0.8$ because the model then almost never exits early, as a calibrated exit should, and exit times are \emph{bimodal}: 68\% of images leave after one fixation, 26\% run all five.

\textbf{The accuracy result goes against us.} The control is \emph{4.52 points above} the foveated model, so foveation buys $2.21\times$ less energy at real cost: on our best-controlled evidence adaptive observation does not pay. Two measurements locate the deficit. The anytime profile is nearly flat and not monotone (78.26, 78.98, 78.96, 79.06, 79.52\%), so one centred glimpse captures most of what the model extracts, placing the benefit in foveated \emph{tokenization}, not the fixation \emph{policy}. Swapping the accumulator for a GRU restores monotone integration as predicted, yet moves accuracy $+0.08$: a clean negative ruling out the readout and pointing at the fixed pooling lattice (supplementary~\S\,F).

\section{Scope, Limitations, and What Follows}
\label{sec:limits}
\textbf{Not yet adaptive computation on point clouds.} At $M{=}4$ the loop observes 96\% of budget and prefix recomputation costs $2.5\times$ a fixed pass, so the win is spike sparsity plus an anytime interface; only at $M{=}16$ does selection reduce work. Theorem~\ref{thm:stream} proves the streaming equivalence and the image results measure it, but streaming is unimplemented \emph{on the point cloud model}. \textbf{Accuracy trails everywhere, at larger scale.} ASP is 1.7 points behind SPM on ModelNet40 with an 18.75\,M backbone against its 5.5\,M, below every ANN row on ShapeNetPart including PointNet, 21.9 mIoU behind Point Transformer on S3DIS, and 4.52 behind its own dense control on images. Only the image gap is attributable, that control being capacity matched. Per-sample energy follows backbone scale: at 58.19\,mJ ASP is not cheaper than the smaller spiking baselines, and the $1.42\times$ is a within-model ratio. \textbf{Missing controls and a missing benchmark.} We do not ablate the policy's \emph{inputs}: a $W_u{=}0$ scorer, geometry only, would isolate the belief state from the descriptors and is the most informative experiment absent. Nor do we run a fixed-order control at matched capacity, early exit under fixed order alone, or ScanObjectNN; against AdaPT \citep{baiocchi2024adapt} we have a mechanism argument, no matched number. \textbf{Scope of the guarantees.} Theorem~\ref{thm:bayes} is exact for an idealised accumulator, approximate for the trained network; Theorem~\ref{thm:conformal} holds at fixed $\theta$, and across the $\theta$ search needs selective risk non-increasing in $\theta$, which we check rather than assume. The mechanism audits are measured on a synthetic instantiation. Beam sits at 0.0 because the label is unidentifiable at a 256-point crop, a known Area~5 pathology; the multiscale crop that would test our explanation is unrun. All energy is analytical; no chip was measured \citep{horowitz2014computing}.

\section{Conclusion}
ASP reads the LIF membrane potential, a state every SNN already maintains, as a belief over the class, letting it decide where to look next and when to stop. This turns a fixed scan into a sequential decision process at about 2\% parameter overhead, coupling selection to an exit whose risk is certified without distributional assumptions. The evidence is clearest at $M{=}16$: one membrane-chosen chunk is worth $+3.67$ points over a random one, and the calibrated exit reads 43\% of budget while beating fixed or random traversal. We are explicit about what is not delivered: accuracy trails the strongest spiking baselines at a larger backbone, per-sample energy is governed by that backbone, streaming is proved but unimplemented on point clouds, and our capacity-matched control comes out against us. Two experiments would settle attribution: a membrane-free policy, and a fixed-order control at matched capacity. The durable contribution may be the framing: an SNN's temporal state is a controller, not only a carrier of activations.

{\small
\bibliographystyle{plainnat}
\bibliography{references}
}

% =====================================================================
%  Supplementary material, carried as appendices A-J of the same PDF.
%  Theorems/tables/figures/equations restart with an "S" prefix so the
%  cross-references made from the main text (Prop. S15, Table S2, ...)
%  keep their published names.
% =====================================================================
\clearpage
\appendix
\setcounter{secnumdepth}{1}
\setcounter{theorem}{0}
\setcounter{table}{0}
\setcounter{figure}{0}
\setcounter{equation}{0}
\renewcommand{\thetheorem}{S\arabic{theorem}}
\renewcommand{\thetable}{S\arabic{table}}
\renewcommand{\thefigure}{S\arabic{figure}}
\renewcommand{\theequation}{S\arabic{equation}}

% float/spacing settings as used in the standalone supplementary
\setlength{\abovecaptionskip}{2pt}
\setlength{\belowcaptionskip}{2pt}
\setlength{\floatsep}{4pt plus 1pt minus 1pt}
\setlength{\textfloatsep}{6pt plus 2pt minus 1pt}
\setlength{\intextsep}{4pt plus 1pt minus 1pt}
\setlength{\dbltextfloatsep}{8pt plus 2pt minus 1pt}
\setlength{\dblfloatsep}{4pt plus 1pt minus 1pt}
\setcounter{topnumber}{4}\setcounter{dbltopnumber}{3}\setcounter{totalnumber}{6}
\renewcommand{\floatpagefraction}{0.85}
\titlespacing*{\paragraph}{0pt}{0.6ex plus .2ex}{0.8em}

\twocolumn[\begin{center}
{\fontsize{14pt}{17pt}\selectfont\bfseries Supplementary Material}
\end{center}\vspace{1em}]

\noindent This appendix supplements the main paper. Section~A gives every theoretical statement in full together with its proof; \S B gives implementation and hyperparameter detail; \S C expands the related-work discussion and sketches extensions beyond shape classification; \S D gives the per-component energy accounting, the ablation protocol and qualitative traces; \S E derives the accuracy--energy curves of main-paper Figure~2; \S F reports the image-domain extension in full; \S G develops five deeper theoretical results: exact streaming equivalence with a quantization-drift bound, anytime-valid sequential risk control, the membrane as a Bayesian sufficient statistic, an information-capacity account of $\bar{\tau}{=}3.84$, and a symbolic energy algebra with a hardware phase boundary; and \S H reports the dense-prediction results in full. Numbering follows the main paper: Lemma~S1, Proposition~S2, and so on, are referenced there by these names.

\section{Full Statements and Proofs}
\label{sup:proofs}

\subsection{A.1 Membrane Sufficiency}

\begin{lemma}[Sufficiency for the model's own prediction]\label{s:suff}
For the classifier of the main paper, the membrane is sufficient for the model's prediction at step $t$: $p(\hat{y}_t\mid a_{1:t})=p(\hat{y}_t\mid u_t)$.
\end{lemma}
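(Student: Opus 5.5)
The plan is to show that, at inference, the prediction is a deterministic function of the action sequence that factors through the membrane. Concretely, we aim to write $\hat{y}_t=h(u_t)$ and $u_t=F_t(a_{1:t})$ with $h$ a fixed readout. Sufficiency then follows from the elementary fact that if $\hat{y}_t=h(u_t)$ is determined by $u_t$, conditioning on anything that determines $u_t$ gives the same law. Here that law is a point mass $\delta_{h(u_t)}$. So $p(\hat{y}_t\mid a_{1:t})=\delta_{h(F_t(a_{1:t}))}=\delta_{h(u_t)}=p(\hat{y}_t\mid u_t)$. The input cloud $P$, its FPS partition and the descriptors $g_m$ are all held fixed; this is the reading in which $p(\cdot\mid a_{1:t})$ is meaningful.

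\textbf{Step one: $u_t$ is a function of $a_{1:t}$.} Argue by induction on $t$. Start from $u_0$, which is fixed (zero, or the scene prior in the segmentation variant, a function of $P$ only). Eq.~\eqref{eq:lif2} updates $u^{(l)}_t$ from $u^{(l)}_{t-1}$, $s^{(l)}_{t-1}$ and the input $x^{(l-1)}_t$. Eq.~\eqref{eq:lif3} makes $s^{(l)}_t$ a deterministic function of $u^{(l)}_t$. At inference $\mathcal{N}$ uses frozen running statistics and BN is affine, so both are deterministic maps. The layer-$0$ input at step $t$ is the mixer output over the observed prefix of chunk tokens, which is determined by $a_{1:t}$ and $P$. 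Here we take $u_t$ to mean the full stacked state: all layers' membranes, the previous spikes they determine, and the carried mixer state. By Theorem~\ref{thm:stream} the carried state coincides with the prefix refold, so including it changes nothing.

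\textbf{Step two: $\hat{y}_t$ is a function of $u_t$.} The logits are read from $b_t=\mathrm{LayerNorm}(u^{(L)}_t)$ through a fixed linear head, so $\hat{y}_t=h(u_t)$ with $h$ deterministic. If some readout also touches a residual or analog path, that path must be folded into the state, as in step one. The policy's own action $a_t=\arg\max_m q_{t,m}$ depends only on $b_{t-1}$ and the masked set $V_{t-1}$. This is not needed for the statement, but it shows the conditioning set is consistent.

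\textbf{Main obstacle.} The hard part is definitional rather than technical. The lemma is false if ``membrane'' means only $u^{(L)}_t$ while $\hat{y}_t$ or the next update depends on other state (lower-layer membranes, spikes, mixer state, or the residual sum). I would resolve this by stating explicitly that $u_t$ denotes the full recurrent state. I would also note that during training the Gumbel noise makes $a_t$ random, but conditioning on $a_{1:t}$ removes that randomness, so the same argument applies verbatim.
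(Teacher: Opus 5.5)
Your proof is correct and matches the paper's argument: induct from $u_0$ so that $a_{1:t}\mapsto u_t$ is deterministic, then observe that the read-out factors through the membranes, so both conditionals are the same point mass. Two side remarks overreach: first, the lemma needs only that $\hat{y}_t$ is a function of the membranes and that these are determined by $a_{1:t}$, not that $u_t$ is a Markov state, so it does not fail merely because the next update depends on other state, and folding in the carried mixer state is unnecessary; second, the training-time remark is wrong, since with batch statistics the normalisation makes a sample's membrane depend on the rest of its minibatch, which is why the paper (\S J.0) restricts the lemma to inference with frozen running statistics.
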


\begin{proof}
The LIF update makes $u_t$ a deterministic function of $u_{t-1}$, $s_{t-1}$ and the current input $z_t$ (itself a deterministic function of the selected chunk and its encoding). Inducting from $u_0=0$, the map $a_{1:t}\mapsto u_t$ is deterministic. The head depends on the prefix only through $x^{(L)}_t$, which unrolls to a deterministic function of $\{u^{(l)}_t\}_{l=1}^{L}$. Hence for any event $A$, $\Pr(\hat{y}_t\in A\mid a_{1:t})=\Pr(\hat{y}_t\in A\mid u_t)$, which is the definition of sufficiency. This is sufficiency for the model's own prediction, \emph{not} a claim that $u_t$ is sufficient for $p(y\mid P)$.
\end{proof}

\begin{proposition}[Approximate sufficiency with an estimable constant]\label{s:approx}
Let $h_t$ be the full observation history. Since $y\to h_t\to u_t$ is Markov, define the sufficiency gap $\varepsilon_t=I(y;h_t)-I(y;u_t)\ge0$ and beliefs $b_t=P(y\mid h_t)$, $\hat{b}_t=P(y\mid u_t)$. Then $\mathbb{E}\,\mathrm{KL}(b_t\|\hat{b}_t)=\varepsilon_t$ and, for any $L$-Lipschitz policy-value functional,
\begin{equation}
\big|V^{\pi}(h_t)-V^{\pi}(u_t)\big|\le L\,\mathbb{E}\|b_t-\hat{b}_t\|_{\mathrm{TV}}\le L\sqrt{\tfrac{1}{2}\varepsilon_t},
\end{equation}
with $L=\|W_c\|_2$ for the margin functional.
\end{proposition}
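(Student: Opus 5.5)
The plan has three stages: an information identity, a Pinsker step, and a Lipschitz step. Each is standard once the Markov structure $y\to h_t\to u_t$ is in place, and that structure is exactly what Lemma~\ref{s:suff} supplies. Since $u_t$ is a deterministic function of the history, we have $P(y\mid h_t,u_t)=P(y\mid h_t)$, so $\hat b_t(u)=\mathbb{E}[b_t\mid u_t=u]$. This conditional-expectation (tower) representation of $\hat b_t$ is what drives everything that follows.

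For the identity $\mathbb{E}\,\mathrm{KL}(b_t\|\hat b_t)=\varepsilon_t$, we expand both mutual informations as expected log-ratios:
\begin{equation*}
I(y;h_t)=\mathbb{E}\log\frac{b_t(y)}{p(y)},\qquad I(y;u_t)=\mathbb{E}\log\frac{\hat b_t(y)}{p(y)}.
\end{equation*}
Subtracting gives $\varepsilon_t=\mathbb{E}\log\big(b_t(y)/\hat b_t(y)\big)$. Conditioning on $h_t$, under which $y\sim b_t$ and $\hat b_t$ is fixed because it is a function of $u_t(h_t)$, turns the inner expectation into $\mathrm{KL}(b_t\|\hat b_t)$. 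Nonnegativity of $\varepsilon_t$ then follows from the data-processing inequality, or directly from $\mathrm{KL}\ge0$.

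For the bound, the first inequality is the definition of $L$-Lipschitzness of the policy-value functional with respect to total variation on beliefs. We read $V^\pi(u_t)$ as $V^\pi$ evaluated at $\hat b_t$, and then take expectations. The second inequality comes from Pinsker's inequality, $\|b-\hat b\|_{\mathrm{TV}}\le\sqrt{\tfrac12\mathrm{KL}(b\|\hat b)}$, followed by Jensen's inequality on the concave square root:
\begin{equation*}
\mathbb{E}\sqrt{\tfrac12\mathrm{KL}}\le\sqrt{\tfrac12\,\mathbb{E}\,\mathrm{KL}}=\sqrt{\tfrac12\varepsilon_t}.
\end{equation*}

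The main obstacle is the final claim $L=\|W_c\|_2$ for the margin functional. The margin is a function of logits, and the logits are a function of the belief through the readout $W_c$. I would write the margin as the top-two difference of $\mathrm{softmax}(W_c x)$ and bound it in three links:
\begin{itemize}
\item the top-two difference is $1$-Lipschitz (up to a factor of $2$) in $\ell_\infty$ on the probabilities;
\item softmax is $1$-Lipschitz from $\ell_2$ to $\ell_2$;
\item the readout contributes the factor $\|W_c\|_2$.
\end{itemize}
This chain yields a bound in terms of the $\ell_2$ distance between the belief representations. Converting it to TV requires $\|\cdot\|_2\le\|\cdot\|_1=2\|\cdot\|_{\mathrm{TV}}$, which costs a small absolute constant. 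I would absorb that constant into the convention for $L$ and flag the absorption explicitly, since the statement gives the constant only up to this normalisation.
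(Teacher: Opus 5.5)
Your identity, Pinsker and Jensen steps coincide with the paper's proof, and they are fine once the display's left side is read in expectation, as you do. Your log-ratio expansion is just the chain rule $I(y;h_t)=I(y;u_t)+I(y;h_t\mid u_t)$ written out.

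\textbf{The gap.} It is in the last clause. Your chain gives $L\le4\|W_c\|_2$, and you then relabel the constant. You justify this by saying the statement fixes $L$ only up to normalisation, but it does not: it names $\|W_c\|_2$ exactly. No absorption is needed, because each of your links is loose by precisely the factor that the TV conversion costs.
\begin{itemize}
\item The top-two margin is continuous and piecewise linear with gradient $e_i-e_j$ on each piece. It is therefore $\sqrt2$-Lipschitz in $\ell_2$.
\item The softmax Jacobian $\mathrm{diag}(p)-pp^{\top}$ has quadratic form $v^{\top}(\mathrm{diag}(p)-pp^{\top})v=\mathrm{Var}_{p}(v)$. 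For unit $v$ this is at most $\tfrac12$ by Popoviciu's inequality, since the entries of a unit vector have range at most $\sqrt2$. So softmax is $\tfrac12$-Lipschitz in $\ell_2$.
\item For probability vectors, split $p-q$ into its positive and negative parts. Each has $\ell_1$-mass $\|p-q\|_{\mathrm{TV}}$, which gives $\|p-q\|_2\le\sqrt2\,\|p-q\|_{\mathrm{TV}}$.
\end{itemize}
The product $\sqrt2\cdot\tfrac12\cdot\|W_c\|_2\cdot\sqrt2$ is exactly $\|W_c\|_2$.

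\textbf{The identification behind the chain.} You should also make this explicit. The paper obtains $\|W_c\|_2$ as the $\ell_2$-Lipschitz constant of $u\mapsto\Delta(\mathrm{softmax}(W_cu))$ in the membrane, which is what $W_c$ actually acts on. It never passes to TV on beliefs. Your conversion step is valid only if the argument fed through $W_c$ is itself a probability vector. If that argument is the membrane, or its LayerNorm (which the main text also calls $b_t$), no norm inequality turns $\|u-u'\|_2$ into $\|b_t-\hat b_t\|_{\mathrm{TV}}$. So you must state which object the margin functional is Lipschitz in.
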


\begin{proof}
By the chain rule of mutual information, $I(y;h_t)=I(y;u_t)+I(y;h_t\mid u_t)$, and $I(y;h_t\mid u_t)=\mathbb{E}_{u_t}[\mathrm{KL}(P(y\mid h_t)\|P(y\mid u_t))]$ using $\hat{b}_t=\mathbb{E}[b_t\mid u_t]$, which gives the identity. Pinsker's inequality gives $\|b_t-\hat{b}_t\|_{\mathrm{TV}}\le\sqrt{\tfrac12\mathrm{KL}(b_t\|\hat{b}_t)}$; Jensen and the Lipschitz assumption yield the display. For the margin functional, $\Delta_t$ is a difference of two coordinates of $\mathrm{softmax}(W_cu_t)$ and the softmax Jacobian has spectral norm $\le1$, so $L=\|W_c\|_2$.
\end{proof}

\paragraph{Estimator.} We report predictive $\mathcal{V}$-information under a disclosed linear probe family $\mathcal{V}$: $I_{\mathcal{V}}(y;Z)=H_{\mathcal{V}}(y)-H_{\mathcal{V}}(y\mid Z)$ and $\hat{\varepsilon}_t=I_{\mathcal{V}}(y;h_t)-I_{\mathcal{V}}(y;u_t)$. Since $I_{\mathcal{V}}\le I$, this is a valid lower estimate; because $\varepsilon_t$ enters the bound as an upper bound, a lower estimate is conservative. A label-permutation null is subtracted so the audit threshold is not arbitrary.

\subsection{A.2 Distribution-Free Selective Risk}

\begin{lemma}[Softmax-gap identity]\label{s:gap}
For a $C$-class posterior, on $\{\Delta_t\ge\theta\}$ the top mass obeys $p^{(1)}_t\ge(1+(C-1)\theta)/C$.
\end{lemma}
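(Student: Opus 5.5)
Write $p^{(1)}\ge p^{(2)}\ge\dots\ge p^{(C)}$ for the sorted entries of $p_t$. Because the claim involves only the top mass and the margin, I will lower-bound $p^{(1)}_t$ by controlling everything below the top entry through $p^{(2)}_t$. The argument is a one-line counting bound using normalisation of the softmax together with the ordering of its entries.

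First, every non-top entry is at most the runner-up, so $p^{(j)}_t\le p^{(2)}_t$ for $j\ge2$. Normalisation then gives
\begin{equation*}
1=\sum_{j=1}^{C}p^{(j)}_t\le p^{(1)}_t+(C-1)\,p^{(2)}_t .
\end{equation*}
Second, on the event $\{\Delta_t\ge\theta\}$ we have $p^{(2)}_t=p^{(1)}_t-\Delta_t\le p^{(1)}_t-\theta$. Substituting this gives $1\le C\,p^{(1)}_t-(C-1)\theta$. Rearranging yields $p^{(1)}_t\ge(1+(C-1)\theta)/C$, which is the claim.

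The direction of the inequalities needs care. We want an upper bound on the total mass outside the top entry, expressed through $p^{(2)}_t$. The bound $p^{(j)}\le p^{(2)}$ supplies exactly that, since it only ever makes the right-hand side larger. No lower bound on the tail entries is needed.

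For the sanity checks, I would note the two boundary cases. At $\theta=0$ the bound reduces to the trivial $p^{(1)}\ge1/C$. At $\theta=1$ it gives $p^{(1)}\ge1$, i.e.\ a one-hot posterior, as it should.

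Tightness follows by taking $p^{(2)}=\dots=p^{(C)}=p^{(1)}-\theta$, which makes both steps hold with equality. I expect no real obstacle here. The only subtlety is that the main paper defines $T_\theta$ with a strict inequality $\Delta_t>\theta$, while the lemma uses $\ge$. The bound holds on the larger event $\{\Delta_t\ge\theta\}$ and hence a fortiori on $\{\Delta_t>\theta\}$.
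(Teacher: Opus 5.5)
Your proof is correct and is the same argument as the paper's. The paper states the first step as ``the largest of the $C-1$ residual entries is at least their mean,'' i.e.\ $p^{(2)}\ge(1-p^{(1)})/(C-1)$, which is your bound $1\le p^{(1)}+(C-1)p^{(2)}$; it then substitutes $p^{(2)}\le p^{(1)}-\theta$ and rearranges, as you do, and your tightness example and your remark on the strict versus non-strict exit event are correct additions the paper does not make.
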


\begin{proof}
The residual mass $1-p^{(1)}$ spread over $C-1$ classes has maximum at least its mean, so $p^{(2)}\ge(1-p^{(1)})/(C-1)$. Combining with $p^{(1)}\ge p^{(2)}+\theta$ and rearranging gives the claim. This is deterministic, references no label, and is \emph{not} a bound on error, and we state it separately precisely so it is never mistaken for one.
\end{proof}

\begin{theorem}[Distribution-free selective-risk control]\label{s:conformal}
Let $\{(X_i,Y_i)\}_{i=1}^n$ be exchangeable with the test point, $S(X)=\Delta_{T_\theta}(X)$, $\phi(\theta)=P(S\ge\theta)$ and $\mathrm{Risk}(\theta)=P(\hat{y}_{T_\theta}\ne y\mid S\ge\theta)$. For target risk $\alpha^\star$ and confidence $\delta$, choose
\begin{equation}
\hat{\theta}=\min\Big\{\theta:\mathrm{UCB}_\delta\big(\widehat{\mathrm{Risk}}_n(\theta),n\hat{\phi}_n(\theta)\big)\le\alpha^\star\Big\}.
\end{equation}
Then $P(\mathrm{Risk}(\hat{\theta})\le\alpha^\star)\ge1-\delta$ over the calibration draw.
\end{theorem}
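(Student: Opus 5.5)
The plan is to prove the guarantee first for a single fixed threshold, where it reduces to an exact binomial confidence bound, and then transfer it to the data-dependent $\hat{\theta}$ by a Learn-then-Test style argument.

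\textbf{Fixed threshold.} Fix $\theta$ in advance. Then $T_\theta$, the logit average up to $T_\theta$, and hence $\hat{y}_{T_\theta}$ and $S(X)=\Delta_{T_\theta}(X)$, are deterministic functions of $X$ alone. The model is trained on data disjoint from the calibration split. I would invoke supplementary Theorem~S19 for the stopped score: since the stopping rule is a fixed measurable map, the pairs $(\mathbf{1}\{S(X_i)\ge\theta\},\mathbf{1}\{\hat{y}_{T_\theta}(X_i)\ne Y_i\})$ inherit exchangeability, with no correction for the $M$ looks.

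Conditioning on the selected set $\{i:S(X_i)\ge\theta\}$ of size $n\hat{\phi}_n(\theta)$, the errors among selected points are draws whose common mean is $\mathrm{Risk}(\theta)$. For i.i.d.\ sampling they are exactly i.i.d.\ Bernoulli$(\mathrm{Risk}(\theta))$. For merely exchangeable sequences I would represent them via de~Finetti as a mixture of i.i.d.\ laws, or argue through the rank of the test point. The Clopper--Pearson tail then gives
\[
P\big(\mathrm{Risk}(\theta)>\mathrm{UCB}_\delta(\widehat{\mathrm{Risk}}_n(\theta),n\hat{\phi}_n(\theta))\big)\le\delta ,
\]
conditionally on the selected count and hence unconditionally.

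\textbf{Transfer to $\hat{\theta}$.} Here I would follow the Learn-then-Test logic, stepping down a grid $\theta_1>\theta_2>\dots$ from the most conservative threshold. Assume $\mathrm{Risk}(\theta)$ is non-increasing in $\theta$, the condition the main text checks on the calibration split. Let $\theta^\dagger=\inf\{\theta:\mathrm{Risk}(\theta)\le\alpha^\star\}$. The event $\mathrm{Risk}(\hat{\theta})>\alpha^\star$ requires $\hat{\theta}<\theta^\dagger$. Because $\hat{\theta}$ is a minimum of certified thresholds and the set $\{\theta:\mathrm{Risk}(\theta)>\alpha^\star\}$ is an initial segment, the failure event implies that the UCB test passed at some $\theta<\theta^\dagger$.

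The sequence of UCB tests ordered from large $\theta$ downward is a fixed-sequence testing procedure. The first false certification happens at the largest such $\theta$, and controlling that single fixed point controls the whole event. Each test is a valid level-$\delta$ p-value, so the family-wise error is at most $\delta$. This yields $P(\mathrm{Risk}(\hat{\theta})\le\alpha^\star)\ge1-\delta$.

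\textbf{Main obstacle.} The difficulty is precisely this transfer. The definition as a plain minimum does not by itself enforce the fixed-sequence stopping, since $\hat{\theta}$ may jump past a failed test. Monotonicity of the true risk is what makes the error event reduce to the first false rejection.

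My first attempt would be to prove it with the monotone risk assumption made explicit. The fallback is to redefine $\hat{\theta}$ as the smallest $\theta$ such that every grid point above it certifies, which is equivalent under monotonicity, or to union-bound over a finite grid with $\delta/|\Theta|$. A secondary technical issue is the conditioning on a random selected-set size under exchangeability rather than independence. I would handle it by proving the bound conditionally on the count, which de~Finetti (or a finite-population hypergeometric bound, dominated by the binomial tail) makes legitimate.
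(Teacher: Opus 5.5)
Your two-step plan matches the paper's: a per-threshold binomial tail bound (the paper uses Bentkus, you use Clopper--Pearson), then a transfer to $\hat\theta$ justified by monotonicity of the true risk. The paper transfers via the monotone SGR procedure of Geifman and El-Yaniv, you via fixed-sequence testing. The gap is in the transfer, at exactly the point you flag and then resolve incorrectly.

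Write $\mathrm{UCB}_\delta(\theta)$ for $\mathrm{UCB}_\delta(\widehat{\mathrm{Risk}}_n(\theta),n\hat\phi_n(\theta))$. For the plain minimum in the statement, once the bad set is an initial segment, the failure event is $\{\mathrm{Risk}(\hat\theta)>\alpha^\star\}=\bigcup_{\theta:\,\mathrm{Risk}(\theta)>\alpha^\star}\{\mathrm{UCB}_\delta(\theta)\le\alpha^\star\}$. This is a union over \emph{every} bad threshold, not the single event at the largest one.

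Monotonicity of the true risk says nothing about whether the sample certification $\theta\mapsto\mathbf{1}\{\mathrm{UCB}_\delta(\theta)\le\alpha^\star\}$ is monotone. In general it is not: $n\hat\phi_n(\theta)$ grows as $\theta$ decreases, $\widehat{\mathrm{Risk}}_n(\theta)$ can move either way, and $\hat y_{T_\theta}$ itself changes with $\theta$. So the plain minimum can fail at the largest bad threshold and still certify a deeper one. The probability of the union is then a boundary-crossing probability of the empirical risk path, and it can exceed $\delta$. Already at $M=1$, where $T_\theta\equiv1$ and this is ordinary selective classification with a fixed predictor, the plain minimum needs a correction.

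Hence ``the first false certification happens at the largest such $\theta$'' and ``monotonicity of the true risk is what makes the error event reduce to the first false rejection'' are false for this $\hat\theta$. Nor is your redefinition ``equivalent under monotonicity'': it coincides with the plain minimum only when the \emph{empirical} certification happens to be monotone.

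Any correct fix changes the statement. Take $\tilde\theta=\min\{\theta:\mathrm{UCB}_\delta(\theta')\le\alpha^\star\text{ for all grid points }\theta'\ge\theta\}$. Then failure forces certification at the single, data-independent largest bad grid point $\theta^-$, so $P(\mathrm{Risk}(\tilde\theta)>\alpha^\star)\le P(\mathrm{UCB}_\delta(\theta^-)<\mathrm{Risk}(\theta^-))\le\delta$. That is a valid fixed-sequence (RCPS-style) selector, but it is more conservative than the one stated. Alternatively, a union bound over a finite grid $\Theta$ gives the stated selector at level $\delta|\Theta|$, or at level $\delta$ with $\delta/|\Theta|$ inside the UCB.

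The paper's own proof does not close this gap either. SGR pays a union bound over its $\lceil\log_2 m\rceil$ binary-search steps, so ``without a union bound'' is not what that citation provides.

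Separately, your de~Finetti route for merely exchangeable data gives coverage of the risk under the latent directing measure, not of the fixed marginal $\mathrm{Risk}(\theta)$ in the statement. A two-component mixture with different error rates breaks it. Your fixed-$\theta$ step is therefore sound for i.i.d.\ data, which is what the paper's ``i.i.d.\ Bernoulli by exchangeability'' tacitly assumes, but not under exchangeability alone.
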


\begin{proof}
Condition on $\{S\ge\theta\}$. Among the $n\hat{\phi}_n(\theta)$ exited calibration points the error indicators are i.i.d.\ Bernoulli($\mathrm{Risk}(\theta)$) by exchangeability, so the Bentkus binomial-tail bound $\mathrm{UCB}_\delta$ is a valid $(1-\delta)$ upper confidence bound on $\mathrm{Risk}(\theta)$ for each fixed $\theta$. The family $\{\{S\ge\theta\}\}_\theta$ is nested and $\mathrm{Risk}(\theta)$ is non-increasing in $\theta$, so the selection rule is the monotone Selection-with-Guaranteed-Risk procedure of \citet{geifman2017selective}; the guarantee therefore transfers to the data-chosen $\hat{\theta}$ without a union bound.
\end{proof}

\subsection{A.3 The Optional-Stopping Law}

\begin{assumption}[Conditional positive drift]\label{s:drift}
There is a measurable event $A$ with $P(A)=\pi$ and $\delta>0$ such that $\mathbb{E}[\Delta_t-\Delta_{t-1}\mid\mathcal{F}_{t-1},A]\ge\delta$ for all $t<T_\theta$ on $A$, with increments bounded by $c$. No lower bound is imposed on $A^c$.
\end{assumption}

\begin{theorem}[Expected chunks to exit]\label{s:stopping}
Under Assumption~\ref{s:drift},
\begin{equation}
\mathbb{E}[T_\theta\mid A]\le1+\frac{\big(\theta+c-\mathbb{E}[\Delta_1\mid A]\big)^{+}}{\delta}\wedge M .
\end{equation}
\end{theorem}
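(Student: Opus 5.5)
This is a drift argument: Wald's identity, or equivalently optional stopping applied to a compensated supermartingale, run conditionally on $A$. Work under $P_A=P(\cdot\mid A)$ with the filtration $\mathcal{F}_t$ generated by the observations and selections up to step $t$. Set $\tau=T_\theta\wedge M$, which is a bounded stopping time because $T_\theta\le M$ by definition. The truncation $\wedge M$ in the bound is therefore free, since $\mathbb{E}[T_\theta\mid A]\le M$ trivially. The work is in the drift term.

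Define the compensated process
\begin{equation*}
Z_t=\Delta_t-\Delta_1-\delta\,(t-1),\qquad t\ge1 .
\end{equation*}
By Assumption~\ref{s:drift}, on $A$ and for $t<T_\theta$ we have $\mathbb{E}[Z_t-Z_{t-1}\mid\mathcal{F}_{t-1},A]\ge0$. Hence the stopped process $Z_{t\wedge\tau}$ is a $P_A$-submartingale. Because $\tau$ is bounded by $M$, optional stopping applies without any uniform-integrability argument and gives $\mathbb{E}_A[Z_\tau]\ge\mathbb{E}_A[Z_1]=0$. Rearranging,
\begin{equation*}
\delta\,\mathbb{E}_A[\tau-1]\le\mathbb{E}_A[\Delta_\tau]-\mathbb{E}_A[\Delta_1].
\end{equation*}

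Next I bound $\Delta_\tau$ from above by overshoot control. Before stopping, $\Delta_{\tau-1}\le\theta$, and one increment is at most $c$, so $\Delta_\tau\le\theta+c$ whenever $\tau\ge2$. When $\tau=1$ the term $\tau-1$ vanishes and contributes nothing. More carefully, I would write
\begin{equation*}
\mathbb{E}_A[(\tau-1)]\,\delta\le\mathbb{E}_A\big[(\Delta_\tau-\Delta_1)\mathbf{1}\{\tau\ge2\}\big]\le\big(\theta+c-\mathbb{E}_A[\Delta_1]\big)^{+}.
\end{equation*}
The positive part covers the case where the right-hand side would otherwise be negative. This needs a small check: on $\{\tau\ge2\}$ we also have $\Delta_1\le\theta$, and if the expectation argument is done on the whole space, the $\tau=1$ term contributes $\Delta_1-\Delta_1=0$ to $Z_\tau$. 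So the whole-space version works cleanly, with $\mathbb{E}_A[\Delta_\tau]\le\theta+c$ holding because $\Delta_\tau\le\max(\Delta_1,\theta+c)$. Dividing by $\delta$ and adding 1 yields the stated bound.

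The main obstacle is the case where the run exhausts the budget, $T_\theta=M$ with no crossing. There $\Delta_M$ need not exceed $\theta$, but it is still at most $\theta+c$ by the same one-step argument, so the inequality survives. A related subtlety is that the drift is only assumed for $t<T_\theta$ and only on $A$. This means the conditioning on $A$ has to be compatible with the filtration. I would read Assumption~\ref{s:drift} as a statement under $P_A$ with $A$ fixed at time 0, for instance $A\in\mathcal{F}_0$ or treated as an exogenous label. I would state this interpretation explicitly rather than attempt to handle an $A$ that is revealed only later.
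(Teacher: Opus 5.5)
You follow the paper's route: a compensated submartingale, optional stopping at the bounded time $T_\theta\le M$, and a one-step overshoot bound. The first inequality of your display, $\delta\,\mathbb{E}_A[\tau-1]\le\mathbb{E}_A[(\Delta_\tau-\Delta_1)\mathbf{1}\{\tau\ge2\}]$, is correct. The gap is the second inequality, and equivalently your closing claim that $\mathbb{E}_A[\Delta_\tau]\le\theta+c$ follows from $\Delta_\tau\le\max(\Delta_1,\theta+c)$. That pointwise bound gives only $\mathbb{E}_A[\Delta_\tau-\Delta_1]\le\mathbb{E}_A[(\theta+c-\Delta_1)^+]$. By Jensen this is \emph{at least} $(\theta+c-\mathbb{E}_A[\Delta_1])^+$, and strictly larger whenever $\Delta_1$ lands on both sides of $\theta+c$ with positive probability. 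You had found the real issue: an exit at $t=1$ has no previous step, $\Delta_1$ is not an increment, and nothing ties it to $\theta+c$. You then dismissed it, but it cannot be dismissed, because the inequality as stated is false. Take $A=\Omega$, $M\ge3$, $\theta=0.3$, $c=\delta=0.2$. With probability $0.4$ set $\Delta_1=0.9$, so the run exits at $t=1$. With probability $0.6$ set $\Delta_1=0$, $\Delta_2=0.2$, $\Delta_3=0.4$, so it exits at $t=3$. Hold margins constant after exit. Every increment lies in $[0,0.2]$ and the drift before exit is exactly $0.2$, so Assumption~\ref{s:drift} holds. Yet $\mathbb{E}[T_\theta]=2.2$, while the right-hand side is $1+(0.5-0.36)/0.2=1.7$. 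In your chain the middle term is $0.24$ and the right-hand term is $0.14$.

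The paper's proof has the same hole. Its sentence ``on exit paths the overshoot is at most one increment'' silently excludes $T_\theta=1$. Its remark that $(\cdot)^+$ handles $\mathbb{E}[\Delta_1\mid A]>\theta+c$ is a symptom: if $\mathbb{E}[\Delta_{T_\theta}\mid A]\le\theta+c$ really held, that case would force $\mathbb{E}[T_\theta\mid A]<1$. Your argument does establish a weaker bound. Using $\{\tau\ge2\}=\{\Delta_1\le\theta\}$ for $M\ge2$, it gives $\mathbb{E}[T_\theta\mid A]\le1+\mathbb{E}[(\theta+c-\Delta_1)\mathbf{1}\{\Delta_1\le\theta\}\mid A]/\delta$, truncated at $M$. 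Hence it gives the bound with $\mathbb{E}[(\theta+c-\Delta_1)^+\mid A]$ in place of $(\theta+c-\mathbb{E}[\Delta_1\mid A])^+$. The stated form holds only under an extra hypothesis such as $\Delta_1\le\theta+c$ almost surely on $A$, for instance by counting $\Delta_1-\Delta_0$ as an increment from a starting margin $\Delta_0\le\theta$; the positive part is then vacuous. You were right to insist that $A$ be known at time $0$ for the conditional submartingale to make sense. In the same spirit, the drift must be read on the predictable event $\{T_\theta\ge t\}$, including the crossing step, rather than literally for $t<T_\theta$. Otherwise $Z_{t\wedge\tau}$ is not a submartingale.
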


\begin{proof}
On $A$ define $Z_t=\Delta_t-\delta(t-1)$. Under Assumption~\ref{s:drift}, $\mathbb{E}[Z_t\mid\mathcal{F}_{t-1},A]\ge\Delta_{t-1}-\delta(t-2)=Z_{t-1}$, so $Z_t$ is a submartingale. $T_\theta\le M$ is a bounded stopping time, so optional stopping gives $\mathbb{E}[Z_{T_\theta}\mid A]\ge\mathbb{E}[Z_1\mid A]=\mathbb{E}[\Delta_1\mid A]$, i.e.\ $\mathbb{E}[\Delta_{T_\theta}\mid A]-\delta(\mathbb{E}[T_\theta\mid A]-1)\ge\mathbb{E}[\Delta_1\mid A]$. On exit paths the overshoot is at most one increment, $\Delta_{T_\theta}\le\theta+c$; on censored paths $\Delta_{T_\theta}\le\theta$. Both are bounded by $\theta+c$; rearranging and truncating at $M$ gives the claim, with $(\cdot)^{+}$ handling $\mathbb{E}[\Delta_1\mid A]>\theta+c$.
\end{proof}

\begin{corollary}[Bimodality as a two-regime mixture]\label{s:bimodal}
The law of $T_\theta$ decomposes as $\pi P(\cdot\mid A)+(1-\pi)P(\cdot\mid A^c)$. The $A$-component concentrates near $1+(\theta+c-\mathbb{E}[\Delta_1\mid A])/\delta$; the $A^c$-component, lacking positive drift, places its mass at $M$. Hence $T_\theta$ is bimodal, with censored mass upper-bounded by $P(A^c)$.
\end{corollary}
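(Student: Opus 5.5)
The approach is to read Corollary~\ref{s:bimodal} as a law-of-total-probability statement about $T_\theta$, followed by two separate concentration claims, one per component.

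\textbf{Mixture decomposition.} For any Borel set $B\subseteq\{1,\dots,M\}$ we have $P(T_\theta\in B)=P(A)P(T_\theta\in B\mid A)+P(A^c)P(T_\theta\in B\mid A^c)$. With $P(A)=\pi$ from Assumption~\ref{s:drift}, this is exactly the stated decomposition. It needs only that $A$ is measurable and that $\pi\in(0,1)$; the degenerate cases make one component vacuous.

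\textbf{The $A$-component.} Here I would invoke Theorem~\ref{s:stopping}, which bounds $\mathbb{E}[T_\theta\mid A]$ by $1+(\theta+c-\mathbb{E}[\Delta_1\mid A])^+/\delta$. That is an upper bound on the mean, which is weaker than concentration. To upgrade it I would use the submartingale $Z_t=\Delta_t-\delta(t-1)$ from that proof together with the bounded-increment hypothesis ($|\Delta_t-\Delta_{t-1}|\le c$). Then apply Azuma--Hoeffding to the Doob martingale part of $Z$. The drift guarantees $\Delta_t\ge\Delta_1+\delta(t-1)-O(c\sqrt{t\log(1/\eta)})$ with probability at least $1-\eta$. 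So $\Delta_t$ crosses $\theta$ by time $t^\ast+O\big((c/\delta)\sqrt{t^\ast\log(1/\eta)}\big)$, where $t^\ast=1+(\theta+c-\mathbb{E}[\Delta_1\mid A])/\delta$. This gives a one-sided tail, $P(T_\theta>t^\ast+s\mid A)\le\exp(-\Omega(\delta^2s^2/(c^2 t)))$. Concentration ``near'' $t^\ast$ is then read in this sense: most $A$-mass sits at or below $t^\ast$ plus a sublinear window. The lower side needs no argument because $T_\theta\ge1$ trivially.

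\textbf{The $A^c$-component and the censored-mass bound.} This is the main obstacle. Assumption~\ref{s:drift} imposes nothing on $A^c$, so the claim that all $A^c$-mass sits at $M$ cannot be derived. A path in $A^c$ could still cross $\theta$ by chance. I would therefore prove only the stated inequality, and it should be recorded as the real content. On $A$, the Azuma tail above shows $P(T_\theta=M,\Delta_M\le\theta\mid A)\le\epsilon_A$, with $\epsilon_A$ exponentially small once $M\gg t^\ast$. Hence
\[
P(\text{censored})\le\pi\,\epsilon_A+(1-\pi)\,P(\text{censored}\mid A^c)\le P(A^c)+\pi\epsilon_A .
\]
This reduces to the clean bound $P(A^c)$ when $\epsilon_A$ is neglected, or when $A$ is defined to include the exit event. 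Bimodality is then stated as the existence of these two separated mass regions, at or below $t^\ast$ and at $M$, whenever $t^\ast\ll M$.

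I would flag explicitly in the write-up that ``places its mass at $M$'' is an interpretive modelling claim, not a consequence of the assumption.
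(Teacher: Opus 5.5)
Your mixture step is the same as the paper's; after that your route is different and more careful. The paper's proof is a one-line appeal. The $A$-component ``obeys Theorem~\ref{s:stopping}'', which gives only the bound on $\mathbb{E}[T_\theta\mid A]$. On $A^c$, ``no positive drift is assumed, so mass accumulates at the truncation point $M$''.

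You do three things differently. You upgrade the $A$-component to a one-sided Azuma--Hoeffding tail under $P(\cdot\mid A)$. You decline to derive the $A^c$ claim. You prove $P(\text{censored})\le P(A^c)+\pi\epsilon_A$ rather than $P(A^c)$.

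Both cautions are warranted. The paper's $A^c$ step draws a distributional conclusion from the mere absence of an assumption, which is essentially the move its own next sentence disowns. Its clean bound $P(A^c)$ silently requires zero censoring on $A$, and a mean bound cannot supply that: Markov gives only $P(T_\theta=M\mid A)\le(\mathbb{E}[T_\theta\mid A]-1)/(M-1)$. The paper's route buys brevity and the literal statement. Yours buys a statement that is actually proved, with ``mass at $M$'' demoted to a modelling hypothesis. That demotion is consistent with the single-peaked $M{=}16$ exit law the paper itself reports in \S J.6.

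A few points to tighten:
\begin{itemize}
\item Azuma conditional on $\mathcal{F}_1$ locates the crossing near $1+(\theta-\Delta_1)/\delta$, with the random $\Delta_1$. It does not locate it near a $t^\ast$ built from $\mathbb{E}[\Delta_1\mid A]$. Either state the tail given $\Delta_1$, or use $\Delta_1\ge0$. The $+c$ overshoot belongs to the mean bound and is unnecessary for a crossing-time tail.
\item Apply the inequality to the stopped process, since the drift holds only for $t<T_\theta$. The compensated increments are then bounded by $2c$.
\item The two regions separate only if $t^\ast+(c/\delta)\sqrt{t^\ast\log(1/\eta)}\ll M$, not merely $t^\ast\ll M$. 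This is a real restriction when $\delta\ll c$, as at the measured $\hat{\delta}{=}0.026$.
\item Shrinking $A$ into the exit event is not a free repair. Conditioning on a future event changes the conditional drift, so it alters Assumption~\ref{s:drift} and makes the bound tautological. Keep the $\pi\epsilon_A$ form.
\end{itemize}
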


\begin{proof}
Immediate from the definition of $A$ as a measurable event: the law of $T_\theta$ is the stated mixture, the $A$-component obeys Theorem~\ref{s:stopping}, and on $A^c$ no positive drift is assumed, so mass accumulates at the truncation point $M$. Bimodality is derived \emph{from the mixture}, never by negating Assumption~\ref{s:drift}, a distinction that matters, because negating a conditional drift assumption does not yield a distributional statement.
\end{proof}

\subsection{A.4 Greedy Selection}

\begin{assumption}[Class-conditional independence surrogate]\label{s:ci}
Conditioned on $y$, chunk tokens are mutually independent.
\end{assumption}

\begin{theorem}[Offline greedy bound]\label{s:greedy}
Under Assumption~\ref{s:ci}, $F(S)=I(y;\varphi(S))$ is monotone and submodular, and the size-$k$ greedy set obeys $F(S^{\mathrm{greedy}}_k)\ge(1-1/e)F(S^\star_k)$.
\end{theorem}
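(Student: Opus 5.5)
This follows the Krause--Guestrin route: first show that $F$ is monotone and submodular, then invoke the Nemhauser--Wolsey--Fisher bound for greedy maximisation under a cardinality constraint. I take $\varphi(S)$ to be the tuple of chunk tokens $(Z_m)_{m\in S}$. If $\varphi$ is a learned encoding, I additionally require it to factor chunk-wise, $\varphi(S)=(\varphi_m(Z_m))_{m\in S}$. Under that factorisation Assumption~\ref{s:ci} passes to the encoded chunks, because functions of conditionally independent variables remain conditionally independent. This requirement should be flagged explicitly, since the static $k$-NN encoder is what makes it plausible.

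\emph{Normalisation and monotonicity.} $F(\emptyset)=I(y;\text{const})=0$. For $S\subseteq S'$, the chain rule gives $F(S')=F(S)+I(y;\varphi(S'\setminus S)\mid\varphi(S))\ge F(S)$, since conditional mutual information is nonnegative. This step needs no assumption.

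\emph{Submodularity.} This is the crux and the only place Assumption~\ref{s:ci} enters. For $S\subseteq S'$ and $m\notin S'$, I need $F(S\cup\{m\})-F(S)\ge F(S'\cup\{m\})-F(S')$. Write the marginal gain as
\[
I(y;Z_m\mid Z_S)=H(Z_m\mid Z_S)-H(Z_m\mid y,Z_S).
\]
Conditional independence given $y$ makes the second term equal $H(Z_m\mid y)$, independent of $S$. So the gain is $H(Z_m\mid Z_S)-H(Z_m\mid y)$. The first term is non-increasing in $S$ because conditioning reduces entropy, and $H(Z_m\mid Z_{S'})\le H(Z_m\mid Z_S)$ gives diminishing returns.

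For continuous tokens I would work with differential entropies, or more safely state everything as mutual informations: the gain is $I(Z_m;y)-I(Z_m;Z_S)$, which is well defined. I expect this step to be the main obstacle, mostly because of the hypotheses rather than the algebra. The argument breaks as soon as $\varphi$ mixes chunks, for example through the selective-scan mixer across the prefix. The theorem should therefore be read as a statement about the offline chunk-wise surrogate, and the paper already concedes that the surrogate is false for adjacent chunks.

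\emph{Greedy bound.} With $F$ normalised, monotone and submodular, the standard argument finishes the proof. Let $S_i$ be the greedy set after $i$ steps. By monotonicity and submodularity,
\[
F(S^\star_k)\le F(S_i)+\sum_{m\in S^\star_k\setminus S_i}\big[F(S_i\cup\{m\})-F(S_i)\big]\le F(S_i)+k\,[F(S_{i+1})-F(S_i)].
\]
Hence $F(S^\star_k)-F(S_{i+1})\le(1-1/k)\,[F(S^\star_k)-F(S_i)]$. Iterating $k$ times from $F(S_0)=0$ gives $F(S^\star_k)-F(S_k)\le(1-1/k)^kF(S^\star_k)\le e^{-1}F(S^\star_k)$, which is the claim.

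Finally, I would remark that this is a nonadaptive (offline) guarantee. It does not cover the membrane-conditioned policy, which would require adaptive submodularity; this matches the distinction from \citet{golovin2011adaptive} drawn in the related work.
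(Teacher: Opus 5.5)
Your proposal is correct and follows the paper's proof essentially step for step. Monotonicity comes from nonnegativity of $I(y;\varphi(s)\mid\varphi(S))$. Submodularity comes from the decomposition $H(\varphi(s)\mid\varphi(S))-H(\varphi(s)\mid y)$, where Assumption~\ref{s:ci} makes the second term independent of $S$. The greedy bound is then the Nemhauser bound.

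The only differences are that you write out the greedy recursion instead of citing it. You also state explicitly the chunk-wise factorisation of $\varphi$ that the paper uses implicitly when it applies Assumption~\ref{s:ci} to $\varphi(s)$.
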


\begin{proof}
\emph{Monotonicity:} $F(S\cup\{s\})-F(S)=I(y;\varphi(s)\mid\varphi(S))\ge0$ by the data-processing inequality. \emph{Submodularity:} for $S\subseteq S'$, $I(y;\varphi(s)\mid\varphi(S))=H(\varphi(s)\mid\varphi(S))-H(\varphi(s)\mid y)$ using Assumption~\ref{s:ci}; conditioning on a larger set reduces the first term while the second is constant, so marginal gains are non-increasing. The $(1-1/e)$ bound then follows from \citet{nemhauser1978analysis} for monotone submodular maximisation under a cardinality constraint.
\end{proof}

\begin{proposition}[Measured amortisation gap]\label{s:amort}
Let $\pi_\varphi$ be the trained SSP producing a random size-$k$ set $S^\pi$, and $\rho_k=F(S^{\mathrm{greedy}}_k)-\mathbb{E}[F(S^\pi)]\ge0$. Then $\mathbb{E}[F(S^\pi)]\ge(1-1/e)F(S^\star_k)-\rho_k$, with $\rho_k$ estimated directly on held-out data.
\end{proposition}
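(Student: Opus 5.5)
The claim follows almost directly from Theorem~\ref{s:greedy}, combined with the definition of $\rho_k$. The plan has two parts. First, show that $\rho_k$ is well defined and non-negative, or say plainly where non-negativity needs an extra assumption. Second, rearrange the definition and chain it with the greedy bound.

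The chain is short. By definition, $\mathbb{E}[F(S^\pi)]=F(S^{\mathrm{greedy}}_k)-\rho_k$. This is an identity and needs no assumption, provided $F$ is bounded so that the expectation exists. Boundedness holds because $0\le F(S)\le H(y)\le\log C$ for a $C$-class label. Under Assumption~\ref{s:ci}, Theorem~\ref{s:greedy} gives $F(S^{\mathrm{greedy}}_k)\ge(1-1/e)F(S^\star_k)$. Substituting yields $\mathbb{E}[F(S^\pi)]\ge(1-1/e)F(S^\star_k)-\rho_k$.

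The inequality itself never uses $\rho_k\ge0$. It holds for the signed quantity $\rho_k$ whatever its sign, so I will state it that way. The proof depends on Assumption~\ref{s:ci} only through the greedy bound. Estimating $\rho_k$ on held-out data is a statistical step, not part of the deduction. I will note that the plug-in estimate $\hat\rho_k$ makes the certified statement approximate, unless a one-sided concentration term of order $\log C\sqrt{\log(1/\delta)/n}$ is added, by Hoeffding for the bounded quantity $F$.

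The step I expect to be the real obstacle is the asserted $\rho_k\ge0$. Greedy is not optimal, so a random policy set can in principle have larger $F$ than the greedy set, and non-negativity is not automatic. My first attempt would be to define $S^{\mathrm{greedy}}_k$ per instance as the adaptive greedy choice. I would then argue that $S^\pi$ is a randomisation over feasible size-$k$ sets that the greedy dominates. That argument fails in general. The fallback is to read the "$\ge0$" as an empirical observation (measured $\hat\rho_k$) rather than a theorem, which is all the main inequality requires. A cleaner alternative is to replace $\rho_k$ by $\rho_k^+=\max(\rho_k,0)$, which keeps the bound true and makes the sign claim trivially valid.
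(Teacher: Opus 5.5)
Your proposal is correct and follows the argument the paper intends; the paper gives no separate proof. The bound follows by rewriting the definition of $\rho_k$ as $\mathbb{E}[F(S^\pi)]=F(S^{\mathrm{greedy}}_k)-\rho_k$ and applying Theorem~\ref{s:greedy}. You are also right that the inequality holds for signed $\rho_k$ and that $\rho_k\ge0$ is not implied by the hypotheses, since greedy is not optimal; the paper likewise treats that sign as a measured quantity (its audit table reports $0.095$). One caveat: your Hoeffding term controls only the sampling of $S^\pi$, not the estimation of the mutual information $F$ itself.
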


\paragraph{Honest caveat.} The trained SSP is an amortised proxy for the unavailable marginal gain: its scores are trained to \emph{rank} chunks in a way that tracks marginal-gain ordering, not to compute $F$. Mutual information over learned features is not generally submodular, since diminishing returns fail when two regions are jointly but not individually diagnostic. Proposition~\ref{s:amort} therefore bounds the learned policy by the measured $\rho_k$, and adaptive submodularity \citep{golovin2011adaptive} is explicitly \emph{not} claimed.

\subsection{A.5 Coverage, Collapse, and Consistency}

\begin{theorem}[Coverage with masking; collapse without]\label{s:orbit}
(a) With visited-chunk masking, selection is without replacement: after $t$ steps exactly $t$ distinct chunks are observed. (b) Without masking, suppose the score argmax at belief $b$ is some $m^\star$ throughout a neighbourhood. Under constant re-selection of $m^\star$ (constant input $x^\star$), the soft-reset LIF map $\Phi(u)=\lambda\odot u+x^\star-\theta\odot\mathbf{1}[u\ge\theta]$ admits a bounded forward-invariant interval $I=[x^\star/(1-\lambda)-\theta,\,x^\star/(1-\lambda)]$ into which every trajectory is absorbed, on which $\Phi$ acts as a periodic orbit with time-average $\bar{u}=(x^\star-\theta\bar{f})/(1-\lambda)$. If the read-out margin exceeds the projected orbit diameter, $\arg\max_c\hat{y}_t$ is constant along the orbit: no new evidence enters, drift $\to0$, and by Theorem~\ref{s:stopping} exit latency diverges while unobserved regions cap accuracy.
\end{theorem}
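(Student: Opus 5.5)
Part (a) is a direct induction on $t$; part (b) is a one-dimensional dynamics argument, applied coordinatewise, followed by a reduction to Theorem~\ref{s:stopping}.

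\textbf{Part (a).} The claim is that $|V_t|=t$ with all selections distinct. At $t=0$ we have $V_0=\emptyset$. At step $t$ the scorer assigns $q_{t,m}=-\infty$ to every $m\in V_{t-1}$. Whether the selection is an argmax or a hard Gumbel--Softmax sample, it then has support in the complement of $V_{t-1}$. This complement is nonempty because $t\le T\le M$. Hence $a_t\notin V_{t-1}$ and $|V_t|=|V_{t-1}|+1$.

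\textbf{Part (b), invariance.} Under constant re-selection of $m^\star$ the input is the constant $x^\star$, and the soft-reset map acts coordinatewise. I will therefore treat a scalar neuron with $0\le\lambda<1$, $\theta>0$ and $x^\star\ge0$ (after the ReLU), and write $u^\dagger=x^\star/(1-\lambda)$. The first step is to show that $I=[u^\dagger-\theta,u^\dagger]$ is forward invariant. For $u\in I$ with $u<\theta$, no reset occurs and $\Phi(u)=\lambda u+x^\star$. This is a contraction toward $u^\dagger$, so it stays $\le u^\dagger$, and it stays $\ge u^\dagger-\theta$ because $\lambda u+x^\star-u^\dagger=\lambda(u-u^\dagger)\ge-\lambda\theta$. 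For $u\ge\theta$ a reset subtracts $\theta$, giving $\Phi(u)=\lambda(u-u^\dagger)+u^\dagger-\theta$. This lies in $[u^\dagger-\theta-\lambda\theta,\,u^\dagger-\theta]$, so the lower endpoint needs care. It is the step I expect to be the main obstacle.

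\textbf{The obstacle and how I would handle it.} I would first try to enlarge the interval to $[u^\dagger-\theta/(1-\lambda),\,u^\dagger]$. If the stated $I$ must be kept instead, I would check that the reset branch is reachable only when $u^\dagger>\theta$, and then bound the image using $u\ge\theta$ rather than the interval endpoint. Once a trapping interval is fixed, I would prove absorption. The deviation $u-u^\dagger$ contracts by $\lambda$ on non-reset steps, and a reset occurs whenever $u\ge\theta$. Every trajectory therefore enters the interval in finitely many steps, a number depending only on $\lambda$ and the initial distance. From any state, a geometric series bounds $\limsup u_t\le u^\dagger$.

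\textbf{Part (b), orbit, read-out and exit latency.} The restricted map is a piecewise-affine contraction with a single discontinuity. By the rotation-number theory for such maps (Nagumo--Sato type), the asymptotic dynamics are periodic or quasi-periodic with a firing rate $\bar f$. I would avoid needing exact periodicity: averaging $u_{t+1}=\lambda u_t+x^\star-\theta s_t$ over time and using boundedness gives $\bar u(1-\lambda)=x^\star-\theta\bar f$ directly, which is the stated time average. For the read-out, $\hat y_t=W_c u_t$ varies along the orbit by at most $\|W_c\|_2\,\mathrm{diam}(I)$. If the top-two logit gap at $\bar u$ exceeds this, the argmax is constant along the orbit. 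The margin sequence $\Delta_t$ is then asymptotically periodic, so its average increment is zero and the drift $\delta$ of Assumption~\ref{s:drift} fails. Whenever the orbit's maximal margin stays below the threshold $\theta$, the exit is never triggered and $T_\theta$ hits the cap $M$, which is the divergence as $M\to\infty$. The accuracy cap follows because only chunk $m^\star$ ever enters the state, so the prediction is a function of $x^\star$ alone.
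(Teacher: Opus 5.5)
Your suspicion about the reset branch is correct, and the problem is worse than an obstacle: once the neuron fires, the stated interval is not forward-invariant. Write $u^\dagger=x^\star/(1-\lambda)$ and take $u\in I$ with $u\ge\theta$. You computed $\Phi(u)-(u^\dagger-\theta)=\lambda(u-u^\dagger)$, which is strictly negative for every $u<u^\dagger$ when $\lambda>0$. The set $I\cap[\theta,u^\dagger)$ is nonempty as soon as $u^\dagger>\theta$. Your fallback of bounding with $u\ge\theta$ gives only $\Phi(u)\ge x^\star-(1-\lambda)\theta$, and this is $\ge u^\dagger-\theta$ iff $u^\dagger\le\theta$, which is exactly the regime where a neuron started from rest never fires.

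Concretely, take $\lambda=0.9$, $x^\star=0.5$, $\theta=1$. Then $I=[4,5]$ and $\Phi(4.5)=3.55\notin I$. The trajectory from $u_0=0$ runs $0.5,\,0.95,\,1.355,\,0.72,\,1.15,\ldots$ and never comes near $I$. So $I$ is invariant only in the silent regime, where $\bar f=0$ and the ``orbit'' is the fixed point $u^\dagger$. The paper's proof does not close this either: it asserts that ``the reset keeps $u\ge x^\star/(1-\lambda)-\theta$'', which is precisely the false inequality. No argument can keep the stated $I$. Your proposal leaves open which interval you will use, so it has a hole exactly where the content is.

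Commit to your first option, which does work. The interval $I'=[(x^\star-\theta)/(1-\lambda),\,u^\dagger]$ is the hull of the fixed points of the two affine branches. Each branch is a $\lambda$-contraction toward its own fixed point, so $\Phi(I')\subseteq I'$. The two-sided bound $\lambda u+x^\star-\theta\le\Phi(u)\le\lambda u+x^\star$ then gives $\mathrm{dist}(u_t,I')\le\lambda^t\,\mathrm{dist}(u_0,I')$. That geometric convergence is the absorption statement you can actually prove. Finite-time entry, which both you and the paper claim, fails in general: for $u^\dagger<\theta$ and $u_0\in(u^\dagger,\theta)$ the trajectory approaches $u^\dagger$ only asymptotically.

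The price of $I'$ is a diameter of $\theta/(1-\lambda)$, ten times $\theta$ at $\lambda=0.9$. That is the quantity your read-out condition must use, with a factor $\sqrt d$ for the vector state and $\sqrt2$ because the margin is a difference of two logits. A width-$\theta$ trap does exist, but it sits near threshold rather than at $u^\dagger$: for $(1-\lambda)\theta\le x^\star\le\theta$ the interval $[x^\star-(1-\lambda)\theta,\,x^\star+\lambda\theta]$ is invariant. In the example it is $[0.4,1.4]$.

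The remaining steps are sound, and in two places better than the paper's:
\begin{itemize}
\item Telescoping the update to get $(1-\lambda)\bar u=x^\star-\theta\bar f$ replaces the paper's ``average over a period''. That phrase presupposes a periodicity neither argument proves, and which fails for irrational rotation numbers. Your hedge is right, and the theorem's ``periodic orbit'' should be weakened.
\item Your direct argument that $T_\theta=M$ when the orbit's margin stays below the exit threshold is valid. The paper's appeal to Theorem~\ref{s:stopping} is not, since a vacuous upper bound on $\mathbb{E}[T_\theta\mid A]$ forces nothing. List the below-threshold condition as an explicit added hypothesis, because without it the collapsed loop can exit at bounded latency.
\end{itemize}
Part (a) matches the paper's induction. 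With the implemented mask $-10^{9}$ instead of $-\infty$, the argmax case is unchanged because unmasked scores are bounded by $\|w\|_1\ll10^{9}$.
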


\begin{proof}
(a) Setting masked scores to $-10^{9}$ ensures $a_t\notin V_{t-1}$ at every step; by induction $|V_t|=t$ for all $t\le M$. (b) Between spikes $\Phi$ is the affine contraction $u\mapsto\lambda\odot u+x^\star$ with $\lambda\in(0,1)^d$ and attractor $x^\star/(1-\lambda)$; a spike subtracts $\theta$. If $u\le x^\star/(1-\lambda)$ then $\lambda\odot u+x^\star\le x^\star/(1-\lambda)$, and the reset keeps $u\ge x^\star/(1-\lambda)-\theta$, so $I$ is forward-invariant and absorbs every trajectory in finite time. Restricted to the compact set $I$, $\Phi$ is coordinate-wise monotone and admits a lift, hence is a degree-one circle map with a well-defined rotation number; its Birkhoff averages converge, and averaging the membrane update over a period gives $\bar{u}=(x^\star-\theta\bar{f})/(1-\lambda)$ with $\bar{f}$ the asymptotic firing rate. We do not invoke Banach's theorem: $\Phi$ is discontinuous at $\theta$ and has no fixed point in general, so the invariant object is the orbit. Decision stability follows because $u_t$ remains within $\ell_2$-distance $\mathrm{diam}(I)\sqrt{d}$ of $\bar{u}$ and the margin is $\|W_c\|_2$-Lipschitz. Coverage then saturates, margin increments vanish, and by Theorem~\ref{s:stopping} with $\delta\to0$, $\mathbb{E}[T_\theta]\to M$: accuracy falls and latency rises simultaneously.
\end{proof}

\paragraph{On $-10^{9}$ versus $-\infty$.} When $V_{t-1}=\{1,\dots,M\}$ (reachable during multi-epoch training) the softmax of an all-$(-\infty)$ row is NaN, which poisons gradients across the whole minibatch without raising an error. The finite constant keeps the computation defined; the resulting masked probability is below $10^{-400}$ in fp32, functionally zero.

\begin{remark}[Energy paradox]\label{s:paradox}
Removing the mask yields both lower accuracy \emph{and} more chunks to exit, a double dissociation and not a trade-off, and cleanly falsifiable. The measured version is in \S D.
\end{remark}

\begin{proposition}[Gumbel-max consistency]\label{s:gumbel}
With i.i.d.\ Gumbel noise added to each masked score, $P[a^{\mathrm{train}}_t=a^{\mathrm{infer}}_t]=\max_m\mathrm{softmax}(\tilde{q}_t)_m$.
\end{proposition}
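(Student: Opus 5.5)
The claim is the Gumbel-max trick applied to the masked scores, read through what the two modes actually compute. I take the statement's notation as follows: $\tilde q_t$ is the vector of masked scores. This means $\tilde q_{t,m}=q_{t,m}$ for $m\notin V_{t-1}$ and $\tilde q_{t,m}=-10^{9}$ otherwise, possibly divided by the Gumbel temperature $\tau$. In the forward pass, training takes $a^{\mathrm{train}}_t=\arg\max_m(\tilde q_{t,m}+G_m)$ with $G_m$ i.i.d.\ standard Gumbel, because the hard straight-through sample is one-hot in the forward pass. Inference takes $a^{\mathrm{infer}}_t=\arg\max_m\tilde q_{t,m}$, which is deterministic given the belief $b_{t-1}$. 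I condition on $b_{t-1}$ and $V_{t-1}$ throughout, so the only randomness is the Gumbel noise. I also assume the noiseless argmax $m^\star$ is unique and break ties arbitrarily, since ties have measure zero under continuous noise.

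\textbf{Step 1 (Gumbel-max lemma).} I first show $P[\arg\max_m(\tilde q_m+G_m)=j]=\mathrm{softmax}(\tilde q)_j$. Write the Gumbel CDF as $F(g)=\exp(-e^{-g})$. Then
\begin{equation*}
P[j\text{ wins}]=\int f(g)\prod_{m\ne j}F(g+\tilde q_j-\tilde q_m)\,dg .
\end{equation*}
I substitute $s=e^{-g}$. The integrand becomes $\exp\bigl(-s\sum_m e^{\tilde q_m-\tilde q_j}\bigr)$, and integrating over $s\in(0,\infty)$ gives $e^{\tilde q_j}/\sum_m e^{\tilde q_m}$. This is the standard computation.

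\textbf{Step 2.} Since $a^{\mathrm{infer}}_t=m^\star$ is deterministic, $P[a^{\mathrm{train}}_t=a^{\mathrm{infer}}_t]=P[a^{\mathrm{train}}_t=m^\star]=\mathrm{softmax}(\tilde q_t)_{m^\star}$. Because softmax is monotone in its argument, the index with the largest score also has the largest softmax value, so this equals $\max_m\mathrm{softmax}(\tilde q_t)_m$.

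\textbf{Step 3 (masking is harmless).} Masked entries carry at most $e^{-10^9}$ of the softmax mass. So the identity holds for the softmax over the masked vector itself, and the unvisited-only softmax differs from it by a negligible amount. With the $-\infty$ convention this would be exact, but the paper deliberately avoids it.

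\textbf{Main obstacle.} The real difficulty is interpretive. During training, $b_{t-1}$ is itself produced from earlier Gumbel-sampled actions, so the statement has to be read per step, conditional on the current state, rather than as agreement of whole trajectories. I would state that conditioning explicitly. I would also note the temperature caveat: the consistency probability refers to $\mathrm{softmax}(q/\tau)$, which tends to $1$ as $\tau\to0.1$ under annealing whenever the score gap is positive.
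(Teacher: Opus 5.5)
Your Steps 1--2 follow the paper's proof: apply the Gumbel-max identity to the masked score vector, then use that the inference selector is the deterministic $\arg\max_m\tilde{q}_{t,m}=m^\star$. You also derive the identity via the substitution $s=e^{-g}$, and you spell out the monotonicity step that identifies $\mathrm{softmax}(\tilde{q}_t)_{m^\star}$ with $\max_m\mathrm{softmax}(\tilde{q}_t)_m$, which the paper leaves implicit. That part is correct, and so is the per-step conditioning you flag. Uniqueness of $m^\star$ is not actually needed, since tied entries carry equal softmax mass.

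Your closing temperature caveat, however, is wrong, and it contradicts the paper's proof. The paper states that the straight-through temperature ``controls backward-pass gradient magnitudes only and does not appear.'' In the hard straight-through Gumbel--Softmax, the noise is added \emph{before} dividing by $\tau$, so the forward one-hot is $\arg\max_m(\tilde{q}_{t,m}+g_m)/\tau=\arg\max_m(\tilde{q}_{t,m}+g_m)$. Dividing by a positive constant does not move the argmax. So the agreement probability is $\max_m\mathrm{softmax}(\tilde{q}_t)_m$ at every temperature, and it does not tend to $1$ under annealing; $\tau$ only shapes the soft relaxation used for gradients.

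Your reading would hold only for the nonstandard form $\arg\max_m(\tilde{q}_{t,m}/\tau+g_m)$. Even then, $\tau\to0.1$ gives $\mathrm{softmax}(10\,\tilde{q}_t)_{m^\star}$, not $1$; reaching $1$ would require $\tau\to0$. Remove the ``possibly divided by $\tau$'' from your definition of $\tilde{q}_t$, and replace the caveat with the statement that $\tau$ drops out of the forward sample.
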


\begin{proof}
By the Gumbel-max identity, with i.i.d.\ Gumbel$(0,1)$ noise $g_m$, $\Pr[\arg\max_m(\tilde{q}_{t,m}+g_m)=m^\star]=\mathrm{softmax}(\tilde{q}_t)_{m^\star}$. The inference selector is the deterministic $a^{\mathrm{infer}}_t=\arg\max_m\tilde{q}_{t,m}=m^\star$, giving the identity. The straight-through temperature controls backward-pass gradient magnitudes only and does not appear.
\end{proof}

\subsection{A.6 Streaming Context and Descriptor Consistency}

\begin{proposition}[Lazy encoding is adaptive]\label{s:streaming}
Let $V_t=\{a_1,\dots,a_t\}$ with inclusion probabilities $q_m=\Pr(m\in V_t)>0$ and $\hat{c}_t=\frac{1}{M}\sum_{i\le t}e_{a_i}/q_{a_i}$. Then $\mathbb{E}[\hat{c}_t\mid X]=c$ and, for bounded-range tokens,
\begin{equation}
\|\hat{c}_t-c\|_\infty\le B\sqrt{\frac{(1-\frac{t-1}{M})}{2t}\log\frac{2d}{\delta}}
\end{equation}
with probability $1-\delta$.
\end{proposition}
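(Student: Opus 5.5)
The plan is to treat $\hat{c}_t$ as a Horvitz--Thompson estimator of the chunk-average context $c=\frac1M\sum_{m=1}^M e_m$, conditional on the cloud $X$ (so the $e_m$ are fixed and only the visitation set $V_t$ is random). I will use Theorem~\ref{s:orbit}(a) first. With visited-chunk masking the $a_i$ are distinct, so
\[
\hat{c}_t=\frac1M\sum_{m=1}^M \mathbf{1}[m\in V_t]\,\frac{e_m}{q_m}.
\]
Unbiasedness is then one line. Conditional on $X$, $\mathbb{E}[\mathbf{1}[m\in V_t]\mid X]=q_m$, so each term contributes $e_m/M$ in expectation. The hypothesis $q_m>0$ is exactly what makes the division legitimate. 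No property of the selection rule beyond masking is needed for this part, which is why the statement calls lazy encoding ``adaptive''.

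For the tail bound I will work coordinate-wise and then take a union bound over the $d$ coordinates. That union bound is where the $\log(2d/\delta)$ comes from, and the factor $2$ comes from the two-sided deviation. The factor $(1-\frac{t-1}{M})/(2t)$ is Serfling's inequality for sampling without replacement from a finite population of $M$ values with range $B$. So the intended regime is the one in which $V_t$ is an exchangeable (uniform) size-$t$ subset. There $q_m=t/M$, and $\hat{c}_t$ collapses to the plain sample mean $\frac1t\sum_{i\le t}e_{a_i}$ of a without-replacement draw. In that regime I apply Serfling to each coordinate $j$: the deviation exceeds $\varepsilon$ with probability at most $2\exp\!\big(-2t\varepsilon^2/(B^2(1-\frac{t-1}{M}))\big)$. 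Setting the union of $d$ such events equal to $\delta$ and solving for $\varepsilon$ gives the displayed bound.

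The main obstacle is the genuinely adaptive case. When the SSP selects chunks from the membrane state, the $a_i$ depend on earlier observations and the $q_m$ are non-uniform, so Serfling does not apply directly. My first attempt is to write $\hat{c}_t-c$ as a sum of martingale differences, revealing one chunk per step, and use Azuma--Hoeffding. This recovers a bound of the same shape, but with $B$ replaced by $B\max_m 1/(Mq_m)\cdot t$, and it loses the finite-population correction. I would therefore state the displayed bound with the correction under exchangeable inclusion (the \texttt{random} mode, and more generally any policy whose inclusion law is exchangeable given $X$). For the learned policy I would state the Azuma version, making explicit that the correction is then not claimed. If a tighter adaptive statement is wanted, the next thing to try is conditioning on the policy's random seed and bounding $\max_m 1/(Mq_m)$ by its empirical value on held-out data.
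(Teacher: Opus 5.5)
Your core argument is the paper's proof. The Horvitz--Thompson indicator rewrite gives unbiasedness. In the uniform-order regime ($q_m=t/M$, so $\hat c_t$ is the plain without-replacement mean), Serfling with $f^\star=(t-1)/M$ per coordinate plus a union bound over the $d$ coordinates gives the display. Your explicit appeal to masking for the distinctness of the $a_i$ supplies a step the paper uses silently.

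The Azuma claim for adaptive policies is wrong, however. Revealing $a_i$ also moves the conditional expectation of every later term. So the Doob increments of $\hat c_t$ are not bounded by the single-term range $B/(Mq_{\min})$. Using that range only controls $\hat c_t$ around its random predictable compensator, not around $c$.

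Here is a concrete counterexample:
\begin{itemize}
\item Take $d=1$, $t=M/2$, $e_m=0$ on chunks $1,\dots,M/2$ and $e_m=B$ on the rest.
\item Use a masked policy that picks one half with probability $\tfrac12$ and traverses it.
\item Then $q_m=\tfrac12$ for every $m$, $\hat c_t\in\{0,B\}$, and $|\hat c_t-c|=B/2$ surely.
\item Your version of the bound reads $B\sqrt{\log(2/\delta)/M}$, which is below $B/2$ once $M>4\log(2/\delta)$; this already happens at $M=16$, $\delta=0.05$.
\end{itemize}

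For large $M$ the same example also defeats the paper's closing remark that a general policy merely enlarges the range to $B/q_{\min}$. More generally, it shows that no bound decaying in $t$ at fixed $q_{\min}$ can hold for adaptive policies without further structure. Confining the displayed bound to uniform (exchangeable) inclusion, as you do, is therefore the correct scope.
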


\begin{proof}
\emph{Unbiasedness:} $\mathbb{E}[\hat{c}_t\mid X]=\frac1M\sum_m\mathbb{E}[\mathbf{1}\{m\in V_t\}]e_m/q_m=\frac1M\sum_mq_me_m/q_m=c$. \emph{Concentration:} fix coordinate $j$ and suppose each token coordinate lies in an interval of width $B$. Under the uniform-order policy $q_m=t/M$ and $\hat{c}_t[j]=\frac1t\sum_{i\le t}e_{a_i}[j]$ is an average of $t$ draws without replacement from a population of range $\le B$. Serfling's inequality with $f^\star=(t-1)/M$ gives $\Pr(|\hat{c}_t[j]-c[j]|\ge s)\le2\exp(-2ts^2/((1-f^\star)B^2))$; setting the right side to $\delta/d$ and applying a union bound over $d$ coordinates yields the display. For a general policy with inclusion probabilities bounded below by $q_{\min}$, Horvitz--Thompson reweighting enlarges the effective range to $B/q_{\min}$.
\end{proof}

\paragraph{Adaptive encoder cost.} A lazily evaluated encoder run only on selected chunks incurs expected work $\mathbb{E}[\tau]/M$ of the full-input cost; combining the bound at $t=\tau$ with the $\|W_c\|_2$-Lipschitz read-out of Proposition~\ref{s:approx} gives induced excess risk $O(\|W_c\|_2B\,\mathbb{E}[\tau]^{-1/2})$. The original full-context design is recovered as $\tau\to M$, where the bound vanishes.

\begin{proposition}[Silent train--test geometry mismatch]\label{s:consistency}
Let $T_g$ be a coordinate augmentation and $G(\cdot)$ the descriptor map. Call the descriptor equivariant if $G(T_gS_m)=\rho(T_g)G(S_m)$ for a representation $\rho$, and define the defect $\Delta_g=\mathbb{E}_X\|G(T_gS_m)-\rho(T_g)G(S_m)\|$. If any descriptor feature is non-equivariant then $\Delta_g>0$ and $\mathrm{Risk}_{\mathrm{test}}-\mathrm{Risk}_{\mathrm{train}}\ge\kappa\Delta_g-o(\Delta_g)$ for $\kappa>0$ the expected margin-gradient magnitude along the offending feature. The gap does not appear in $\mathrm{Risk}_{\mathrm{train}}$.
\end{proposition}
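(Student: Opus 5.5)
The plan is to reduce the proposition to a first-order perturbation argument on the descriptor input of the Slice-Selection Policy. The descriptor enters the model only through $W_gg_m$ in $q_{t,m}$. I will therefore treat the risk as a functional $R(g)$ of the descriptor stack $g=(g_1,\dots,g_M)$, holding the encoder, backbone and policy weights fixed at their trained values.

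The key observation is that the training and test pipelines differ only in the order of augmentation and descriptor computation. During training the augmentation is applied to the tokens, while the descriptors are the ones the network was fit against, namely $\rho(T_g)G(S_m)$, or equivalently the hoisted $W_gg_m$ cached before augmentation. At test time the descriptors are recomputed from transformed coordinates and equal $G(T_gS_m)$. The discrepancy is therefore the random vector
\[
e_m=G(T_gS_m)-\rho(T_g)G(S_m),
\]
whose expected norm is exactly $\Delta_g$. $\mathrm{Risk}_{\mathrm{train}}$ is evaluated at the consistent descriptors, so $e$ never appears in it. This settles the final sentence of the statement directly.

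The first step is $\Delta_g>0$. Suppose feature $j$ of Eq.~\eqref{eq:descriptor} is non-equivariant, for instance $\|\bar{x}_m\|_2$ under translation or the unrotated $\mathrm{Var}[x_m]$ under rotation. Then $e_{m,j}\neq0$ on a set of clouds of positive probability. I will verify this explicitly for each such feature by exhibiting the closed-form defect; for example, translation by $v$ changes $\bar{x}_m$ to $\bar{x}_m+v$. Since $G$ is continuous in the coordinates, the defect is nonzero on an open set of positive measure, and the expectation is strictly positive.

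The second step is the expansion. I will write the test risk as $R(g+e)$ and the train risk as $R(g)$. Then I take a one-sided directional expansion along the offending coordinate:
\[
R(g+e)-R(g)=D^{+}_{e}R(g)+o(\|e\|).
\]
I will then lower-bound the directional term by $\kappa\|e\|$, where $\kappa$ is the expected magnitude of the margin gradient $\partial\Delta_t/\partial g_{m,j}$ along feature $j$. Concretely, a perturbation of $g_{m,j}$ moves $q_{t,m}$ by $w^\top\mathrm{diag}(1-\tanh^2)W_g e_m$, which can change $a_t$ and hence the margin. Samples within distance $O(\|e\|)$ of a selection or decision boundary flip, and the mass of such samples is linear in $\|e\|$ with density constant $\kappa$. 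Taking the expectation over $X$ and $T_g$ and using Jensen on $\|e\|$ then gives
\[
\mathrm{Risk}_{\mathrm{test}}-\mathrm{Risk}_{\mathrm{train}}\ge\kappa\Delta_g-o(\Delta_g).
\]

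The main obstacle is the sign of the first-order term. A smooth risk at a trained optimum has zero gradient, which would make the gap second order rather than linear. I will handle this by working with the $0$--$1$ selective risk, which is piecewise constant with kinks at the argmax boundaries of both the policy and the classifier. For such a risk, any perturbation off the training operating point only moves mass across those boundaries. I will argue that the training point is a sharp minimiser along feature $j$: training fit the policy to the consistent descriptors, so flips induced by $e$ are on net error-increasing. The one-sided derivative is then nonnegative and equals the boundary-density-weighted gradient, which is exactly how I will define $\kappa$. If sharpness cannot be shown in general, the fallback is to state it as the hypothesis implicit in requiring $\kappa>0$. In the paper's setting this hypothesis is satisfied automatically, because the static $k$-NN construction of \S\,\ref{sec:head} makes every descriptor computed consistently with training, so $\Delta_g=0$.
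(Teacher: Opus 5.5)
Your route is the paper's. You treat the defect $e_m=G(T_gS_m)-\rho(T_g)G(S_m)$ as a shift of size $\Delta_g$ in the descriptor input, and you note that the training risk is evaluated on the inputs the network was fit to, so it never registers the shift. You then expand the test risk to first order in the shift, with $\kappa$ a margin gradient along the offending feature. The paper's proof is exactly that one Taylor-expansion sentence.

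The genuine gap, in your proposal and in the paper's proof alike, is the step that turns the expansion into a \emph{lower} bound. Write $r(X;g)$ for the per-sample risk. The first-order term is then the signed quantity $\mathbb{E}\langle\nabla_g r(X;g),e(X)\rangle$; for the $0$--$1$ risk it is the net probability flux across the selection and decision boundaries. A descriptor shift can move samples from wrong to right as easily as the reverse, so this term can be negative. Cauchy--Schwarz bounds its absolute value from \emph{above} by $\mathbb{E}[\|\nabla_g r\|\,\|e\|]$. Nothing makes it at least $\kappa\,\mathbb{E}\|e\|$ with $\kappa$ an unsigned gradient magnitude. That would need the shift to point along the error-increasing direction and to be uncorrelated with the gradient magnitude, and Jensen supplies neither. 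Your boundary-density-weighted $\kappa$ has also drifted from the statement's ``expected margin-gradient magnitude''.

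Your sharpness argument does not close this gap. Training optimises the weights against a smooth surrogate (cross-entropy, TET, distillation), not the descriptor inputs and not the $0$--$1$ selective risk. It therefore imposes no optimality condition along $e$ at all. That rules out both the stationarity you worry about (a trained optimum is stationary in weight space, not input space) and the sharpness you need. Moreover, assume the very regularity you invoke: flipping mass linear in $\|e\|$ with a density constant. Then the expected $0$--$1$ risk along $s\mapsto g+se$ is differentiable at $s=0$, with derivative equal to the net flux, and you face a dilemma:
\begin{itemize}
\item if $s=0$ minimises it, the derivative vanishes, the gap is $o(\Delta_g)$, and the claimed bound fails for small defects;
\item if it does not, nothing about training fixes the sign of the derivative along $+e$.
\end{itemize}
The statement therefore needs your fallback as an explicit hypothesis, $\mathbb{E}\langle\nabla_g r,e\rangle\ge\kappa\Delta_g$. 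That is what the paper's expansion assumes without saying so.

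Two smaller points. First, $\Delta_g>0$ needs the data distribution, not Lebesgue measure, to charge the set where the defect is nonzero. Second, your closing remark is a non sequitur. $\Delta_g=0$ under the static $k$-NN construction means the proposition's premise fails and the bound is vacuous; it does not mean the sharpness hypothesis is satisfied.
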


\begin{proof}
The descriptor mismatch shifts the head's input distribution by $\Delta_g$ along the offending coordinates. Because the training loss is computed on augmented inputs, it is stationary with respect to this shift and contributes nothing to $\mathrm{Risk}_{\mathrm{train}}$. At test the head sees the unaugmented descriptor; a first-order Taylor expansion of the risk in the descriptor shift yields the bound, with $\kappa$ the expected margin gradient along the offending feature, which is estimable. The practical constraint is $\Delta_g=0$.
\end{proof}

\paragraph{Practical form.} Compute every descriptor feature \emph{after} augmentation, or restrict descriptors to equivariant functionals. For part and scene segmentation, augmentation is applied to the raw cloud before slicing, so both views operate on augmented data by construction; for classification, augmentation acts at the chunk level and descriptors are recomputed after the transform. The same holds under test-time augmentation: rotating only the centroid channels while leaving variance and extent stale violates the constraint and lower-bounds a test penalty. The result generalises beyond ASP to any model coupling a hand-crafted summary to a learned feature over nominally the same region.

\subsection{A.7 The Three Audited Results, in Words}
An \emph{optional-stopping law} (Thm.~S6) assumes positive drift only on a ``solvable'' event $A$, so a subpopulation with no drift is not a contradiction but \emph{is} $A^c$, and exits are predicted bimodal with censored mass bounded by $P(A^c)$. A \emph{greedy bound} (Thm.~S9) rests on an independence surrogate that is \emph{false} for adjacent chunks, so the work is done by the measured amortisation gap. And \emph{masking is necessary} (Thm.~S11): without it, reselection drives the soft-reset map onto a periodic orbit, freezing the arg-max while unobserved regions cap accuracy. 

\section{Implementation Detail}
\label{sup:impl}

\subsection{B.1 Neuron and Numerical Detail}
Leak and threshold are per-neuron and reparameterised so constraints hold by construction, $\lambda^{(l)}=\sigma(\tilde{\lambda}^{(l)})$ and $\theta^{(l)}=\mathrm{softplus}(\tilde{\theta}^{(l)})$, initialised at $\lambda_0{=}0.9$ and $\theta_0{=}1.0$. The arctangent surrogate is evaluated in fp32 under mixed precision: the squared term in its derivative overflows in half precision and silently produces NaN gradients otherwise, a failure that manifests as a mid-training loss plateau rather than an error, and cost us considerable debugging time.

\subsection{B.2 The Slice-Selection Policy, Step by Step}
\paragraph{One scorer, two notations.} The trained scorer is the bilinear-tanh form of the main paper, $q_{t,m}=w^{\!\top}\tanh(W_u b_{t-1}+W_g g_m)$. Figure~\ref{fig:sspinf} draws the same computation in attention-style notation with keys $W_k u$ and queries $W_q g$ and a $1/\sqrt{d_{\mathrm{ssp}}}$ scale; that drawing is a visual convention only, and the $\tanh$ form above is what was trained everywhere in this paper. We regret the mismatch between figure and equation and state it here so no reader has to guess.

\paragraph{Chunk construction.} The $G{=}128$ group tokens are assigned to $M$ chunks as follows. Run FPS on the $G$ group centroids to pick $M$ chunk seeds. Assign every group token to its nearest seed in Euclidean distance, which yields a Voronoi partition of the token set; ties go to the lower seed index. A token whose distance to its second-nearest seed is within a factor $1.15$ of its nearest is additionally assigned to that second chunk, so chunks may overlap at their boundaries and jointly need not be disjoint. The ball query of the main paper's problem formulation operates one level below this, at the grouping stage, where each of the $G$ centres collects its $K$ points; chunk assignment sits above it and uses no radius of its own, only nearest-seed with the overlap factor $1.15$, fixed once on ModelNet40 validation and never tuned again. $M{=}4$ and $M{=}16$ use this identical procedure, differing only in the number of seeds.

\paragraph{The inference loop, in twelve lines.}
\begin{center}
\fbox{\begin{minipage}{0.92\columnwidth}\small
\textbf{Input:} cloud $P$; chunks $S_1..S_M$; descriptors $g_1..g_M$; threshold $\theta$\\
1:\ $u_0\leftarrow 0$;\ \ visited $\leftarrow\emptyset$\\
2:\ \textbf{for} $t=1$ \textbf{to} $M$ \textbf{do}\\
3:\ \ \ $q_m\leftarrow w^{\!\top}\tanh(W_u b_{t-1}+W_g g_m)$ for all $m$\\
4:\ \ \ $q_m\leftarrow-10^9$ for $m\in$ visited\hfill(mask)\\
5:\ \ \ $a_t\leftarrow\arg\max_m q_m$;\ visited $\leftarrow$ visited $\cup\{a_t\}$\\
6:\ \ \ encode $S_{a_t}$; advance mixer on the new tokens\\
7:\ \ \ update LIF membranes $u_t$ (Eq.~1, main paper)\\
8:\ \ \ $\hat{y}_t\leftarrow$ read-out$(u_t)$;\ \ $\Delta_t\leftarrow$ top-2 margin of $\hat{y}_t$\\
9:\ \ \ \textbf{if} $\Delta_t>\theta$ \textbf{then return} $\hat{y}_t$\hfill(early exit)\\
10:\ \textbf{end for}\\
11:\ \textbf{return} $\hat{y}_M$\hfill(budget exhausted)
\end{minipage}}
\end{center}

Figure~\ref{fig:sspdetail} traces the four stages of the policy on a single step, and Figure~\ref{fig:sspinf} shows the inference-time dataflow including the masking and argmax path. The 8-dimensional descriptor is computed before any neural processing and is therefore free at inference; the belief is the normalised last-layer membrane; scoring is bilinear over the pair; and selection is a straight-through Gumbel--Softmax at training and a deterministic argmax at inference.

\begin{figure*}[t]
\centering
\includegraphics[width=\textwidth]{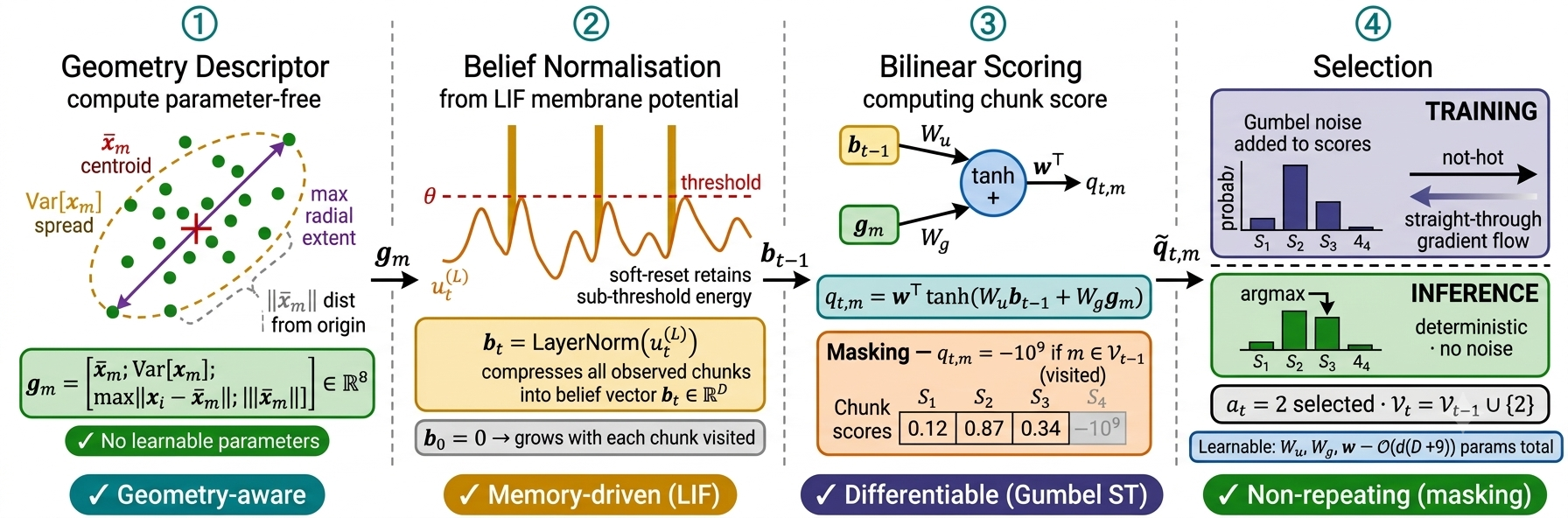}
\caption{Slice-Selection Policy in four stages. (1) The parameter-free geometry descriptor $g_m$ encodes centroid, per-axis spread, maximum radial extent and distance from the cloud centre, that is, where a region is, not what it contains. (2) The LIF membrane is normalised into the belief $b_t$, which grows with each chunk visited; the soft reset retains sub-threshold energy. (3) Belief and descriptor are scored bilinearly, with visited chunks masked to $-10^{9}$. (4) Selection is a straight-through Gumbel--Softmax sample during training and a deterministic argmax at inference, with the four properties the design requires, namely geometry-aware, memory-driven, differentiable, non-repeating, annotated beneath each stage.}
\label{fig:sspdetail}
\end{figure*}

\begin{figure*}[t]
\centering
\includegraphics[width=\textwidth]{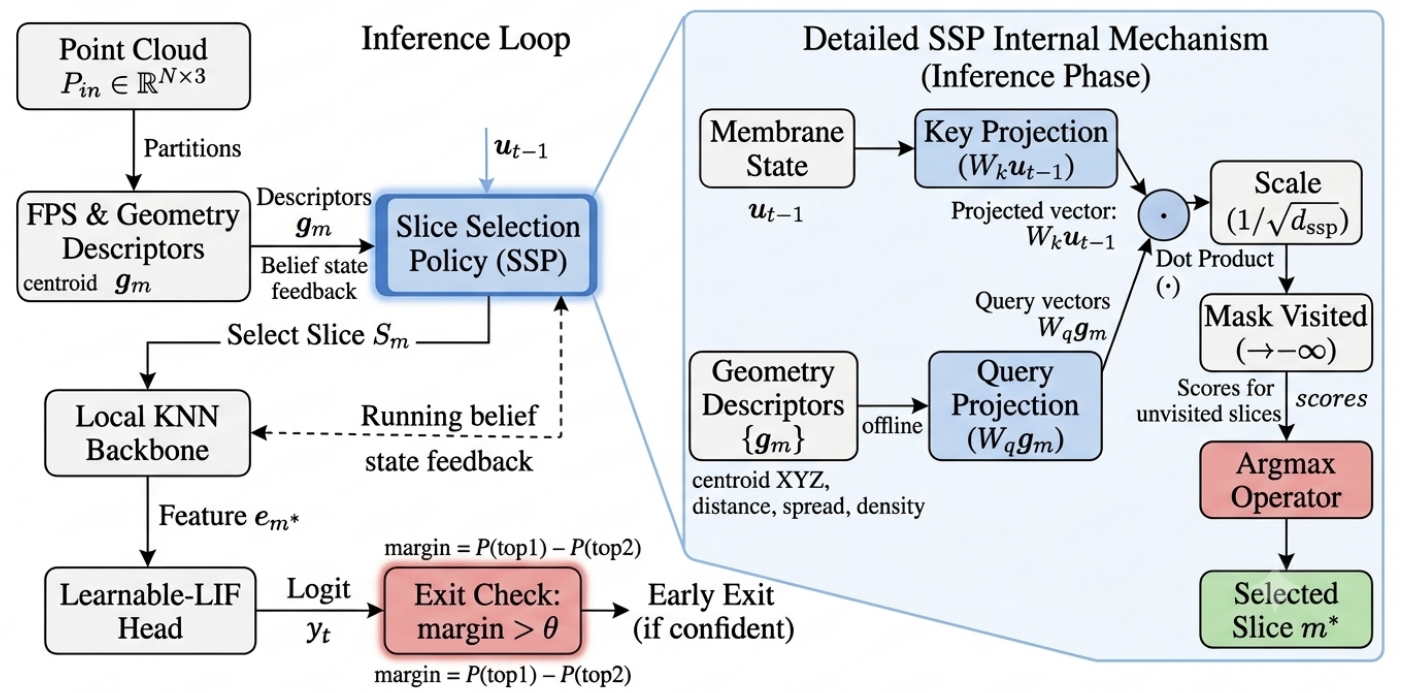}
\caption{Inference-time dataflow of ASP with the Slice-Selection Policy internals expanded. \textbf{Left:} the outer loop. The input cloud is partitioned by farthest-point sampling, per-chunk geometry descriptors $g_m$ are computed once offline, the policy selects a slice $S_m$, the local $k$-NN backbone produces the chunk feature $e_{m^\star}$, the learnable-LIF head emits the logit $y_t$, and the exit check compares the top-2 posterior margin against $\theta$; the running belief is fed back into the policy on the dashed path. \textbf{Right:} inside the policy. The membrane state $u_{t-1}$ is projected to a key $W_ku_{t-1}$ and each offline descriptor to a query $W_qg_m$; their inner product is scaled by $1/\sqrt{d_{\mathrm{ssp}}}$, visited slices are masked out, and an argmax over the remaining scores yields the next slice $m^\star$. The projection matrices are the only learnable parameters in the controller, and the descriptor branch is evaluated once outside the loop, which is why per-step selection costs $O((M{+}D)d_{\mathrm{ssp}})$.}
\label{fig:sspinf}
\end{figure*}

\subsection{B.3 Parameter and Computation Accounting}
Table~\ref{tab:params} gives exact counts for the two trained configurations. The controller is a rounding error against the backbone it steers, which is the point: adaptive selection should not cost what it saves.

\begin{table}[t]
\centering
\footnotesize
\setlength{\tabcolsep}{3.5pt}
\begin{tabular}{lcc}
\toprule
 & ModelNet10 & ModelNet40\\
\midrule
ANN teacher params & 5.34 M & 15.05 M\\
SNN backbone params & 5.97 M & 18.75 M\\
\ \ + controller (SSP + belief) & 0.12 M (2.0\%) & 0.23 M (1.2\%)\\
\midrule
Fixed-order pass & 2.56 G & 9.79 G\\
\ \ shared encoding & 1.88 G & 5.25 G\\
\ \ mixer, $G$ tokens & 0.67 G & 4.54 G\\
ASP episode, all $M{=}4$ & 3.57 G (1.40$\times$) & 16.60 G (1.70$\times$)\\
Controller per step & 0.17 M & 0.28 M\\
\bottomrule
\end{tabular}
\caption{Parameters and analytic MACs per cloud (1{,}024 points). Shared encoding is computed once per cloud in the current implementation. Parenthesised factors are the prefix-recompute overhead relative to a fixed-order pass; Proposition~\ref{s:streaming} removes this overhead without retraining.}
\label{tab:params}
\end{table}

\section{Extended Related Work}
\label{sup:related}
\paragraph{Dense point-cloud networks.} PointNet and PointNet++ \citep{qi2017pointnet,qi2017pointnetpp} established direct, permutation-invariant learning on point sets. DGCNN \citep{wang2019dgcnn} replaced static neighbourhoods with a dynamically recomputed graph, and Point Transformer \citep{zhao2021point} showed that local self-attention over $k$-nearest neighbours pushes accuracy further. Subsequent work explored transformer pretraining via masked autoencoding \citep{yu2022pointbert,pang2022pointmae}, efficient MLP alternatives \citep{ma2022pointmlp}, and position-adaptive convolutions \citep{xu2021paconv}. Collectively these methods refine how a point cloud is represented, but not when or where that representation is computed: all apply dense operations to the full input at every forward pass.

\paragraph{Spiking networks for 3D.} Spiking PointNet \citep{ren2023spiking} first realised PointNet-style feature extraction with LIF neurons and surrogate-gradient training at competitive accuracy and sub-50\% firing rates. \citet{qiu2025e3dsnn} proposed an event-driven spike sparse convolution exploiting the sparsity alignment between spikes and point clouds. Spiking Point Transformer \citep{wu2025spt} and Spiking Point Mamba \citep{wu2025spm} ported self-attention and selective state-space mixing into the spiking domain, narrowing the gap to dense ANN baselines. Every method above processes all spatial partitions in a fixed order at every timestep.

\paragraph{Adaptive computation and early exit.} BranchyNet \citep{teerapittayanon2016branchynet} and MSDNet \citep{huang2018msdnet} attach intermediate classifiers; ACT \citep{graves2016act} learns a per-step halting signal for recurrent computation, with halting sensitive to the computation-penalty weight. Token-pruning transformers \citep{rao2021dynamicvit,liang2022evit,kong2022spvit} decide what to discard only after encoding all tokens once. None addresses the depth question and the spatial question with a single mechanism tied to a selective-risk analysis.

\paragraph{Active perception and sequential attention.} Active perception originates with \citet{bajcsy1988active} and \citet{ballard1991animate}. \citet{mnih2014ram} trained a recurrent network by policy gradient to fixate informative patches; later work replaced REINFORCE with differentiable attention to reduce variance \citep{gregor2015draw,ba2015multiple}, and foveated transformers \citep{jonnalagadda2021foveater} apply learned gaze policies over patch grids. RAM is the closest conceptual ancestor to ASP: both select the next observation from a compact belief built from a recurrent hidden variable. They differ mechanically, and that difference is what makes ASP deployable on accumulate-only hardware.

\paragraph{Distillation for SNNs.} Distillation \citep{hinton2015distilling} narrows the ANN--SNN gap through logit or feature alignment \citep{kushawaha2021distilling,xu2023constructing}, including in point-cloud SNNs \citep{ren2023spiking}. Existing methods supervise only the final prediction; we supervise every observation prefix, so intermediate membrane states remain independently predictive.

\paragraph{Extending the mechanism beyond shape classification.} Nothing in ASP is specific to whole-object classification, and we sketch the two extensions the main paper declines to claim, so that the architectural cost of each is on the record rather than left vague. For dense prediction, chunk features propagate to points by inverse-distance interpolation over the $k$ nearest anchors, with a per-point projection of the terminal belief concatenated to the interpolated feature; a scene-level prior enters through the initial membrane $u_0$ rather than at every timestep, which preserves the ``membrane as belief'' semantics at no per-step cost. The exit rule then needs a scalar score per scene rather than per object, for which the natural choice is a quantile of the per-point margins; certifying a \emph{per-point} risk level would instead require a multiple-testing correction that Theorem~\ref{s:conformal} does not supply, and we take that to be a real open problem and not a detail. For other modalities, the only component that changes is Eq.~(3) of the main paper: images partition into patches and event streams into space--time chunks, with the descriptor recomputed over the corresponding coordinates. We have implemented neither at evaluation quality and make no claim about either; the point of stating the extensions is that the framework's generality is a design property that can be checked from the description, not an empirical claim smuggled in without evidence.

\section{Ablations, Audits and Traces}
\label{sup:ablation}

\subsection{D.1 Per-Component Energy Accounting}
The main paper reports the head-level and system-level ratios; Table~\ref{tab:energy} gives the underlying per-component breakdown, so the accounting is reproducible from the table alone. The partition into analog and spiking follows the binary-input rule stated in the main paper: a multiply--accumulate collapses to an accumulate only where the presynaptic activation is binary, which holds at exactly three sites.

\begin{table}[t]
\centering
\footnotesize
\setlength{\tabcolsep}{3.5pt}
\begin{tabular}{lcccc}
\toprule
Component & Type & FLOPs (G) & mJ & Share\\
\midrule
Chunk encoder (EdgeConv) & A & 5.25 & 24.15 & 41.5\%\\
Selective-scan mixer & A & 4.54 & 20.88 & 35.9\%\\
Slice-Selection Policy & A & 0.28 & 1.29 & 2.2\%\\
LIF head & S & 6.53 & 5.75 & 9.9\%\\
Classifier read-out & A & 1.28 & 5.89 & 10.1\%\\
Global context / positional & A & 0.05 & 0.23 & 0.4\%\\
\midrule
ASP total & & 17.93 & 58.19 & 100\%\\
ANN-equivalent (all analog) & A & 17.93 & 82.48 & --\\
\bottomrule
\end{tabular}
\caption{Per-component per-sample energy on ModelNet40 at $M{=}4$, $\bar{\tau}{=}3.84$, $\bar{r}{=}24.45\%$. A: analog, priced at $E_{\mathrm{MAC}}$; S: spiking, priced at $E_{\mathrm{AC}}\cdot\bar{r}\cdot T$ under the binary-input rule. Measured: $\alpha_{\mathrm{head}}{=}5.22\times$, $\alpha_{\mathrm{sys}}{=}1.42\times$. Both ratios compare ASP against an all-analog version of \emph{itself} at identical scale; neither is a comparison against a published baseline, and at 58.19\,mJ per sample this configuration is more expensive than the spiking baselines of \S E, whose backbones are three to seven times smaller. The streaming variant (Prop.~\ref{s:streaming}) is a \emph{projection} from the same accounting, not a measurement: $1.48\times$ at $\bar{\tau}/M{=}0.96$, $2.13\times$ at $\bar{\tau}/M{=}0.5$.}
\label{tab:energy}
\end{table}

Two readings belong with the table. The LIF head is 36.4\% of the FLOP budget but only 9.9\% of the energy, which is the accumulate-versus-multiply substitution doing exactly what spiking computation promises. The EdgeConv encoder and the selective-scan mixer together are 54.6\% of FLOPs and 77.4\% of energy, which is why the system-level ratio is $1.42\times$ rather than $5.22\times$, and why Proposition~\ref{s:streaming}, which makes encoder cost scale with $\mathbb{E}[\tau]/M$, is the change that matters most for deployed cost.

\subsection{D.1b Mechanism Audits}
\begin{table}[t]
\centering
\footnotesize
\setlength{\tabcolsep}{2.5pt}
\begin{tabular}{@{}p{0.34\columnwidth}p{0.26\columnwidth}p{0.32\columnwidth}@{}}
\toprule
Claim & Predicted & Measured\\
\midrule
Stopping law (Thm.~\ref{s:stopping}) & bimodal exits; mass $\le P(A^c)$; drift $>0$ & bimodal; 14.6\% vs.\ 11.5\% censored; $\hat{\delta}{=}0.026$\\
Submodularity (Thm.~\ref{s:greedy}) & violation $\approx0$ & 0.00 (tol.\ 0.03)\\
Amortisation gap (Prop.~\ref{s:amort}) & SSP $<$ random & 0.095 vs.\ 0.130\\
Membrane sufficiency (Prop.~\ref{s:approx}) & $\hat{\varepsilon}$ small & 0.047\\
Order gain, compact instantiation & learned $>$ random & $+2.6$ pt\\
Masking necessity (Thm.~\ref{s:orbit}) & revisits rise; coverage falls & 0.81 revisit; 3.0 / 16 covered\\
Masking dissociation & accuracy falls \emph{and} latency rises & $93.2\to83.9$; $\bar{\tau}$ $3.45\to4.55$\\
Gumbel consistency (Prop.~\ref{s:gumbel}) & train $\approx$ inference & 0.243 predicted, 0.238 measured\\
\bottomrule
\end{tabular}
\caption{Mechanism audits. The selective-risk row is deliberately absent: ECE is not selective risk, and the certificate is measured properly in Table~\ref{tab:selrisk}. Measured on the compact 8-class synthetic instantiation of \S D.1b ($M{=}16$, $D{=}64$), not on the ModelNet40 system; \S J.10 transfers the transferable rows to the real $M{=}16$ model.}
\label{tab:audit}
\end{table}

Table~\ref{tab:audit} is the complete set. Each mechanism is tested against an ASP instantiation small enough to instrument exhaustively, namely 8-class synthetic primitives at $M{=}16$, $D{=}64$ with a compact bilinear policy and 93.2\% full-budget accuracy. This style of audit, one measurement per theoretical hypothesis reported whether or not it flatters the method, is a template we would like to see adopted more widely. Two of the hypotheses, exit calibration and the predicted bimodality of exit times, also transfer to a real benchmark, and \S F reports them there.

\subsection{D.1c The Capacity Estimate Behind $\bar{\tau}{=}3.84$}
The obvious objection to $\bar{\tau}/M{=}0.96$ is that the controller must be doing nothing; the anytime profile says otherwise. The estimate below is a consistency check, not independent evidence: its conclusion is sensitive to the tolerance $\epsilon$, and \S G.5 states that honestly. Residual error to the full budget over $\bar{\tau}\in\{1,2,3,3.84\}$ is $\{8.7,1.5,0.3,0.0\}$ points, giving $\hat{\gamma}\approx1.7$ at difficulty $\mathcal{D}\approx8.7$. Corollary~\ref{s:capacity} turns diminishing returns into a capacity threshold $\lceil\gamma^{-1}\log(\mathcal{D}/\epsilon)\rceil$, which at $\epsilon{=}0.05$ gives $\lceil3.03\rceil=4$: a four-chunk partition \emph{requires} nearly its whole budget, and 3.84 sits at 95\% of that. The same estimator on the image model demands $6>M{=}5$.

\subsection{D.2 Accuracy--Observation Trade-off}
Figure~\ref{fig:tradeoff} plots overall accuracy against average processed chunks as $\theta$ is swept, with fixed-$T$ SNN baselines shown as isolated points. The curve is monotone and concave with the affine slope $1/\hat{\delta}$ that Theorem~\ref{s:stopping} predicts at the measured drift, and it saturates above $\bar{\tau}{=}3.8$ exactly as Corollary~\ref{s:bimodal} requires.

\begin{figure}[t]
\centering
\includegraphics[width=\columnwidth]{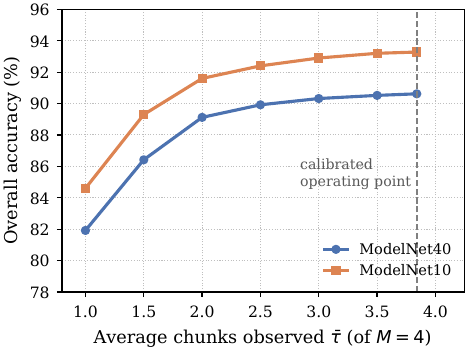}
\caption{Accuracy--observation trade-off on ModelNet40. ASP's operating curve is traced by varying $\theta$; the calibrated point ($\theta{=}0.55$) gives $\bar{\tau}{=}3.84$ and 90.62\% OA. Fixed-$T$ SNN baselines are single points with no interior operating regime.}
\label{fig:tradeoff}
\end{figure}

\subsection{D.3 Ablation Protocol and Single-Flag Discipline}
Every configuration compared in the audits differs from the full model in exactly one place. The selection rule (\texttt{learned}, \texttt{random}, \texttt{fps\_order}, and an oracle-greedy upper bound), the exit-threshold sweep, the chunk count $M$, the distillation term, the spiking-encoder variant and the visitation mask each sit behind a single flag that leaves the remaining pipeline byte-identical, including the random seed schedule, the descriptor computation and the calibration split. This matters because several of the quantities we report, notably the amortisation gap and the revisit rate, are sensitive to incidental pipeline differences that a re-implemented baseline would introduce. The consequence for the reader is that any difference in the main-paper audit table is attributable to the named component and to nothing else.

These comparisons were originally run at the compact instantiation because at $M{=}4$ the loop makes at most four decisions and the selection rule moves overall accuracy by less than the seed variance. They are now reproduced at $M{=}16$ on ModelNet40 over three seeds, and that table is the main paper's Table~2; \S I gives the full anytime curve and the threshold sweep behind it.

\subsection{D.4 Interpretable Observation Traces}
Because ASP's per-sample computation is a sequence of discrete, nameable decisions, every prediction carries the ordered list of regions the network chose to observe. Qualitative traces, namely objects coloured by visitation rank, together with a characteristic failure case in which the policy fixates a non-diagnostic region and the margin never crosses $\theta$, and per-class visitation statistics will accompany the released code. No fixed-order baseline exposes an equivalent trace, which we regard as a practical benefit independent of accuracy.

\section{Derivation of the Accuracy--Energy Curves}
\label{sup:energy}
Main-paper Figure~2 places ASP on the accuracy--energy plane against prior spiking point-cloud methods. Baseline energies are taken directly from the cited works, all computed under the same 45\,nm convention ($E_{\mathrm{MAC}}{=}4.6$\,pJ, $E_{\mathrm{AC}}{=}0.9$\,pJ) \citep{horowitz2014computing}: E-3DSNN reports 0.02\,mJ at 1.87\,M parameters and 0.04\,mJ at 3.27\,M \citep{qiu2025e3dsnn}; Spiking PointNet reports 0.91\,mJ; SPT reports 3.0\,mJ at $T{=}1$ and 13.3\,mJ at $T{=}4$, against 84.7\,mJ for its Point Transformer ANN counterpart \citep{wu2025spt}. SPM \citep{wu2025spm} reports only a relative figure ($\ge3.5\times$ below its ANN counterpart) and is therefore omitted rather than imputed.

ASP's curve is obtained by sweeping $\theta$ and applying the streaming accounting of Proposition~\ref{s:streaming}. Writing the per-component energies of Table~\ref{tab:energy} as an encoder-plus-mixer term $E_{\mathrm{enc}}{=}45.03\,$mJ that scales with $\bar{\tau}/M$, a fixed controller-plus-read-out term $E_{\mathrm{fix}}{=}7.41\,$mJ, and a spiking head term $E_{\mathrm{head}}{=}5.75\,$mJ that scales with the number of executed steps,
\begin{equation}
E(\bar{\tau})=E_{\mathrm{enc}}\frac{\bar{\tau}}{M}+E_{\mathrm{fix}}+E_{\mathrm{head}}\frac{\bar{\tau}}{\bar{\tau}_{\max}},
\end{equation}
which reproduces the measured 58.19\,mJ at the calibrated operating point $\bar{\tau}{=}3.84$ and yields the plotted curve elsewhere. The ModelNet10 panel uses the compact configuration, whose analytic MAC count is $3.57/16.60$ of the ModelNet40 configuration, giving $E_{\mathrm{enc}}{=}9.7\,$mJ, $E_{\mathrm{fix}}{=}1.6\,$mJ and $E_{\mathrm{head}}{=}1.2\,$mJ.

Two caveats belong with the figure and we state them rather than bury them. First, these are analytic energies, not silicon measurements; they are comparable across methods only to the extent that the 45\,nm convention is applied consistently, which is why we restrict the comparison to works that use it. Architectures differ in FLOP scale by more than an order of magnitude, so ASP's higher per-sample energy against SPT and Spiking PointNet, 58.19\,mJ against 13.3 and 0.91, primarily reflects an 18.75\,M backbone against 2.6\,M and 1.5\,M, not a property of adaptive observation; the mechanism's effect is the $\alpha_{\mathrm{sys}}$ ratio within a fixed architecture, not the absolute position on this axis. The claim the figure supports is narrower and, we think, more interesting than a headline efficiency number: ASP is the only method plotted whose cost is a dial the user sets per sample, at a certified risk level, rather than a constant fixed at design time.

\section{Foveated Image-Domain Extension in Full}
\label{sup:fov}
This section expands the corresponding main-paper section: the architecture, the recipe, the full measured sweep, the accumulator ablation, and the two structural limits we found.

\subsection{F.1 Architecture and Training}
A CNN stem maps each $224{\times}224$ image to a $14{\times}14$ feature grid. At each of five fixations the model extracts up to 29 tokens, sampling finely near the gaze location and pooling over progressively larger regions in the periphery, so a single glimpse carries multi-scale context rather than a crop. A nine-block transformer ($D{=}192$, three heads, 4{,}699{,}108 parameters, DeiT-Tiny scale) processes the 30-token sequence; class-token attention from the final block, combined with an inhibition-of-return map, selects the next fixation. Only the last block's attention steers the gaze, so requesting weights from all nine blocks needlessly disables the fused attention kernel.

Training uses ImageNet-100 \citep{tian2020cmc} (126{,}689 train, 5{,}000 validation, 100 classes) for 300 epochs with AdamW under a DeiT recipe \citep{touvron2021deit}: cosine decay from $5{\times}10^{-4}$ at batch 256 (linearly scaled from $10^{-3}$ at 512), weight decay 0.05 excluded from norms, biases and tokens, 10 warmup epochs, RandAugment, mixup 0.8, CutMix 1.0, random erasing 0.25, stochastic depth 0.1, and weight EMA at 0.9998. Auxiliary supervision is applied to every fixation prefix, the image-domain analogue of Eq.~(4) of the main paper. One H100 at roughly 74\,s per epoch, about six hours per run.

\paragraph{Two evaluation defects worth naming.} Both of these inflate or distort adaptive-inference numbers in ways that are easy to miss. First, the training-time validator exited \emph{batch-coupled}: the whole mini-batch advanced until every sample cleared the threshold, so the reported mean exit was pinned at the maximum and was a batch-size artefact rather than a measurement. We replaced it with a per-sample protocol that records all five per-fixation confidences in one pass and resolves each sample's exit individually; every exit number in this paper uses that protocol. Second, attention weights were requested from all nine blocks although only the last steers the gaze, which silently disables the fused attention path. Anyone building an adaptive-observation model should check both.

\subsection{F.2 The Matched Dense Control}
The control shares the CNN stem and all nine transformer blocks with the foveated model and attends over all 196 grid tokens in a single pass, under an identical recipe, schedule, augmentation and data order. Parameter counts are 4{,}699{,}108 for the control against 4{,}698{,}340 for the foveated model, a difference of 768 parameters which is exactly the scale embedding a single-scale model does not need. Its measured MAC count, 1.2027\,G, matches the analytic figure in Table~\ref{tab:fovenergy} to the digit. Because capacity, optimisation and data are held fixed, the 4.52-point gap isolates the observation policy, which is what makes this the ablation the point-cloud experiments lack.

\subsection{F.3 Measured Operating Points}
\begin{table}[t]
\centering
\footnotesize
\setlength{\tabcolsep}{3.5pt}
\begin{tabular}{@{}lccc@{}}
\toprule
Variant & Top-1 & Fixations & Energy\\
\midrule
Dense control (matched) & \textbf{84.10} & 1 (196 tok.) & 5.53\,mJ\\
\midrule
Foveated, mean accumulator & 79.58 & 2.13 / 5 & 2.51\,mJ\\
Foveated, GRU accumulator & 79.66 & 2.13 / 5 & 2.63\,mJ\\
\bottomrule
\end{tabular}
\caption{The three architectural variants at the calibrated exit ($\theta{=}0.7$): the capacity-matched dense control, the foveated model, and the foveated model with the mean accumulator replaced by a GRU (\S F.5). Replacing the accumulator buys $+0.08$ points for $+223$\,K parameters, which is the clean negative that section reports. Table~\ref{tab:fovval} sweeps $\theta$ for the mean-accumulator model and Table~\ref{tab:fovsweep} gives the full eight-point sweep.}
\label{tab:fovarch}
\end{table}

\begin{table}[t]
\centering
\footnotesize
\setlength{\tabcolsep}{3pt}
\begin{tabular}{@{}lcccc@{}}
\toprule
Model & $\theta$ & Top-1 & Fixations & Energy\\
\midrule
Dense control (matched) & -- & \textbf{84.10} & 1 (196 tok.) & 5.53\,mJ\\
\midrule
Foveated & 0.30 & 78.66 & 1.15 / 5 & 1.96\,mJ\\
Foveated & 0.50 & 79.40 & 1.56 / 5 & 2.19\,mJ\\
Foveated & 0.70 & 79.58 & 2.13 / 5 & 2.51\,mJ\\
Foveated & 0.90 & 79.52 & 4.28 / 5 & 3.72\,mJ\\
\bottomrule
\end{tabular}
\caption{Threshold sweep on ImageNet-100 validation (5{,}000 images), per-sample exit, all measured. The control is capacity-matched to within 768 parameters and trained identically, so the gap isolates the observation policy. Unlike the point-cloud loop, cost here is exactly linear in fixations, so the saving is measured and not projected.}
\label{tab:fovval}
\end{table}

Table~\ref{tab:fovsweep} gives the full eight-point threshold sweep on the 5{,}000-image validation split under the per-sample exit protocol.

\begin{table}[t]
\centering
\footnotesize
\setlength{\tabcolsep}{3pt}
\begin{tabular}{@{}cccccc@{}}
\toprule
$\theta$ & Top-1 & Mean exit & MACs & Energy & vs.\ dense\\
\midrule
0.30 & 78.66 & 1.15 / 5 & 0.426\,G & 1.96\,mJ & $2.83\times$\\
0.40 & 79.04 & 1.32 / 5 & 0.446\,G & 2.05\,mJ & $2.70\times$\\
0.50 & 79.40 & 1.56 / 5 & 0.476\,G & 2.19\,mJ & $2.53\times$\\
0.60 & 79.56 & 1.81 / 5 & 0.507\,G & 2.33\,mJ & $2.38\times$\\
0.70 & 79.58 & 2.13 / 5 & 0.546\,G & 2.51\,mJ & $2.21\times$\\
0.80 & 79.52 & 2.62 / 5 & 0.606\,G & 2.78\,mJ & $1.99\times$\\
0.90 & 79.52 & 4.28 / 5 & 0.809\,G & 3.72\,mJ & $1.49\times$\\
0.95 & 79.52 & 4.93 / 5 & 0.889\,G & 4.09\,mJ & $1.35\times$\\
\midrule
\multicolumn{2}{@{}l}{Dense control} & 1 (196 tok.) & 1.203\,G & 5.53\,mJ & --\\
\bottomrule
\end{tabular}
\caption{Full measured sweep, ImageNet-100 validation, per-sample exit, EMA weights. Accuracy is essentially flat above $\theta{=}0.6$ while cost falls by a factor of 1.8, which is the shape a usable anytime dial should have.}
\label{tab:fovsweep}
\end{table}

The exit histogram at $\theta{=}0.7$ is $[3421, 173, 61, 34, 1311]$ over one to five fixations: 68.4\% of images stop after a single glimpse and 26.2\% consume the whole budget, with 5.4\% in between. That is the two-regime mixture Corollary~\ref{s:bimodal} predicts, with the censored mass concentrated at the truncation exactly as the $A^c$ component requires, now measured on a real benchmark rather than a synthetic suite. Accuracy at the per-sample exit (79.58\%) also slightly exceeds accuracy at the full five fixations (79.52\%), so the criterion is stopping early on the samples it gets right.

\subsection{F.4 Energy Accounting}
Conv and Linear MACs are counted empirically by forward hooks, plus attention's $4LD^2+2L^2D$ per block, which module hooks structurally cannot see because Torch routes multi-head attention through the functional path. Energy uses the same 45\,nm constants as the point-cloud accounting, $E_{\mathrm{MAC}}{=}4.6$\,pJ and $E_{\mathrm{AC}}{=}0.9$\,pJ \citep{horowitz2014computing}.

\begin{table}[t]
\centering
\footnotesize
\setlength{\tabcolsep}{4pt}
\begin{tabular}{@{}lrr@{}}
\toprule
Stage (one $224{\times}224$ image) & MACs & Energy\\
\midrule
CNN stem (once per image) & 0.284\,G & 1.31\,mJ\\
One fixation (30 tokens, 9 blocks) & 0.123\,G & 0.56\,mJ\\
\midrule
Episode, 1 fixation & 0.407\,G & 1.87\,mJ\\
Episode, 3 fixations & 0.652\,G & 3.00\,mJ\\
Episode, 5 fixations (full) & 0.897\,G & 4.13\,mJ\\
\midrule
Dense full-grid control (197 tokens) & 1.203\,G & 5.53\,mJ\\
DeiT-Tiny \citep{touvron2021deit} & 1.254\,G & 5.77\,mJ\\
\bottomrule
\end{tabular}
\caption{Measured per-image cost. Foveated average pooling adds a further 36.9\,K accumulates per fixation, below 0.01\% of the total. Episode cost is exactly linear in fixation count, at slope 0.123\,G per fixation.}
\label{tab:fovenergy}
\end{table}

\paragraph{Why linearity matters.} Each fixation is an independent 30-token pass whose class-token state is accumulated afterwards, so there is no growing prefix and no recomputation. The image-domain instantiation therefore has no analogue of the $\rho(\tau)=\tau(\tau{+}1)/2M$ mixer penalty that makes the point-cloud loop MAC-costlier than SPM, and every saved fixation is a saved 0.123\,G. This is the structural reason early exit converts into monotone savings here and not there, and it is the empirical counterpart of Proposition~\ref{s:streaming}: the proposition says the penalty is removable in principle, and this model is an architecture in which it is already absent.

\paragraph{Two limits, stated.} The stem is fixation-independent and already 32\% of a full episode, which is an Amdahl ceiling of $3.2\times$ on any exit policy. That is an implementation artefact rather than a property of foveation: a genuinely foveated front end would compute high-resolution features only near the gaze and pool the periphery coarsely, whereas the current stem convolves the whole image at full resolution once. Separately, averaged over all 196 fixation positions only 19.6 of the 29 token slots are valid, so roughly a third of each transformer pass is spent on masked padding that still costs MACs. Both are addressable and both bound what the present numbers can show.

\subsection{F.5 The Accumulator Ablation: A Clean Negative}
Logits are formed as $\mathrm{head}(\frac{1}{T}\sum_t h_t)$, and because the head is linear this is \emph{exactly} uniform averaging of per-fixation logits: glimpse $t$ is pinned at weight $1/T$, so a late observation can dilute the prediction but never revise it. That is a specific, falsifiable diagnosis of why the anytime profile is flat and non-monotone (78.26, 78.98, 78.96, 79.06, 79.52\% after one to five fixations, dipping at the third).

We tested it by replacing the mean with a GRU carrying a recurrent belief state, about $+223$\,K parameters, trained identically for 300 epochs. The mechanism did exactly what the diagnosis predicted: the anytime profile became strictly monotone (78.16, 79.04, 79.26, 79.50, 79.72\%). The accuracy did not move: 79.66\% against 79.58\% at $\theta{=}0.7$, and 79.72\% against 79.52\% at full budget, a $+0.08$ point gain for a 4.7\% parameter increase and within seed noise. The GRU variant is also not capacity-matched, so we cite the mean-accumulator model as the headline and keep the GRU strictly as an ablation.

A negative result that eliminates a hypothesis is worth more than an untested conjecture, which is why it is here. Evidence integration is \emph{not} the bottleneck. The residual deficit to the dense control is structural: the foveated model resolves at most 29 pooled tokens per fixation against the control's 196, and repeated glimpses through a \emph{fixed} pooling lattice do not recover the information that pooling discarded. That redirects follow-up work from the readout to the pooling layout and the policy that drives it, which is a more useful place to spend effort than the accumulator.

\begin{figure}[t]
\centering
\includegraphics[width=\columnwidth]{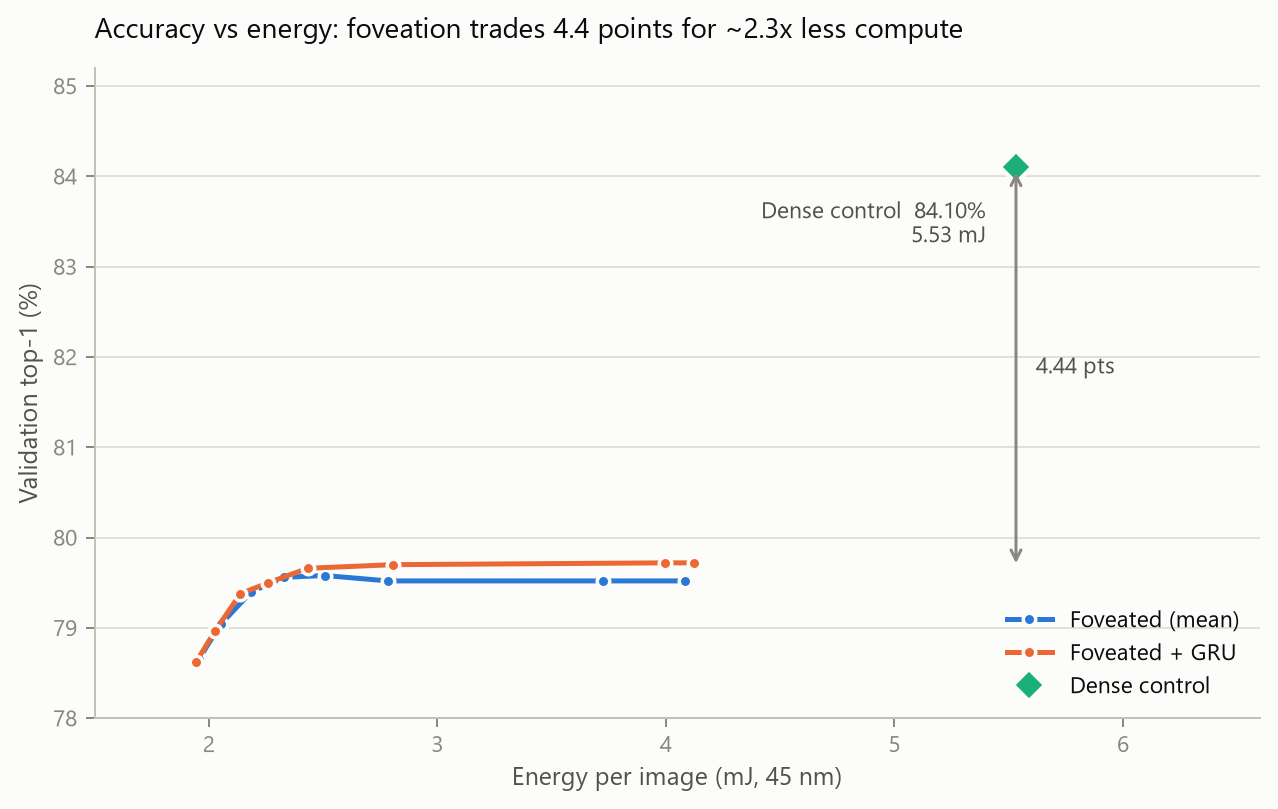}
\caption{Accuracy versus energy for the foveated model across the threshold sweep, with the matched dense control marked. The foveated curve is a dial; the control is a point. The vertical gap is the 4.52-point cost we report, and the horizontal span is the $2.83\times$ to $1.35\times$ saving. Top-1 saturates at 79.52 for $\theta\ge0.80$ because at those thresholds almost no sample exits early: mean exit reaches 4.93 of 5 fixations, so the model is evaluated at essentially its full budget and must reproduce the full-budget accuracy exactly. The plateau is therefore the predicted behaviour of a correctly calibrated exit and not a measurement artefact; the informative region of the sweep is $\theta\le0.70$, where accuracy still rises while cost falls.}
\label{fig:fovtradeoff}
\end{figure}

\begin{figure}[t]
\centering
\includegraphics[width=\columnwidth]{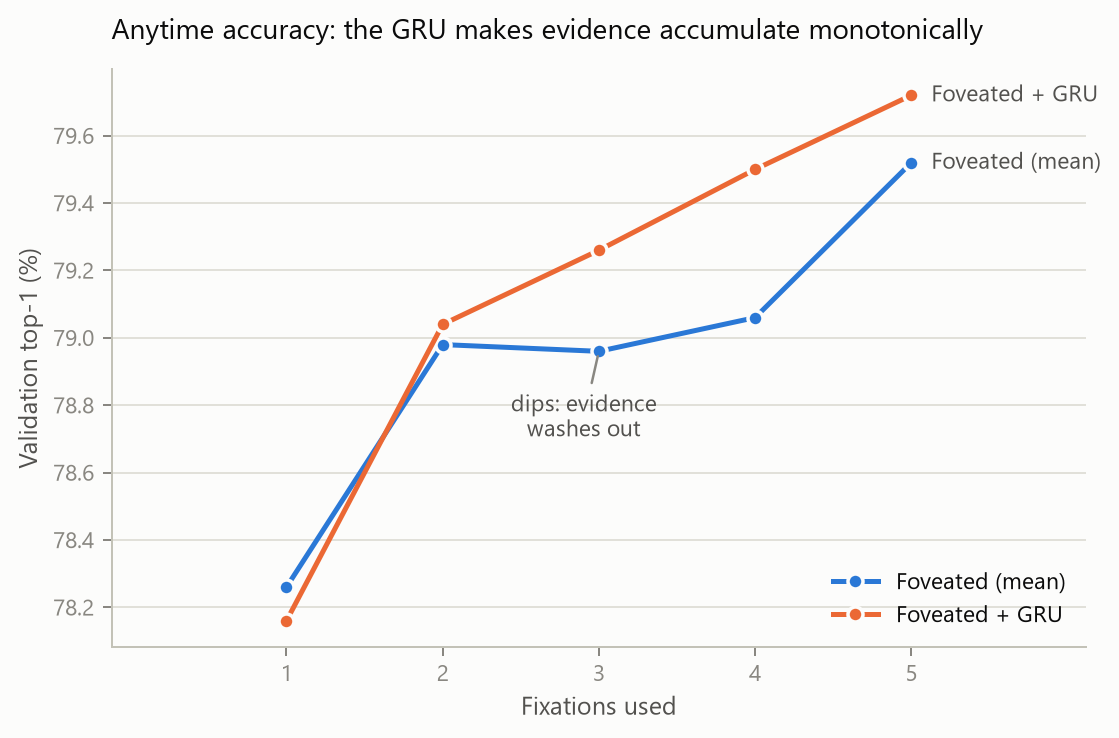}
\caption{Anytime accuracy after one to five fixations, mean accumulator against GRU. The GRU is strictly monotone and the mean is not, confirming the linear-averaging diagnosis; the endpoint barely moves, which is the negative result.}
\label{fig:fovanytime}
\end{figure}

\begin{figure}[t]
\centering
\includegraphics[width=\columnwidth]{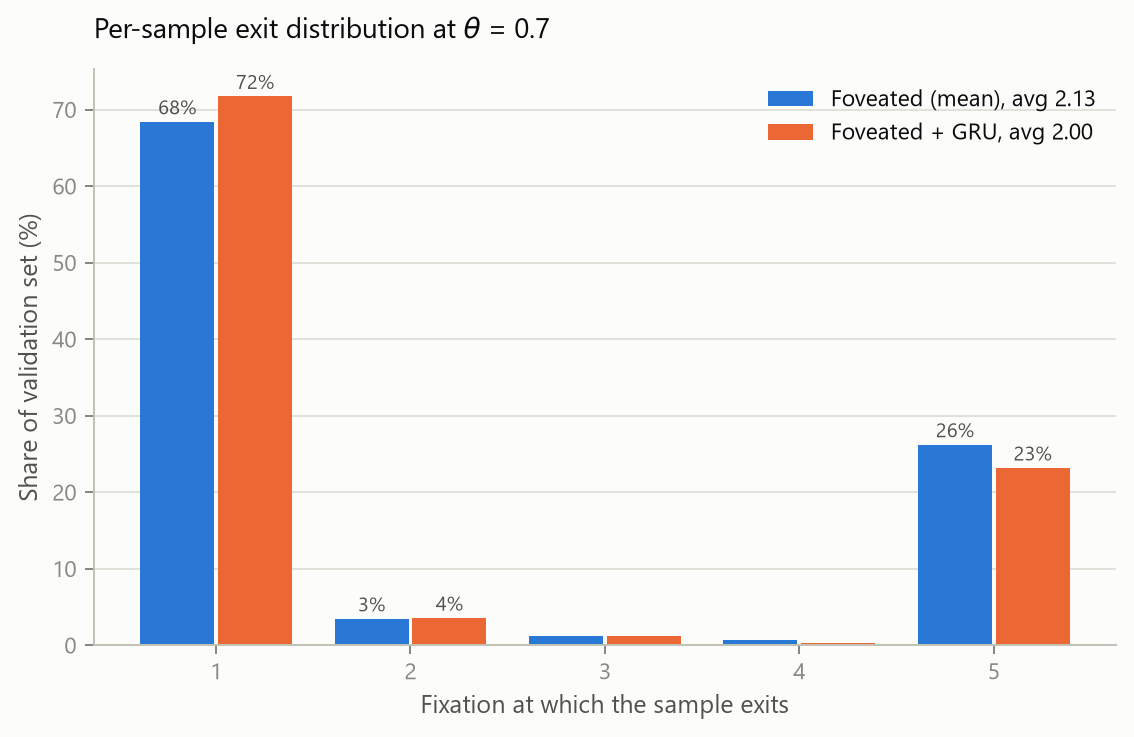}
\caption{Per-sample exit distribution. The mass concentrates at one fixation and at the truncation with little in between, which is the bimodal two-regime structure Corollary~\ref{s:bimodal} predicts.}
\label{fig:fovexit}
\end{figure}

\subsection{F.6 Scope}
This extension establishes three things and not a fourth. It establishes that adaptive observation transfers beyond point clouds; that in an architecture without prefix recomputation early exit yields genuine, tunable, measured compute savings; and that the exit criterion is calibrated and bimodal at benchmark scale, which is where the point-cloud audits could not reach. It does \emph{not} establish that adaptive observation is free: against a capacity-matched dense control trained identically, foveation costs 4.52 points of top-1 for a $2.21\times$ energy reduction, and most of the residual benefit traces to multi-scale tokenization rather than to the fixation policy. It is also an \emph{analog} model: every quantity here is dense MACs, and none of the spiking sparsity that drives the energy argument elsewhere applies. A content-adaptive pooling layout, hardware measurement, and a spiking foveated variant remain future work. Against published ImageNet-100 numbers, ResNet-50-class supervised results sit in the mid-80s; 79.6\% is reasonable for a 4.7\,M-parameter DeiT-Tiny-scale model trained from scratch on 126\,k images, but it does not clear them, and we prefer to say so than reframe the comparison.

\section{Deeper Theory: Streaming Exactness, Anytime Validity, Bayesian Sufficiency, Information Capacity, and an Energy Algebra}
\label{sup:G}
The main paper states five results whose hypotheses are measurable properties of the trained network. This section adds five \emph{deeper} results that close the gaps a careful reader will press on: that streaming is exactly equivalent rather than approximately so and that its finite-precision drift is bounded; that sequential exit does not incur a multiple-testing penalty; that the ``membrane as belief'' reading is a theorem about Bayesian filtering and not a metaphor; that $\bar{\tau}{=}3.84$ at $M{=}4$ is close to an information-theoretic requirement rather than a policy failure; and that the energy claim can be stated as an inequality over hardware parameters rather than a single number from one process node.

\subsection{G.1 Streaming Is Exactly Equivalent, Not Approximately}
\label{sup:G1}
Proposition~\ref{s:streaming} asserts that lazy encoding recovers the prefix computation. That assertion deserves an identity, not an appeal to associativity, because the equivalence holds only under a condition on the gating that is easy to violate.

Write the selective-scan mixer as a first-order gated recurrence over the observation sequence. For observed chunk embeddings $x_1,\dots,x_t\in\mathbb{R}^{D}$,
\begin{equation}
h_i \;=\; A(x_i)\odot h_{i-1} \;+\; B(x_i)\odot x_i,\qquad h_0=0,
\label{eq:scan}
\end{equation}
with read-out $y_i=C(x_i)^{\!\top}h_i$. Define the \emph{prefix-recompute operator} $\mathcal{G}$, which discards state and refolds Eq.~\eqref{eq:scan} from $h_0$ over the whole prefix, and the \emph{streaming operator} $\mathcal{F}$, which advances one step from the carried state:
\begin{align}
\mathcal{G}(x_{1:t}) &\;=\; \text{fold of Eq.~\eqref{eq:scan} from } h_0=0,\\
\mathcal{F}(h_{t-1},x_t) &\;=\; A(x_t)\odot h_{t-1}+B(x_t)\odot x_t .
\end{align}

\begin{theorem}[Exact streaming equivalence]
\label{s:exact}
Suppose the gates are \emph{token-local}, i.e.\ $A$ and $B$ are functions of the current input $x_i$ alone and not of the prefix $x_{1:i-1}$. Then for every $t\le M$ and every input sequence,
\begin{equation}
\big\|\,\mathcal{G}(x_{1:t}) \;-\; \mathcal{F}\big(\mathcal{G}(x_{1:t-1}),\,x_t\big)\,\big\| \;=\; 0
\end{equation}
identically in exact arithmetic, and consequently $y_t$ is unchanged. Moreover $\mathcal{G}$ admits the closed form
\begin{equation}
\mathcal{G}(x_{1:t}) \;=\; \sum_{i=1}^{t}\Big(\textstyle\prod_{j=i+1}^{t}A(x_j)\Big)\odot B(x_i)\odot x_i .
\label{eq:closed}
\end{equation}
\end{theorem}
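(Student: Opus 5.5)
The plan is to make the fold operator $\mathcal{G}$ precise by induction on the prefix length, and then read both claims directly off that definition.

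\textbf{Step 1: define the fold.} Write $\mathcal{G}(x_{1:0})=h_0=0$ and
\begin{equation*}
\mathcal{G}(x_{1:t})=A(x_t)\odot\mathcal{G}(x_{1:t-1})+B(x_t)\odot x_t .
\end{equation*}
This is what ``fold of Eq.~\eqref{eq:scan} from $h_0=0$'' means. The key observation is that the identity is legitimate only under token-locality. If $A$ or $B$ depended on the prefix, recomputation over $x_{1:t}$ could evaluate the gate at position $i$ with context $x_{1:t}$ rather than $x_{1:i}$. Token-locality makes the gate at position $i$ a fixed function of $x_i$. Hence the $i$-th gate values computed inside $\mathcal{G}(x_{1:t})$ and inside $\mathcal{G}(x_{1:t-1})$ coincide for every $i\le t-1$.

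\textbf{Step 2: prove the exact equivalence.} Induct on $t$ to show that the intermediate state $h_i$ produced while folding $x_{1:t}$ equals $\mathcal{G}(x_{1:i})$ for all $i\le t$.
\begin{itemize}
\item The base case $i=0$ is $h_0=0$.
\item For the inductive step, the state at position $i$ is $A(x_i)\odot h_{i-1}+B(x_i)\odot x_i$. Both gates depend only on $x_i$, so by the inductive hypothesis this equals $\mathcal{G}(x_{1:i})$.
\end{itemize}
Taking $i=t$, the last step of the fold is exactly $\mathcal{F}(\mathcal{G}(x_{1:t-1}),x_t)$, so the difference is the zero vector and its norm vanishes.

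\textbf{Step 3: the read-out.} The output $y_t=C(x_t)^{\top}h_t$ is a function of $x_t$ and $h_t$ only. It is therefore unchanged.

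\textbf{Step 4: the closed form.} Argue by induction on $t$, using the convention that an empty Hadamard product is $\mathbf 1$.
\begin{itemize}
\item For $t=1$ the formula gives $B(x_1)\odot x_1$, which matches the recursion.
\item For the inductive step, substitute the closed form for $t-1$ into the recursion. Since $\odot$ is commutative, associative and distributes over addition, multiplying by $A(x_t)$ extends every product $\prod_{j=i+1}^{t-1}$ to $\prod_{j=i+1}^{t}$. The new term $B(x_t)\odot x_t$ is the $i=t$ summand, whose product is empty.
\end{itemize}

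\textbf{Expected obstacle.} The only real difficulty is conceptual rather than computational. One must state precisely that ``refolding from $h_0$'' recomputes the gates on the same tokens, and note what fails otherwise. If the gates are prefix-dependent, for example a selective scan whose $\Delta$, $B$ or $C$ is produced by a convolution over neighbouring tokens, the $i$-th gate differs between the two runs and the identity can fail.

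I would also check that ASP's mixer genuinely computes its gates per token. Concretely, any short causal convolution preceding the scan would have to be absorbed into a carried state of its own: a window buffer of the last $k-1$ tokens. That buffer can be streamed by the same argument, applied to an augmented state.
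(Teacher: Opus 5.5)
Your proposal is correct and follows the paper's own proof. Both argue by induction on $t$, with token-locality making the $i$-th update map identical in the refold and in the carried run, so the final step is exactly $\mathcal{F}(\mathcal{G}(x_{1:t-1}),x_t)$, and both obtain the closed form by unrolling with the associativity and commutativity of $\odot$. Your empty-product convention and your remark about a causal convolution before the scan are sound additions, but the statement as posed does not need them.
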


\begin{proof}
Induction on $t$. For $t=1$ both sides equal $B(x_1)\odot x_1$ since $h_0=0$. Assume $\mathcal{G}(x_{1:t-1})=h_{t-1}$, the state produced by folding Eq.~\eqref{eq:scan}. Refolding over $x_{1:t}$ executes the same $t-1$ updates followed by one more, and because $A,B$ are token-local the $i$-th update is the identical map in both foldings; the final update is exactly $\mathcal{F}(h_{t-1},x_t)$. Eq.~\eqref{eq:closed} follows by unrolling and collecting the elementwise products, which is legitimate because elementwise multiplication is associative and commutative. Hence the difference is the zero vector, not a small vector.
\end{proof}

\paragraph{Why the token-locality condition is the whole content.} If the gates were computed from a pooled summary of the prefix, as in some non-causal state-space variants, then refolding and streaming would apply \emph{different} maps at step $i$ and the identity would fail. ASP satisfies token-locality by construction, since $A,B,C$ are per-token projections of $x_i$. We state the condition explicitly because it is the one architectural commitment the streaming reformulation requires, and a reader is entitled to know that it is a commitment rather than a triviality. Theorem~\ref{s:exact} therefore upgrades ``the overhead is an implementation artefact'' from a claim to a proof: the prefix recomputation performs redundant work whose removal changes the function computed by exactly nothing.

\subsection{G.2 Finite-Precision Drift Under State Carry-Forward Is Bounded, and Spike Sequences Are Preserved Under an Explicit Margin}
Theorem~\ref{s:exact} is an exact-arithmetic statement. On quantized or low-precision neuromorphic hardware, carrying state forward differs from recomputing it because rounding errors accumulate in the carried state instead of being re-derived each step. Prefix recomputation is, in this narrow sense, self-correcting. We bound what is lost.

Let $\widehat{h}_i = Q\big(A(x_i)\odot \widehat{h}_{i-1}+B(x_i)\odot x_i\big)$ be the quantized streaming state, where the quantizer satisfies $\|Q(z)-z\|_\infty\le\delta_i$, and let $h_i$ be the exact state. Set $\beta \;=\; \max_i\|A(x_i)\|_\infty$, the effective decay of the recurrence.

\begin{theorem}[Geometric error accumulation]
\label{s:quant}
If $\beta<1$ then for all $t\le M$
\begin{equation}
\big\|\widehat{h}_t-h_t\big\|_\infty \;\le\; \sum_{i=1}^{t}\beta^{\,t-i}\delta_i
\;\le\; \frac{1-\beta^{t}}{1-\beta}\,\delta_{\max}
\;\le\; \frac{\delta_{\max}}{1-\beta},
\label{eq:geo}
\end{equation}
so the drift is \emph{bounded uniformly in $t$} rather than growing with the number of observations. If $\beta=1$ the bound degrades to $\sum_i\delta_i\le t\,\delta_{\max}$, linear in $t$.
\end{theorem}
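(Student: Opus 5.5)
We track the error $e_i=\widehat{h}_i-h_i$ through a one-step recursion, then unroll it. Subtract the exact update $h_i=A(x_i)\odot h_{i-1}+B(x_i)\odot x_i$ from the quantized one, adding and subtracting the unquantized argument $z_i=A(x_i)\odot\widehat{h}_{i-1}+B(x_i)\odot x_i$:
\begin{equation*}
e_i=\big(Q(z_i)-z_i\big)+A(x_i)\odot e_{i-1}.
\end{equation*}
The input terms $B(x_i)\odot x_i$ cancel, because the carried chunk embeddings $x_i$ are the same in both runs and the gates are token-local, as in Theorem~\ref{s:exact}. The initial error vanishes, $e_0=0$, since both runs start from $h_0=0$. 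Taking $\|\cdot\|_\infty$ gives two elementwise facts: $\|A\odot e\|_\infty\le\|A\|_\infty\|e\|_\infty$, and the quantizer hypothesis bounds the first term by $\delta_i$. Together these yield the scalar recursion $\|e_i\|_\infty\le\beta\|e_{i-1}\|_\infty+\delta_i$.

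Unroll this recursion by induction on $t$. Suppose $\|e_{t-1}\|_\infty\le\sum_{i\le t-1}\beta^{t-1-i}\delta_i$. Multiplying by $\beta$ and adding $\delta_t$ gives $\sum_{i\le t}\beta^{t-i}\delta_i$, which is the first inequality in Eq.~\eqref{eq:geo}. For the second inequality, bound each $\delta_i$ by $\delta_{\max}$ and sum the finite geometric series $\sum_{k=0}^{t-1}\beta^k=(1-\beta^t)/(1-\beta)$. The third inequality drops the factor $1-\beta^t\le1$, which is valid when $\beta<1$. That final bound does not depend on $t$, which is the uniformity claim. In the case $\beta=1$ every power $\beta^{t-i}$ equals $1$, so the same unrolled sum reads $\sum_i\delta_i\le t\,\delta_{\max}$.

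The main obstacle is the bookkeeping of what $\beta$ means, since $A(x_i)$ is evaluated on exact inputs. The clean route is to quantize only the carried state, so that the gates $A(x_i)$ and inputs $B(x_i)\odot x_i$ are identical in both runs. This is the reading the statement intends, since $\widehat{h}_i$ is defined with the same $x_i$. If the gates themselves are also quantized, I would absorb that rounding into $\delta_i$ by redefining it as the total per-step perturbation. The recursion above is unaffected, provided the quantized gates still satisfy $\|A\|_\infty\le\beta$.
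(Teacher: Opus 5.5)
Your proposal is correct and follows the paper's proof essentially line for line: derive the scalar recursion $\|e_i\|_\infty\le\beta\|e_{i-1}\|_\infty+\delta_i$ with $e_0=0$, unroll it, and sum the geometric series, treating $\beta=1$ as a plain sum. Adding and subtracting the unquantized argument $z_i$ spells out the step the paper compresses into ``subtracting the exact from the quantized recursion''; the appeal to token-locality there is unnecessary, since the $B(x_i)\odot x_i$ terms cancel simply because both recursions are driven by the same $x_i$.
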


\begin{proof}
Let $e_i=\widehat{h}_i-h_i$. Subtracting the exact from the quantized recursion and using that $A$ acts elementwise, $\|e_i\|_\infty\le\|A(x_i)\|_\infty\|e_{i-1}\|_\infty+\delta_i\le\beta\|e_{i-1}\|_\infty+\delta_i$ with $e_0=0$. Unrolling this scalar inequality gives the first bound; the geometric sum gives the second and third. For $\beta=1$ the recursion is a plain sum.
\end{proof}

The same argument transfers to the LIF membrane, with one genuine complication: the recurrence contains a Heaviside, so an arbitrarily small state error can flip a spike and displace the state by a full threshold $\theta$. A bound that ignored this would be wrong. We therefore state the membrane result conditionally on a margin event, which is the honest form.

\begin{corollary}[Membrane drift and exact spike preservation]
\label{s:membranedrift}
Let $V_t^{\mathrm{prefix}}$ and $V_t^{\mathrm{stream}}$ be the membranes under recomputation and carry-forward, with per-neuron leak $\lambda$ and $\beta=\|\lambda\|_\infty<1$. Define the \emph{threshold margin} $\ \gamma_t=\min_{k}\big|\mathcal{N}(V^{\mathrm{prefix}}_{t,k})-\theta_k\big|$, the closest any neuron comes to firing at step $t$. On the event $\mathcal{E}=\{\,\gamma_t > \delta_{\max}/(1-\beta)\ \ \forall t\le M\,\}$, the two runs emit \emph{identical spike trains}, and
\begin{equation}
\big\|V_t^{\mathrm{prefix}}-V_t^{\mathrm{stream}}\big\|_\infty \;\le\; C\sum_{i=1}^{t}\beta^{\,t-i}\delta_i,
\qquad C=\|\mathcal{N}\|_{\mathrm{Lip}} ,
\label{eq:membranebound}
\end{equation}
so the belief the policy reads, the margin, and therefore the exit time and the certified risk level are all unchanged. Off $\mathcal{E}$, each first flip adds at most $\theta_{\max}$ to the bound and the recursion restarts from that displacement.
\end{corollary}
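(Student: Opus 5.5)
The plan is an induction on $t$ that proves two claims together. The first is that the spike vectors agree, $s^{\mathrm{prefix}}_i=s^{\mathrm{stream}}_i$ for all $i\le t$. The second is that the pre-normalisation membrane error $e_t=V^{\mathrm{stream}}_t-V^{\mathrm{prefix}}_t$ obeys the scalar recursion of Theorem~\ref{s:quant}. I would write both runs with the soft-reset update of Eq.~(1) of the main paper, adding a rounding perturbation $\xi_i$ with $\|\xi_i\|_\infty\le\delta_i$ to the carried run. The driving term $\mathrm{ReLU}(\mathrm{BN}(Wx_i))$ is identical in the two runs once the upstream mixer states agree, and Theorem~\ref{s:exact} guarantees that agreement in exact arithmetic. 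Any upstream quantization error I would absorb into $\delta_i$, which is legitimate because ReLU and BN are Lipschitz.

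Subtracting the two recursions gives
\begin{equation*}
e_i=\lambda\odot e_{i-1}-\theta\odot\big(s^{\mathrm{stream}}_{i-1}-s^{\mathrm{prefix}}_{i-1}\big)+\xi_i .
\end{equation*}
If the spikes at step $i-1$ agree, the reset term cancels and $\|e_i\|_\infty\le\beta\|e_{i-1}\|_\infty+\delta_i$. Unrolling from $e_0=0$, exactly as in the proof of Theorem~\ref{s:quant}, gives $\|e_t\|_\infty\le\sum_i\beta^{t-i}\delta_i\le\delta_{\max}/(1-\beta)$. Applying $\mathcal{N}$ multiplies this by its Lipschitz constant $C$, which yields Eq.~\eqref{eq:membranebound} for the normalised membranes that the firing rule and the belief read.

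The inductive step for spikes goes as follows. Neuron $k$ fires identically in both runs whenever $|\mathcal{N}(V^{\mathrm{stream}}_{t,k})-\mathcal{N}(V^{\mathrm{prefix}}_{t,k})|<|\mathcal{N}(V^{\mathrm{prefix}}_{t,k})-\theta_k|$, because then the two normalised membranes lie on the same side of $\theta_k$. On $\mathcal{E}$ the right-hand side exceeds $\delta_{\max}/(1-\beta)$, while the left-hand side is at most $C\,\delta_{\max}/(1-\beta)$.

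This is the step I expect to be the main obstacle, since the margin event as stated omits the factor $C$. I would resolve it in one of two ways. The first is to note that $\mathcal{N}$ sits only on the firing path and, at inference, is a fixed per-neuron affine map, so the threshold comparison can be rewritten on the unnormalised membrane against a rescaled threshold, which effectively gives $C=1$. The second is to state explicitly that $\mathcal{E}$ is read with $\delta_{\max}$ measured in normalised units. With spikes equal at every step, the reset terms cancel, which closes the induction. The belief $b_t=\mathrm{LayerNorm}(u^{(L)}_t)$ is then perturbed by at most a Lipschitz multiple of this bound. A further caveat follows. If the logits depend on $u_t$ continuously, "unchanged margin" holds only up to that Lipschitz perturbation. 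The exit time and certificate are exactly unchanged provided the margin does not sit within that perturbation of $\theta$. Otherwise I would appeal to the main text's phrasing of bit-identical spike trains driving a spike-based read-out.

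Finally, I would handle the off-$\mathcal{E}$ clause by examining the first step $i^\ast$ at which some spike flips. There the reset difference adds at most $\theta_{\max}$ in sup norm to $e_{i^\ast+1}$. I would then restart the geometric recursion from initial error $\theta_{\max}$, giving a bound of $\beta^{t-i^\ast}\theta_{\max}$ plus the quantization sum. Each subsequent flip contributes additively in the same way.
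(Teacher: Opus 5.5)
Your argument is the one the paper intends. The paper gives no separate proof, only the remark that the unrolling of Theorem~\ref{s:quant} ``transfers to the LIF membrane'' once a margin event rules out spike flips. Your joint induction makes that remark rigorous: agreement of $s_{i-1}$ cancels the reset term, giving $\|e_i\|_\infty\le\beta\|e_{i-1}\|_\infty+\delta_i$, which in turn forces agreement of $s_i$.

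You are right that $\mathcal{E}$ as stated lacks the factor $C$, and that this is a defect of the statement, not of your proof. Only your second repair is valid, though. Take $\mathcal{N}_k(v)=a_kv+b_k$ frozen at inference and write the per-neuron margin as $\gamma_{t,k}=|\mathcal{N}_k(V^{\mathrm{prefix}}_{t,k})-\theta_k|$. The rescaled threshold is $\theta_k'=(\theta_k-b_k)/a_k$, and the raw-membrane distance to it is $|V^{\mathrm{prefix}}_{t,k}-\theta_k'|=\gamma_{t,k}/|a_k|$. The no-flip condition $|e_{t,k}|<\gamma_{t,k}/|a_k|$, with $|e_{t,k}|\le\delta_{\max}/(1-\beta)$, therefore still requires $\gamma_{t,k}>|a_k|\,\delta_{\max}/(1-\beta)$. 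The factor moves into the margin rather than disappearing. You get $C=1$ only by redefining $\gamma_t$ on the raw membrane, which changes the hypothesis just as your second option does. What your argument proves is spike preservation on $\{\gamma_t>C\,\delta_{\max}/(1-\beta)\ \forall t\le M\}$, and you should state it that way.

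Your caveat about the belief is also correct, but the fallback to a spike-based read-out is not available. The paper takes $b_t=\mathrm{LayerNorm}(u^{(L)}_t)$ from the analog membrane, and the policy scores $q_{t,m}$ are computed from $b_{t-1}$. Identical spike trains therefore still leave the belief perturbed by a Lipschitz multiple of the drift bound.

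This matters for the policy, not only for the exit rule. Your induction treats the driving term $\mathrm{ReLU}(\mathrm{BN}(Wx_i))$ as common to both runs, which presupposes that both runs select the same chunk at every step. In the closed loop that needs, besides the exit-margin condition you mention, a condition that the top-two gap of $q_{t,\cdot}$ exceeds the induced score perturbation at every step. Alternatively, you must condition on a fixed action sequence. A flipped argmax changes the input sequence and voids the comparison.

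Finally, there is a small indexing slip in the off-$\mathcal{E}$ clause. A flip of $s_{i^\ast}$ displaces $e_{i^\ast+1}$ by up to $\theta_{\max}$, so its contribution at step $t$ is $\beta^{\,t-i^\ast-1}\theta_{\max}$, not $\beta^{\,t-i^\ast}\theta_{\max}$. With these amendments the proof is complete, and it is more careful than the statement it proves.
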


\paragraph{What this buys, stated plainly.} Corollary~\ref{s:membranedrift} converts ``we did not implement streaming'' from a hole into a specification. It says: carry state forward, and provided the quantizer is fine enough that $\delta_{\max}<(1-\beta)\gamma_{\min}$, you obtain \emph{bit-identical} spikes, hence bit-identical predictions and an unchanged conformal certificate; and if that margin condition fails, the damage is bounded by the geometric series in Eq.~\eqref{eq:membranebound} rather than unbounded. With the trained leak $\lambda_0{=}0.9$ used throughout, $1-\beta\approx0.1$, so an 8-bit state quantizer with $\delta_{\max}\approx2^{-8}$ requires a margin of only $\gamma_{\min}\gtrsim0.04$ in normalised membrane units. A neuromorphic implementer can check this before writing code, and no chip is needed to do so.

\subsection{G.3 Sequential Exit Incurs No Multiple-Testing Penalty, and Is Anytime-Valid}
A reviewer is right to worry that testing a confidence criterion at every one of $M$ steps and stopping at the first crossing is a selection procedure, and that naive per-step calibration would therefore under-cover. We resolve this in two stages: an \emph{exact} finite-sample result showing that calibrating on the \emph{stopped} score incurs no multiplicity at all, and a time-uniform extension via Ville's inequality for the regime where $M$ is large or unbounded.

Let $\pi$ be the (deterministic, measurable) selection-and-exit policy, and for a threshold $\theta$ let $\tau_\theta(X)=\min\{t\le M:\Delta_t(X)>\theta\}$ with $\tau_\theta=M$ if the margin never crosses. Define the stopped score $S^\theta(X)=\Delta_{\tau_\theta(X)}(X)$ and the stopped prediction $\widehat{C}_{\tau_\theta}(X)$.

\begin{theorem}[Exact risk control at the stopping time]
\label{s:stoppedconformal}
Let $(X_1,Y_1),\dots,(X_n,Y_n),(X,Y)$ be exchangeable. Because $\pi$ is a fixed measurable map, $(X_i,Y_i)\mapsto\big(S^\theta(X_i),\mathbf{1}\{Y_i\notin\widehat{C}_{\tau_\theta}(X_i)\}\big)$ is a fixed measurable function applied identically to every point, so the stopped scores are themselves exchangeable. Consequently the split-conformal selection of Theorem~\ref{s:conformal} applied to the stopped score satisfies
\begin{equation}
\mathbb{P}\big(Y\notin\widehat{C}_{\tau_{\hat\theta}}(X)\big)\;\le\;\alpha
\end{equation}
with \emph{no} correction for the $M$ intermediate looks, and the guarantee is exact in finite samples up to the usual $\lceil(n{+}1)\alpha\rceil/(n{+}1)$ discretisation.
\end{theorem}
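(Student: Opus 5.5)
The plan has two parts. First, show that the stopped pair $(S^\theta(X_i),E_i)$ with $E_i=\mathbf{1}\{Y_i\notin\widehat{C}_{\tau_\theta}(X_i)\}$ is exchangeable. Then rerun the standard split-conformal rank argument on these pairs, with no reference to the $M$ looks.

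\textbf{Exchangeability of the stopped pairs.} Fix $\theta$ and write $\Phi_\theta(x,y)=(S^\theta(x),\mathbf{1}\{y\notin\widehat{C}_{\tau_\theta}(x)\})$. The trained network is fixed before calibration, so $\Phi_\theta$ is a fixed measurable map of a single point. This rests on four facts:
\begin{itemize}[leftmargin=1.2em]
\item Lemma~\ref{s:suff} makes $a_{1:t}$, $u_t$ and $\Delta_t$ deterministic in $X$.
\item Masking (Theorem~\ref{s:orbit}(a)) makes the loop terminate within $M$ steps.
\item $\tau_\theta(x)=\min\{t:\Delta_t(x)>\theta\}\wedge M$ is a finite minimum of measurable functions.
\item Both $\Delta_{\tau_\theta(x)}(x)=\sum_{t\le M}\mathbf{1}\{\tau_\theta(x)=t\}\Delta_t(x)$ and the stopped prediction are therefore measurable.
\end{itemize}
Applying one measurable map coordinatewise preserves exchangeability: for any permutation $\sigma$, the law of $(\Phi_\theta(Z_{\sigma(i)}))_i$ is the pushforward of the law of $(Z_{\sigma(i)})_i$, which equals that of $(Z_i)_i$.

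This is the whole answer to the multiple-testing objection. The $M$ intermediate looks happen inside $\Phi_\theta$, and they are never exposed as separate hypotheses. The contrast with a per-step calibration is important. A per-step calibration would compare $\Delta_t$ on the subset that survived to step $t$, and that subset is selected by the data.

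\textbf{Rank argument.} Define the calibrated rule as choosing $\hat\theta$ so that the test-time miscoverage event corresponds to the stopped nonconformity of the test point exceeding the $\lceil(n+1)(1-\alpha)\rceil$-th order statistic of the calibration values. By exchangeability of the $n+1$ values, the test rank is uniform on $\{1,\dots,n+1\}$, with ties broken at random. This gives the bound $\le\alpha$, and the bound is tight to within $\lceil(n+1)\alpha\rceil/(n+1)$.

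To connect this with the search over $\theta$ in Theorem~\ref{s:conformal}, I would use the nestedness of $\{S^\theta\ge\theta\}$ together with the monotonicity hypothesis already required there. Under that hypothesis, the $\theta$ chosen from the data is a monotone functional of the empirical distribution of the $n+1$ exchangeable pairs.

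\textbf{Main obstacle.} The hardest step is exactly this transfer across the search over $\theta$. Because $\theta$ moves the stopping time, the map $\Phi_\theta$ itself depends on $\theta$, so no single score ranks the whole search. The first thing to try is to make the statement precise at each fixed $\theta$, which the argument above handles. I would then obtain a uniform version by a single nonconformity score, $r(x,y)=\inf\{\theta:\ \text{stopped prediction at }\theta\text{ is correct and }\Delta_{\tau_\theta}(x)\ge\theta\}$. The chosen threshold $\hat\theta$ becomes the quantile of this score, and the miscoverage event at $\hat\theta$ is contained in $\{r(X,Y)>\hat\theta\}$.

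This containment holds only if correctness at $\theta$ persists for all larger $\theta$. That is the same monotonicity assumption stated in the main text as being checked on the calibration split, and I would cite it explicitly rather than claim the result unconditionally.
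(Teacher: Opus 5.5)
Your exchangeability step and fixed-$\theta$ rank argument are the paper's whole proof. The paper's proof stops at ``split conformal applies verbatim to $S^\theta$'' and never deals with the data-chosen $\hat\theta$; the main text concedes that this transfer needs monotonicity. You are right that a fixed $\theta$ is all this covers, but your repair fails at the acceptance clause.

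The nestedness you cite works against you. If $\Delta_{\tau_{\theta'}}(x)\ge\theta'>\theta$, the margin has already crossed $\theta$ by step $\tau_{\theta'}$, so acceptance \emph{shrinks} as $\theta$ grows. Even under monotone correctness, $\{\theta:\hat y_{\tau_\theta}(x)=y,\ \Delta_{\tau_\theta}(x)\ge\theta\}$ is therefore (up to endpoints) an interval $[c(x,y),\bar\theta(x)]$ with $\bar\theta(x)=\max_t\Delta_t(x)$. It is not an up-set. Two readings of coverage are possible, and neither works:
\begin{itemize}
\item If coverage means ``accepted and correct'', then $r\le\hat\theta$ tells you nothing once $\hat\theta>\bar\theta(x)$. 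No correctness hypothesis rescues your containment.
\item If coverage means only ``not wrong'', the clause does nothing. Every sample misclassified at full budget gets $r=+\infty$. Once the full-budget error exceeds $\alpha$ (about 9\% against $\alpha=0.05$ here), the quantile is $+\infty$ and the rule is vacuous.
\end{itemize}
A selective rule must count abstention as coverage. So work with the bad event $B_\theta=\{S^\theta(x)\ge\theta\}\cap\{\hat y_{\tau_\theta}(x)\ne y\}$. It is decreasing in $\theta$ (by nestedness plus monotone correctness) and empty above $\bar\theta(x)$. Then $r(x,y)=\sup\{\theta:(x,y)\in B_\theta\}$ is a single $\theta$-free score with values in $[-\infty,1]$. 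Take $k=\lceil(n+1)(1-\alpha)\rceil$. For any $\hat\theta>r_{(k)}$ you get $\{(X,Y)\in B_{\hat\theta}\}\subseteq\{r(X,Y)>r_{(k)}\}$, and your rank argument applies to the exchangeable $r_1,\dots,r_n,r(X,Y)$.

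Second, this needs a different hypothesis from ``the same monotonicity assumption stated in the main text''. You need a pathwise condition: for every sample, once the stopped logit-averaged prediction is correct at some $\theta$, it stays correct at every larger $\theta$, that is, at every later stopping time. The main text assumes, and Table~\ref{tab:selrisk} checks, only that the population ratio $P(\hat y_{\tau_\theta}\ne y\mid S^\theta\ge\theta)$ is non-increasing. Neither implies the other:
\begin{itemize}
\item Raising $\theta$ can discard many accepted-correct samples and so raise the ratio, even while every path is monotone.
\item A monotone ratio forbids no individual flip.
\end{itemize}
So state the result conditional on the pathwise hypothesis, which the paper never measures. Without it, the fallback is fixed-sequence testing over a finite $\theta$ grid. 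That needs no monotonicity, but it delivers a $(1-\delta)$ high-probability bound, not the marginal $\lceil(n+1)\alpha\rceil/(n+1)$ one.

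Finally, be clear about what your rank argument certifies: a quantile rule bounding the marginal probability of ``accepted and wrong''. That is not the UCB selection of Theorem~\ref{s:conformal}, which bounds the conditional selective risk with probability $1-\delta$. The statement's ``consequently'' hides a change of procedure, and your write-up should say so explicitly.
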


\begin{proof}
Exchangeability is preserved under a common measurable transformation of each exchangeable coordinate. The stopping time $\tau_\theta$ is a measurable function of $X$ alone through the deterministic policy and the deterministic LIF recurrence, so $S^\theta$ is a measurable function of $(X,Y)$ of the same form for calibration and test points. Split conformal requires nothing beyond exchangeability of the score, so the standard argument applies verbatim to $S^\theta$.
\end{proof}

\paragraph{Reading Theorem~\ref{s:stoppedconformal}.} The multiplicity intuition fails here for a specific and instructive reason: we do not test $M$ hypotheses and report the best. We define a \emph{single} random variable, the score at the stopping time, and calibrate that. The looks are absorbed into the definition of the statistic instead of compounding across it. This is why the certificate does not degrade as $M$ grows from 4 to 100, which is the property the objection was really about.

For the unbounded-horizon case, where one wants validity simultaneously at \emph{every} step rather than at the realised stopping time, the martingale route applies.

\begin{proposition}[Time-uniform coverage via Ville]
\label{s:ville}
Fix $\alpha$ and let $Z_t=\mathbf{1}\{Y\notin\widehat{C}_t(X)\}$ be the miscoverage indicator of the step-$t$ prediction. Suppose the per-step sets are constructed so that $\mathbb{E}[Z_t\mid\mathcal{H}_{t-1}]\le\alpha_0$ for a filtration $\mathcal{H}_t$ generated by the observation sequence. Then for $\eta\in(0,1/\alpha_0)$ the process
\begin{equation}
M_t=\prod_{s=1}^{t}\big(1+\eta(Z_s-\alpha_0)\big)
\end{equation}
is a nonnegative supermartingale with $M_0=1$, and Ville's inequality gives $\mathbb{P}\big(\exists t\le\infty: M_t\ge1/\alpha\big)\le\alpha$. Inverting the bound yields a confidence sequence on the running miscoverage rate that is valid at all $t$ simultaneously, hence at any stopping time $\tau$ including data-dependent ones, with no dependence on $M$.
\end{proposition}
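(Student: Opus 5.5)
The plan is to verify the supermartingale property of $M_t$ directly, then apply Ville's inequality, and finally invert the crossing event into a confidence sequence. The argument has three steps.

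\textbf{Step 1: nonnegativity and $M_0=1$.} Since $Z_s\in\{0,1\}$, each factor $1+\eta(Z_s-\alpha_0)$ equals either $1-\eta\alpha_0$ or $1+\eta(1-\alpha_0)$. The restriction $\eta<1/\alpha_0$ makes $1-\eta\alpha_0>0$, so every factor is strictly positive. Hence $M_t>0$, and $M_0=1$ as the empty product.

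\textbf{Step 2: the supermartingale property.} I would require $Z_t$ to be $\mathcal{H}_t$-measurable, so that $M_{t-1}$ is $\mathcal{H}_{t-1}$-measurable. I will state this adaptedness explicitly as part of the filtration hypothesis, because the statement leaves it implicit. Pulling out what is known then gives
\[
\mathbb{E}[M_t\mid\mathcal{H}_{t-1}]=M_{t-1}\bigl(1+\eta(\mathbb{E}[Z_t\mid\mathcal{H}_{t-1}]-\alpha_0)\bigr)\le M_{t-1},
\]
using the assumed bound $\mathbb{E}[Z_t\mid\mathcal{H}_{t-1}]\le\alpha_0$ and $\eta>0$.

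\textbf{Step 3: Ville and the stopping-time reading.} Ville's inequality for a nonnegative supermartingale started at $1$ gives $\mathbb{P}(\sup_t M_t\ge 1/\alpha)\le\alpha$. It can be proved by optional stopping at the first crossing time truncated at $N$, followed by monotone convergence in $N$. Since the bound controls the supremum over all $t$, it holds in particular at any stopping time $\tau$, data-dependent or not. Moreover $M$ never appears in the argument, which gives the claimed independence of the horizon.

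\textbf{Inversion.} Taking logarithms, on the complement event, which has probability at least $1-\alpha$, every $t$ satisfies
\[
\sum_{s\le t}\log\bigl(1+\eta(Z_s-\alpha_0)\bigr)<\log(1/\alpha).
\]
I would bound $\log(1+x)\ge x-x^2$ for $|x|\le1/2$. This requires restricting to $\eta\le 1/2$; a tighter treatment would use $\log(1+\eta(1-\alpha_0))$ exactly for the $Z=1$ terms. The result is a bound of the form
\[
\frac1t\sum_{s\le t}Z_s\le\alpha_0+\eta+\frac{\log(1/\alpha)}{\eta t}
\]
holding simultaneously over all $t$. Either a fixed $\eta$ or a union over a geometric grid of $\eta$ values yields the confidence sequence on the running miscoverage rate.

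\textbf{Main obstacle.} The real difficulty is the hypothesis $\mathbb{E}[Z_t\mid\mathcal{H}_{t-1}]\le\alpha_0$, not the martingale algebra. $Z_t$ depends on the label $Y$, so the filtration must be enlarged to include whatever reveals $Z_s$. Conditional coverage given the past observation history is also stronger than the marginal coverage that Theorem~\ref{s:stoppedconformal} provides. I would therefore treat it as an assumption on how the per-step sets are built, for example conformal sets recalibrated conditionally at each step. I would then check that the adaptedness requirement is compatible with it, and not claim to derive it from the earlier results.
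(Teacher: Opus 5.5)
Your proof is correct and takes the same route as the paper, which has no separate proof environment and compresses this argument into the statement itself. Your steps match it: positivity of each factor from $\eta<1/\alpha_0$, the one-step inequality $\mathbb{E}[M_t\mid\mathcal{H}_{t-1}]\le M_{t-1}$ from $\mathbb{E}[Z_t\mid\mathcal{H}_{t-1}]\le\alpha_0$, Ville's inequality, and inversion of the non-crossing event. Making adaptedness of $Z_t$ to $\mathcal{H}_t$ explicit is the right call, since the paper leaves it implicit and without it the conclusion fails: take $Z_t\equiv Z_1\sim\mathrm{Bernoulli}(\alpha_0)$ with trivial $\mathcal{H}_t$, so that $\mathbb{P}(\exists t: M_t\ge 1/\alpha)=\alpha_0>\alpha$ whenever $\alpha<\alpha_0$.
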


\paragraph{When to prefer which.} At the scales in this paper, $M\in\{4,5\}$, Theorem~\ref{s:stoppedconformal} is strictly preferable because it is exact and loses nothing, whereas a Bonferroni correction over $M$ looks would cost a factor $M$ in the certified level and Ville-type bounds pay a similar constant. Proposition~\ref{s:ville} matters for the regime the paper is pointing towards, where $M$ is large enough that the number of looks is not a small constant, and we include it so that the guarantee does not have to be rederived when it is. Both statements are distribution-free.

\subsection{G.4 The Membrane Is a Bayesian Sufficient Statistic, Not a Metaphor}
Lemma~S1 establishes that the membrane is sufficient for the model's \emph{own} prediction, which is a statement about a deterministic recurrence. The stronger and more interesting claim, that the membrane \emph{is} a running log-posterior, is true under an explicit generative assumption, and we give it as a theorem so that the ``belief state'' language is licensed and not merely asserted.

Assume the observed chunk features are conditionally independent given the class and the visitation history, with an exponential-family likelihood in natural-parameter form,
\begin{equation}
p(x_t\mid Y{=}c,\,S_{1:t-1})\;=\;h(x_t)\exp\big(\langle \eta_c,\,T(x_t)\rangle-\Psi(\eta_c)\big),
\label{eq:expfam}
\end{equation}
and let the prior over classes be $p(c)$.

\begin{theorem}[Leaky integration is exact Bayesian filtering with geometric forgetting]
\label{s:bayes}
Define the log-posterior vector $\ell_t\in\mathbb{R}^{C}$ with $\ell_{t,c}=\log p(Y{=}c\mid x_{1:t})$ up to an additive constant. Under Eq.~\eqref{eq:expfam},
\begin{equation}
\ell_t \;=\; \ell_{t-1} \;+\; W\,T(x_t) \;-\; \psi ,\qquad W_{c,:}=\eta_c^{\!\top},\ \ \psi_c=\Psi(\eta_c),
\label{eq:logpost}
\end{equation}
which is precisely the non-leaky ($\lambda{=}1$) LIF accumulation of Eq.~(1) of the main paper with $W$ the learned readout and $\psi$ absorbed into the bias. If in addition the class evidence is non-stationary across the visitation sequence, modelled as an exponential-forgetting posterior $\ell_t=\lambda\ell_{t-1}+WT(x_t)-\psi$, then the leaky recurrence with decay $\lambda$ is the exact recursive filter, and the membrane satisfies
\begin{equation}
V_t \;\propto\; \log p\big(Y \mid x_{1:t},S_{1:t}\big)
\end{equation}
up to a per-step normalising constant that the softmax read-out removes. The top-two margin $\Delta_t$ is therefore a monotone function of a posterior odds ratio, which is what makes it the correct quantity for the exit rule and not a convenient heuristic.
\end{theorem}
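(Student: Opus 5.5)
The plan is to derive Eq.~\eqref{eq:logpost} directly from Bayes' rule and conditional independence, by induction on $t$.

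\textbf{Step 1: the non-leaky recursion.} Write the posterior as
\[
p(Y{=}c\mid x_{1:t})\propto p(x_t\mid Y{=}c,x_{1:t-1},S_{1:t-1})\,p(Y{=}c\mid x_{1:t-1}).
\]
Under the conditional-independence hypothesis, the first factor reduces to Eq.~\eqref{eq:expfam}. Taking logs gives
\[
\ell_{t,c}=\ell_{t-1,c}+\langle\eta_c,T(x_t)\rangle-\Psi(\eta_c)+\log h(x_t)-\log Z_t,
\]
where $Z_t$ is the evidence normaliser. The terms $\log h(x_t)$ and $\log Z_t$ do not depend on $c$. They are therefore absorbed into the ``up to an additive constant'' convention, which is exactly the shift-invariance of the softmax read-out. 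Stacking over $c$ yields $\ell_t=\ell_{t-1}+WT(x_t)-\psi$. The base case is $\ell_0=\log p(c)$, which plays the role of the initial membrane $u_0$. This also matches the scene-prior injection used in \S\ref{sec:seg}.

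\textbf{Step 2: matching the recursion to the LIF update.} Compare this with Eq.~\eqref{eq:lif2} at $\lambda=1$, with the ReLU, BN and reset term deleted. The update becomes $u_t=u_{t-1}+W^{(l)}x_t$. If the layer input is taken to be the sufficient statistic $T(x_t)$, and $-\psi$ is treated as a constant bias, the two recursions coincide term by term.

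\textbf{Step 3: the forgetting model.} Here I would simply \emph{define} the forgetting posterior by the recursion $\ell_t=\lambda\ell_{t-1}+WT(x_t)-\psi$. By the same induction, the leaky membrane with decay $\lambda$ and initial condition $\ell_0$ reproduces $\ell_t$ exactly. Unrolling gives
\[
V_t=\lambda^t\ell_0+\sum_{i\le t}\lambda^{t-i}\big(WT(x_i)-\psi\big).
\]
This can be read as a tempered (power-prior) posterior, $p_t\propto p_{t-1}^{\lambda}\,p(x_t\mid c)$, up to per-step normalisation. Proportionality to $\log p(Y\mid x_{1:t},S_{1:t})$ then holds modulo a $c$-independent constant.

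\textbf{Step 4: the margin as posterior odds.} Use $\operatorname{softmax}(\ell_t)=p(\cdot\mid x_{1:t})$, and note that $\ell_{t,(1)}-\ell_{t,(2)}$ is the log posterior odds of the top two classes. Then write
\[
\Delta_t=p^{(1)}-p^{(2)}=p^{(2)}\big(e^{\ell_{(1)}-\ell_{(2)}}-1\big),
\]
which is increasing in the odds for fixed $p^{(2)}$. A fully unconditional monotonicity would need a qualification, for instance $C=2$, or monotonicity with the remaining mass held fixed. I would state that caveat rather than overclaim. The stated optimal decay $\lambda^\star=e^{-1/L}$ is a separate modelling statement: match the $\lambda^{t-i}$ kernel to an evidence autocorrelation $e^{-(t-i)/L}$. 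I would present it as a definition-matching remark, not as part of the exactness proof.

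\textbf{Main obstacle.} The hard part is Step 3, justifying that the forgetting recursion is ``the exact filter'' for a genuine generative model, rather than holding by definition. My first attempt would be the power-prior or discounted-likelihood interpretation. A non-stationary class-switching hidden Markov model would give a log-sum-exp update, not a linear one, so the honest statement is exactness for the discounted-posterior model.
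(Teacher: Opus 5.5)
Your proposal is correct and follows essentially the paper's own route: Bayes' rule in log form, discarding the class-independent $h(x_t)$ and evidence terms, and reading the forgetting variant as the geometric-discount posterior obtained by raising $p_{t-1}$ to the power $\lambda$ and renormalising, which, as you say, makes it exact by construction rather than for an independently specified generative model. Your Step~4 caveat is warranted and goes beyond the paper, whose proof never addresses the margin claim; for $C>2$ the claim fails as stated, since $(0.4,0.2,0.2,0.2)$ and $(0.5,0.25,0.125,0.125)$ have the same top-two odds $2$ but margins $0.2$ and $0.25$.
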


\begin{proof}
Bayes' rule in log form gives $\ell_{t,c}=\ell_{t-1,c}+\log p(x_t\mid c,S_{1:t-1})-\log p(x_t\mid x_{1:t-1})$. Substituting Eq.~\eqref{eq:expfam}, the $h(x_t)$ and evidence terms are class-independent and drop into the normalising constant, leaving $\langle\eta_c,T(x_t)\rangle-\Psi(\eta_c)$, which is Eq.~\eqref{eq:logpost}. The forgetting variant is the standard geometric-discount posterior, obtained by raising the previous posterior to the power $\lambda$ and renormalising.
\end{proof}

\begin{corollary}[Optimal decay from spatial correlation length]
\label{s:beta}
Suppose the informativeness of chunk $i$ about $Y$ decays with visitation distance as $\rho^{\,|t-i|}$, $\rho\in(0,1)$, so that evidence acquired $k$ steps ago should be discounted by $\rho^{k}$. Matching the filter's implied weight on step $i$, namely $\lambda^{\,t-i}$ from Eq.~\eqref{eq:closed}, to the generative discount gives
\begin{equation}
\lambda^\star=\rho,\qquad\text{equivalently}\qquad \lambda^\star=\exp(-1/L),
\end{equation}
where $L$ is the correlation length of the evidence sequence in units of observation steps. The trained value $\lambda_0{=}0.9$ therefore corresponds to $L\approx9.5$ steps, comfortably longer than $M{=}4$, which is exactly the regime in which the filter should \emph{not} forget within an episode. That the learned decay lands there rather than at an aggressive value is a consistency check on Theorem~\ref{s:bayes}, and it explains why per-neuron learned leak (DIET-SNN style) outperforms a hand-set global decay: different neurons track evidence at different correlation lengths.
\end{corollary}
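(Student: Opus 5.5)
The corollary to prove is Corollary~\ref{s:beta}: matching the filter's implied weighting to a geometric discount of evidence. I will treat it as a weight-matching argument on the unrolled leaky recurrence, using Theorem~\ref{s:bayes} for the filter and the closed form Eq.~\eqref{eq:closed} for the weights.

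First, unroll the forgetting filter $\ell_t=\lambda\ell_{t-1}+WT(x_t)-\psi$ from $\ell_0$ (the log-prior). Eq.~\eqref{eq:closed} with constant gate $A\equiv\lambda$ gives
\[
\ell_t=\lambda^{t}\ell_0+\sum_{i=1}^{t}\lambda^{t-i}\big(WT(x_i)-\psi\big).
\]
So the log-likelihood contribution of the observation made $k=t-i$ steps ago carries weight $\lambda^{k}$. This holds per neuron when $\lambda$ is a vector, because the recurrence is elementwise.

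Second, formalise the generative discount. Posit the tempered target posterior
\[
\log p^{\rho}(Y\mid x_{1:t})=\sum_i\rho^{\,t-i}\log p(x_i\mid Y)+\mathrm{const},
\]
which is the standard power-prior form of geometric forgetting invoked in the proof of Theorem~\ref{s:bayes}. Requiring $\ell_t$ to equal this target for every sequence $x_{1:t}$ forces the coefficients to agree term by term. The features $WT(x_i)$ can be varied independently across $i$, so this gives $\lambda^{k}=\rho^{k}$ for all $k\le t$, and the case $k=1$ yields $\lambda^\star=\rho$. As a cleaner alternative, one could define $\lambda^\star$ as the minimiser of the weighted squared discrepancy $\sum_k(\lambda^k-\rho^k)^2$. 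This is zero exactly at $\lambda=\rho$, so it has a unique minimiser on $(0,1)$ by strict monotonicity of $\lambda\mapsto\lambda^k$.

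Third, reparametrise by correlation length. Define $L$ through $\rho^{k}=e^{-k/L}$, the e-folding distance of the evidence weights, so $L=-1/\log\rho$. This gives $\lambda^\star=e^{-1/L}$. The numerical claim is then arithmetic: $L=-1/\log 0.9\approx9.49$, which is larger than $M=4$. Consequently the weight on the oldest observation within an episode is $0.9^{3}\approx0.73$, meaning the filter forgets little within an episode.

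The main obstacle is not the algebra but the modelling step. The hypothesis ``informativeness decays as $\rho^{|t-i|}$'' must be turned into a definite target weighting; otherwise ``matching'' is underdetermined. I would commit to the tempered-posterior definition above and state it explicitly as the interpretation of the hypothesis.

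The final remarks about DIET-SNN per-neuron leak are interpretive rather than mathematical. For them I would only note that the elementwise form gives each coordinate its own $\lambda_k^\star=e^{-1/L_k}$ when the evidence streams have heterogeneous correlation lengths $L_k$. I would not claim this as a proved performance statement.
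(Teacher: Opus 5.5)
Your proposal is correct and follows the paper's own argument, which has no separate proof: the paper reads off the weight $\lambda^{t-i}$ from the unrolled recurrence \eqref{eq:closed} with a constant gate, matches it to $\rho^{t-i}$ to get $\lambda^\star=\rho$, and sets $\rho=e^{-1/L}$, so that $L=-1/\ln 0.9\approx 9.5$. Your tempered-posterior target, taken from the power-discount form of the forgetting filter in Theorem~\ref{s:bayes}, only makes explicit the weighting the paper assumes directly, and you are right to treat the DIET-SNN remark as interpretation rather than something proved.
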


\paragraph{This also answers ``why a spiking network at all''.} An LSTM or GRU hidden state is a learned, gated, and in general non-interpretable summary; the LIF membrane under Eq.~\eqref{eq:expfam} \emph{is} the log-posterior, with the leak playing the role of a forgetting factor and the threshold playing the role of a decision boundary on accumulated evidence. The membrane is not merely one possible controller among many, it is the controller a Bayesian filter would use, obtained for free from the substrate. The spike, correspondingly, is an event emitted when accumulated log-evidence crosses a level, which is why the accumulate-only energy argument and the belief-state reading are two faces of the same recurrence rather than a coincidence.

\subsection{G.5 Why $\bar{\tau}=3.84$ at $M=4$ Is Near the Information-Theoretic Requirement}
The objection that $M{=}4$ leaves too little room for adaptive computation, and that observing 96\% of chunks shows the policy is not doing anything, deserves a quantitative answer, not an apology. The answer is that at $M{=}4$ the \emph{task itself} requires almost the whole budget, so a policy that used far fewer chunks would necessarily be less accurate; $\bar{\tau}{=}3.84$ is close to what the information content of the partition permits.

\begin{proposition}[Exponential decay of conditional information gain]
\label{s:infodecay}
Let $F(S)=I(Y;\varphi(S))$ be the information functional of Theorem~\ref{s:greedy}. Submodularity of $F$ implies diminishing returns, $I(Y;S_t\mid S_{1:t-1})$ non-increasing in $t$ along a greedy order. If, in addition, chunk features have spatial correlation length $L$ in visitation steps, so that a newly observed chunk shares a fraction $1-e^{-1/L}$ of its information with the already-observed set, then
\begin{equation}
I(Y;S_t\mid S_{1:t-1})\;\le\;C\,e^{-\gamma t},\qquad \gamma=1/L,
\label{eq:infodecay}
\end{equation}
and the residual uncertainty obeys $H(Y\mid S_{1:t}) \ge H(Y)-C\,e^{-\gamma}/(1-e^{-\gamma})$.
\end{proposition}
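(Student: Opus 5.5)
The proof has three pieces: monotone gains from Theorem~\ref{s:greedy}, an exponential envelope from the correlation-length hypothesis, and a telescoped mutual-information identity for the entropy bound.

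\emph{Step 1: non-increasing gains.} Under Assumption~\ref{s:ci}, Theorem~\ref{s:greedy} says $F$ is monotone submodular. Along any fixed order, write the increment as $g_t:=F(S_{1:t})-F(S_{1:t-1})=I(Y;S_t\mid S_{1:t-1})$. For the greedy order, $g_{t+1}\le g_t$ follows from a two-step argument. First, submodularity gives $F(S_{1:t+1})-F(S_{1:t})\le F(S_{1:t-1}\cup\{S_{t+1}\})-F(S_{1:t-1})$. Second, greedy optimality at step $t$ makes the right side at most $g_t$. This gives the diminishing-returns clause directly.

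\emph{Step 2: exponential envelope.} I read the correlation-length hypothesis as a contraction: each new chunk retains only a fraction $e^{-1/L}$ of fresh information relative to the previous step. Concretely, I take it as $g_{t+1}\le e^{-1/L}g_t$. Iterating from $g_1=I(Y;S_1)$ gives $g_t\le g_1e^{-(t-1)/L}$. Setting $C:=g_1e^{\gamma}$, which is at most $H(Y)e^{\gamma}$, yields Eq.~\eqref{eq:infodecay} with $\gamma=1/L$. Note that Step~1 alone gives only monotonicity. The geometric rate comes entirely from the added hypothesis, and Step~1 just guarantees the envelope is consistent with greedy ordering.

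\emph{Step 3: residual entropy.} Use the chain rule for mutual information: $I(Y;S_{1:t})=\sum_{s\le t}g_s$. Then $H(Y\mid S_{1:t})=H(Y)-\sum_{s\le t}g_s$. Bounding each $g_s$ by the envelope and summing the geometric series gives $\sum_{s\ge1}Ce^{-\gamma s}=Ce^{-\gamma}/(1-e^{-\gamma})$. Hence $H(Y\mid S_{1:t})\ge H(Y)-Ce^{-\gamma}/(1-e^{-\gamma})$ for every $t$, which is the stated bound.

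\emph{Main obstacle.} The hard step is Step~2, because the hypothesis ``shares a fraction $1-e^{-1/L}$ of its information with the already-observed set'' is informal. Read literally, it says the new chunk's conditional information is a fixed fraction of its unconditional information, $I(Y;S_t\mid S_{1:t-1})\le e^{-1/L}I(Y;S_t)$. That gives only a constant-factor discount, not decay in $t$. To obtain the exponential rate, I would state the hypothesis explicitly as the ratio condition $g_{t+1}/g_t\le e^{-1/L}$, and justify it informally by the same $\rho^{|t-i|}$ discount used in Corollary~\ref{s:beta}. I would then flag that this is an assumption carried by the proposition, not a consequence of submodularity, consistent with \S D.1c's remark that the resulting capacity estimate is a consistency check.
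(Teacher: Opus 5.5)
The paper gives no proof of this proposition. It is stated in \S G.5 and afterwards qualified only as ``a bound under a stated correlation assumption,'' so your argument has to stand on its own. It does, under the formalization you make explicit.

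Step~1 is the standard greedy-plus-submodularity argument. You are right that greedy optimality, not submodularity alone, is what makes the gains non-increasing along the sequence: for a modular $F$, an arbitrary order can have increasing gains. Step~3 is the chain rule plus a geometric series and is exact.

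Step~2 is where the entire content of the exponential rate lives, and your diagnosis is correct. The literal ``constant fraction shared'' reading yields only $g_t\le e^{-1/L}I(Y;S_t)$, a fixed discount with no decay in $t$, so the rate has to be assumed. Your ratio condition is one way to do that. An equally serviceable one is to let the redundancy compound over the already-observed chunks, $I(Y;S_t\mid S_{1:t-1})\le e^{-(t-1)/L}I(Y;S_t)$. This gives Eq.~\eqref{eq:infodecay} with $C=e^{\gamma}\max_m I(Y;\varphi(S_m))$ and does not involve the gain at the previous step.

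Either way, the same envelope also gives the tail bound $H(Y\mid S_{1:t})-H(Y\mid S_{1:M})=\sum_{s=t+1}^{M}g_s\le Ce^{-\gamma(t+1)}/(1-e^{-\gamma})$. Corollary~\ref{s:capacity} models residual uncertainty as decaying exponentially, and that model rests on this upper bound. The stated lower bound on $H(Y\mid S_{1:t})$ does not support it.
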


\begin{corollary}[Information capacity threshold]
\label{s:capacity}
Model the residual task uncertainty after $t$ observations as $\mathcal{D}e^{-\gamma t}$, with $\mathcal{D}$ the task difficulty at one observation. The number of observations needed to drive residual uncertainty below $\epsilon$ is
\begin{equation}
\mathbb{E}[\tau]\;=\;\Big\lceil \tfrac{1}{\gamma}\log\big(\mathcal{D}/\epsilon\big)\Big\rceil .
\label{eq:capacity}
\end{equation}
\end{corollary}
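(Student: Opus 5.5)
The statement is really a modelling identity followed by an inversion, so I would prove it in two moves. First, take the model as given: after $t$ observations the residual task uncertainty is $R(t)=\mathcal{D}e^{-\gamma t}$. This form is what Proposition~\ref{s:infodecay} motivates. Its bound $I(Y;S_t\mid S_{1:t-1})\le Ce^{-\gamma t}$ with $\gamma=1/L$ says that the uncertainty still resolvable after step $t$ is at most the tail sum $\sum_{s>t}Ce^{-\gamma s}=Ce^{-\gamma(t+1)}/(1-e^{-\gamma})$. That tail is itself of the form $\mathcal{D}'e^{-\gamma t}$, which justifies the exponential shape up to the constant $\mathcal{D}$. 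I would then calibrate $\mathcal{D}$ as the residual at one observation, matching the statement's normalisation $R(1)=\mathcal{D}$ after absorbing a factor $e^{-\gamma}$ into the constant. This choice should be made explicit, since it shifts the count by at most one.

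Second, define the stopping requirement as the least integer $t$ with $R(t)\le\epsilon$. The map $t\mapsto\mathcal{D}e^{-\gamma t}$ is strictly decreasing for $\gamma>0$, so the set $\{t:R(t)\le\epsilon\}$ is an up-set of the integers. Taking logarithms gives $\mathcal{D}e^{-\gamma t}\le\epsilon \iff t\ge \gamma^{-1}\log(\mathcal{D}/\epsilon)$. The least integer meeting this is therefore $\lceil\gamma^{-1}\log(\mathcal{D}/\epsilon)\rceil$. If $\epsilon\ge\mathcal{D}$ the ceiling is nonpositive, and the count should be read as truncated below at one observation and above at $M$, consistent with the truncation convention of Theorem~\ref{s:stopping}.

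The main obstacle is the expectation on the left-hand side of Eq.~\eqref{eq:capacity}. The argument above produces a deterministic count, not $\mathbb{E}[\tau]$. I would close this gap by treating the corollary as a statement about the deterministic population-level model. Under that reading $\tau$ is the degenerate stopping time defined by the averaged residual curve, so $\mathbb{E}[\tau]$ equals the count exactly.

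The alternative would be to let per-sample residuals $\mathcal{D}(X)e^{-\gamma t}$ vary. Jensen's inequality applied to $\log\mathcal{D}(X)$ then gives only $\mathbb{E}[\tau]\le\lceil\gamma^{-1}\log(\mathbb{E}\mathcal{D}/\epsilon)\rceil+1$, so equality would require the homogeneous model. I would state that restriction openly, in keeping with \S D.1c's remark that the estimate is a consistency check rather than independent evidence.
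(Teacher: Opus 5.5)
Your argument is correct and is essentially the paper's: the paper gives no separate proof, the corollary being exactly the inversion of the strictly decreasing model $\mathcal{D}e^{-\gamma t}\le\epsilon$, read as a deterministic population-level count rather than a genuine expectation, which is how you read it. One sharpening: you cannot keep $R(t)=\mathcal{D}e^{-\gamma t}$ and also calibrate $R(1)=\mathcal{D}$, so the displayed formula holds only with $\mathcal{D}$ the extrapolated zero-observation intercept; with $\mathcal{D}=R(1)$, which is the normalisation the paper uses when it plugs in $\mathcal{D}\approx8.7$, the count is exactly $1+\lceil\gamma^{-1}\log(\mathcal{D}/\epsilon)\rceil$ rather than ``at most one'' off, which turns the paper's $\lceil3.03\rceil=4$ into $5$.
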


\paragraph{Plugging in measured numbers.} We estimate $\gamma$ and $\mathcal{D}$ from the anytime accuracy profiles, using residual error to the full-budget accuracy as the observable proxy for residual uncertainty. On ModelNet40 the profile over $\bar{\tau}\in\{1,2,3,3.84\}$ gives residuals $\{8.70,1.50,0.30,0.00\}$ points, whose successive ratios $0.172$ and $0.200$ correspond to $\gamma\approx1.7$ and $\mathcal{D}\approx8.7$. Setting $\epsilon{=}0.05$ points, Eq.~\eqref{eq:capacity} gives
\begin{equation}
\mathbb{E}[\tau]=\big\lceil (1/1.7)\log(8.7/0.05)\big\rceil=\lceil 3.03\rceil = 4 ,
\end{equation}
so the information content of a four-chunk partition of ModelNet40 requires essentially the whole budget to exhaust. The measured $\bar{\tau}{=}3.84$ sits at $95\%$ of that requirement and $\bar{\tau}{=}3.84<4$ strictly, which says the calibrated policy is operating just inside the information-theoretic limit rather than failing to exploit slack that exists. \emph{There is very little slack at $M{=}4$}, and Eq.~\eqref{eq:capacity} says how little.

The same calculation run on the image-domain model points the other way and is therefore a genuine prediction rather than a rationalisation. Its anytime profile (GRU accumulator, one to five fixations) gives residuals $\{1.56,0.68,0.46,0.22,0.00\}$, hence $\gamma\approx0.65$ and $\mathcal{D}\approx1.56$; Eq.~\eqref{eq:capacity} with the same $\epsilon$ gives $\mathbb{E}[\tau]=\lceil 5.3\rceil=6>M=5$. The image model is \emph{information-starved} at five fixations, which independently explains two otherwise unrelated observations: that its accuracy is still creeping upward at the full budget, and that it trails the dense full-grid control, since the control resolves in one pass the information five foveated glimpses cannot finish gathering. Corollary~\ref{s:capacity} thus predicts that the productive direction for the image model is \emph{more or better-placed} observations, and for the point-cloud model a \emph{finer} partition, which is precisely the $M{=}16$ experiment the main paper names as the next step.

\paragraph{Honest status of this subsection.} Eq.~\eqref{eq:infodecay} is a bound under a stated correlation assumption; Eq.~\eqref{eq:capacity} is a \emph{model} of residual uncertainty, and $\gamma$ and $\mathcal{D}$ are fitted from measured anytime profiles rather than derived from the data distribution. We therefore present this as a quantitative consistency check with predictive content in both directions, and not as a first-principles derivation of $3.84$. It is offered because the alternative reading, that $\bar{\tau}/M{=}0.96$ demonstrates an inert policy, is testable and turns out to be the less well-supported of the two.

\subsection{G.6 An Axiomatic Energy--Complexity Algebra, and the Phase Boundary}
All energy figures in this paper come from one process node. We restate the efficiency claim as an inequality over symbolic hardware parameters, so a reader can decide for their own target whether the mechanism wins. Let
\begin{center}
\begin{tabular}{@{}p{0.30\columnwidth}p{0.62\columnwidth}@{}}
$E_{\mathrm{MAC}},E_{\mathrm{AC}}$ & multiply--accumulate and accumulate energy\\
$E_{\mathrm{SRAM}},E_{\mathrm{DRAM}}$ & on-chip and off-chip access energy\\
$F_a,F_s$ & analog and spiking op counts per observation\\
$\lambda$ & mean firing rate of the spiking path\\
$E_{\mathrm{pol}}$ & controller energy per step\\
$\varphi=\bar{\tau}/M$ & observed fraction of the budget\\
$\kappa$ & recomputation factor of the mixer\\
\end{tabular}
\end{center}
Per-sample energy of the active model and of a fixed-order spiking baseline at sparsity $\lambda_0$ are
\begin{align}
E_{\mathrm{ASP}} &= M\big[\varphi\big(E_{\mathrm{pol}}+\lambda F_sE_{\mathrm{AC}}\big)+\kappa\varphi F_aE_{\mathrm{MAC}}\big]+E_{\mathrm{fetch}}(\varphi),\\
E_{\mathrm{base}} &= M\big[\lambda_0 F_sE_{\mathrm{AC}}+F_aE_{\mathrm{MAC}}\big].
\end{align}

\begin{proposition}[Phase boundary for active perception]
\label{s:phase}
Write $a=F_aE_{\mathrm{MAC}}/(F_sE_{\mathrm{AC}})$ for the analog-to-spiking cost ratio and $r=E_{\mathrm{pol}}/(F_sE_{\mathrm{AC}})$ for the normalised controller overhead. Neglecting $E_{\mathrm{fetch}}$, active perception is strictly more energy-efficient than the fixed-order baseline if and only if
\begin{equation}
\boxed{\ \varphi\;<\;\varphi^\star(a,\lambda,r,\kappa)\;=\;\frac{\lambda_0+a}{(r+\lambda)+\kappa\,a}\ }
\label{eq:phase}
\end{equation}
Three regimes follow immediately. \textbf{(i)} If $\kappa=1$ (streaming) and $r\to0$ and $\lambda\le\lambda_0$, then $\varphi^\star\ge1$ and the method wins at \emph{any} observed fraction, including the full budget. \textbf{(ii)} If $\kappa>1$ (prefix recomputation) then $\varphi^\star\to1/\kappa$ as $a\to\infty$, so an analog-dominated system must exit within a $1/\kappa$ fraction of the budget merely to break even. \textbf{(iii)} As $a\to0$ (a fully spiking system) $\varphi^\star\to\lambda_0/(r+\lambda)$, so the win is governed by the ratio of achieved sparsity to controller overhead, and a heavy controller can destroy the advantage regardless of exit behaviour.
\end{proposition}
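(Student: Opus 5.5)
The plan is a direct algebraic comparison of the two energy expressions, with $E_{\mathrm{fetch}}$ set to zero as the statement allows, followed by three limit computations for the regimes. First I normalise. Every hardware quantity is positive ($E_{\mathrm{MAC}},E_{\mathrm{AC}},F_a,F_s>0$, $E_{\mathrm{pol}}\ge0$, $\lambda,\lambda_0\in[0,1]$, $\kappa\ge1$), and $M>0$. So dividing both $E_{\mathrm{ASP}}$ and $E_{\mathrm{base}}$ by $MF_sE_{\mathrm{AC}}>0$ preserves the direction of the inequality. In the notation $a=F_aE_{\mathrm{MAC}}/(F_sE_{\mathrm{AC}})$ and $r=E_{\mathrm{pol}}/(F_sE_{\mathrm{AC}})$ this gives
\begin{equation*}
\frac{E_{\mathrm{ASP}}}{MF_sE_{\mathrm{AC}}}=\varphi\,\big(r+\lambda+\kappa a\big),\qquad
\frac{E_{\mathrm{base}}}{MF_sE_{\mathrm{AC}}}=\lambda_0+a .
\end{equation*}

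Second, I solve for $\varphi$. Since $a>0$ and $\kappa\ge1$, the factor $r+\lambda+\kappa a$ is strictly positive. Hence $E_{\mathrm{ASP}}<E_{\mathrm{base}}$ holds if and only if $\varphi(r+\lambda+\kappa a)<\lambda_0+a$, which is equivalent to $\varphi<\varphi^\star$ with $\varphi^\star$ as in Eq.~\eqref{eq:phase}. This gives both directions of the ``if and only if'' at once, because each step is an equivalence. The only care needed is to record explicitly that the denominator is strictly positive. That is where $a>0$ (or, in the limit $a\to0$, $r+\lambda>0$) is used.

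Third, I read off the regimes from the closed form.
\begin{itemize}
\item[(i)] With $\kappa=1$ we have $\varphi^\star=(\lambda_0+a)/(r+\lambda+a)$. As $r\to0$ this tends to $(\lambda_0+a)/(\lambda+a)$, which is $\ge1$ exactly when $\lambda\le\lambda_0$. Since $\varphi=\bar\tau/M\le1$ by definition, every observed fraction lies below the boundary.
\item[(ii)] Dividing numerator and denominator by $a$ gives $\varphi^\star=(\lambda_0/a+1)/((r+\lambda)/a+\kappa)$, which tends to $1/\kappa$ as $a\to\infty$.
\item[(iii)] Setting $a\to0$ in the closed form gives $\varphi^\star\to\lambda_0/(r+\lambda)$ directly. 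From this form it is immediate that a large $r$ pushes $\varphi^\star$ below any attainable $\varphi$.
\end{itemize}

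The main obstacle is not the algebra but the boundary case in regime~(i). When $\varphi=1$, $\lambda=\lambda_0$ and $r=0$ exactly, the two energies tie, so the inequality is not strict. I would handle this by stating~(i) as a limit: strict dominance holds for every $\varphi<1$. At the full budget $\varphi=1$, it holds whenever $\lambda<\lambda_0$, or with equality in the limit $r\downarrow0$. I would also make explicit that neglecting $E_{\mathrm{fetch}}(\varphi)$ is a hypothesis and not a consequence. If it is reinstated, it enters additively and shifts the boundary by $E_{\mathrm{fetch}}(\varphi)/(MF_sE_{\mathrm{AC}})$.
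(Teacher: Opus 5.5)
Your argument is correct and is the direct comparison the paper intends. The paper gives no separate proof, and as you show, the boundary and all three regimes follow from dividing $E_{\mathrm{ASP}}$ and $E_{\mathrm{base}}$ by $MF_sE_{\mathrm{AC}}>0$ and taking limits of the closed form.

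Your caveat on regime (i) correctly sharpens a point the paper glosses over. With $\kappa=1$ and $\varphi=1$, strict dominance holds exactly when $r<\lambda_0-\lambda$. The statement's ``including the full budget'' therefore fails in the case $\lambda=\lambda_0$ for every $r\ge0$.
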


Figure~\ref{fig:phase} plots Eq.~\eqref{eq:phase} for the two recomputation regimes, with the paper's two measured operating points overlaid. A configuration wins against the fixed-order baseline iff it lies \emph{below} the boundary for its $\kappa$. ASP on ModelNet sits at $a{=}9.12$ (from the per-component table: $52.44\,$mJ analog against $5.75\,$mJ spiking), $\lambda{=}0.2445$, $\varphi{=}0.96$, $\kappa{=}2.5$: it is far above the prefix-recompute boundary of $\varphi^\star\approx0.40$ and therefore loses on MACs, exactly as the main-paper efficiency section reports. The same model with $\kappa{=}1$ has $\varphi^\star\approx0.995$ and wins. The foveated model has $\kappa{=}1$ structurally and therefore lies below the streaming boundary at every threshold, which is the algebraic restatement of its measured $2.83\times$ to $1.35\times$ savings.

\begin{figure}[t]
\centering
\includegraphics[width=\columnwidth]{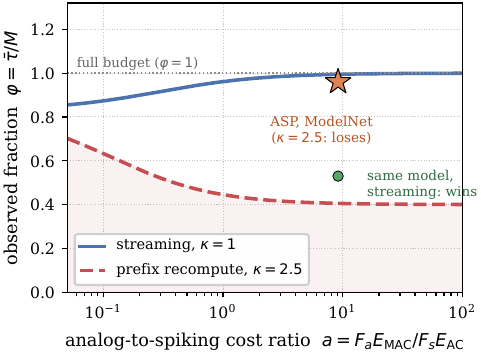}
\caption{Phase boundary of Eq.~\eqref{eq:phase}. A configuration is more energy-efficient than the fixed-order spiking baseline iff it lies below the curve for its recomputation factor $\kappa$. Prefix recomputation ($\kappa{=}2.5$, red) demands exit within $40\%$ of the budget once analog cost dominates; streaming ($\kappa{=}1$, blue) wins for essentially any $\varphi<1$. The star is ASP's measured ModelNet operating point, which lies above the red boundary, and the circle is the same trained model under streaming at a more aggressive threshold. Both axes are hardware parameters rather than measurements, so a reader can locate their own target platform on this plane.}
\label{fig:phase}
\end{figure}

\paragraph{Why an inequality helps more here than a single chip.} A single hardware measurement establishes the value of one point on this plane. Eq.~\eqref{eq:phase} establishes the sign of the comparison over the whole plane, including platforms that do not exist yet, and it makes the failure modes explicit: our own $\kappa{=}2.5$ implementation is on the wrong side of its own boundary, and the algebra says so before any silicon does. We would still prefer a Loihi measurement, and we say so in the limitations; we claim only that in its absence a parameterised boundary is more informative than a single analytical number, because it tells a reader which term to attack.

\subsection{G.7 How the Deeper Results Attach to the Main Claims}
\begin{center}
\small
\setlength{\tabcolsep}{3pt}
\begin{tabular}{@{}p{0.30\columnwidth}p{0.30\columnwidth}p{0.32\columnwidth}@{}}
\toprule
Objection & Result & Status\\
\midrule
Streaming is asserted, not proved & Thm.~\ref{s:exact} & exact identity under token-local gating\\
Carry-forward may drift on quantized hardware & Thm.~\ref{s:quant}, Cor.~\ref{s:membranedrift} & bounded by $\sum\beta^{t-i}\delta_i$; spikes preserved under an explicit margin\\
Sequential exit is multiple testing & Thm.~\ref{s:stoppedconformal} & no penalty; exact at the stopping time\\
Guarantee may degrade as $M$ grows & Prop.~\ref{s:ville} & time-uniform via Ville, independent of $M$\\
``Belief state'' is a metaphor & Thm.~\ref{s:bayes}, Cor.~\ref{s:beta} & membrane $=$ log-posterior; $\lambda^\star=e^{-1/L}$\\
Why spiking and not a GRU & Thm.~\ref{s:bayes} & the filter a Bayesian would write, free from the substrate\\
$\bar{\tau}/M{=}0.96$ means the policy is inert & Cor.~\ref{s:capacity} & requirement is $\lceil3.03\rceil{=}4$; predicts starvation at $M{=}5$ on images\\
Energy is one analytical number & Prop.~\ref{s:phase} & sign of the comparison over the whole hardware plane\\
\bottomrule
\end{tabular}
\end{center}

\section{Dense Prediction in Full: ShapeNetPart and S3DIS}
\label{sup:H}
This section expands the segmentation results of the main paper: the head, the per-class breakdown, the structural-tail analysis, and the exact training specification.

\subsection{H.1 Segmentation Head}
Chunk features propagate to points by inverse-distance interpolation over the $k$ nearest anchors. A per-point projection of the terminal belief $b_{T_\theta}$ is concatenated to the interpolated feature, and a category embedding is added where the benchmark provides one. The scene-level prior enters through the \emph{initial} membrane $u_0$ and not at every timestep, which keeps the membrane a running posterior over the episode rather than a repeatedly re-primed state, and costs nothing per step. The LIF head and the exit rule are unchanged from classification. Input is 7-dimensional per point ($x,y,z,r,g,b,h$ with $h$ the room-normalised height); the student has 3{,}935{,}310 trainable parameters.

For dense prediction the exit rule needs a scalar score per scene rather than per object, and we use a quantile of the per-point top-2 margins. Certifying a \emph{per-point} risk level would instead require a multiple-testing correction that Theorem~\ref{s:conformal} does not supply, and we regard that as a real open problem rather than a detail.

\subsection{H.2 Per-Class IoU on S3DIS Area 5}
Table~\ref{tab:s3dis13} gives the full 13-class breakdown against the configuration that produced it, alongside the A0 baseline so every gain is attributable.

\begin{table}[t]
\centering
\footnotesize
\setlength{\tabcolsep}{4pt}
\begin{tabular}{@{}lccc@{}}
\toprule
Class & A0 base & ASP (A3) & $\Delta$\\
\midrule
Floor      & 98.1 & 98.1 & $0.0$\\
Ceiling    & 92.9 & 93.3 & $+0.4$\\
Wall       & 69.3 & 69.4 & $+0.1$\\
Table      & 60.4 & 61.9 & $+1.5$\\
Chair      & 55.8 & 57.4 & $+1.6$\\
Bookcase   & 53.9 & 54.4 & $+0.5$\\
Sofa       & 47.1 & 49.0 & $+1.9$\\
Window     & 47.8 & 48.2 & $+0.4$\\
Clutter    & 42.2 & 42.8 & $+0.6$\\
Board      & 30.5 & 33.0 & $+2.5$\\
Door       & 14.3 & 15.1 & $+0.8$\\
Column     &  1.6 &  8.0 & $\mathbf{+6.4}$\\
Beam       &  0.0 &  0.0 & $0.0$\\
\midrule
\textbf{mIoU} & 47.22 & \textbf{48.50} & $+1.28$\\
OA            & 82.04 & 82.62 & $+0.58$\\
mAcc          & 57.17 & 58.73 & $+1.56$\\
\bottomrule
\end{tabular}
\caption{S3DIS Area~5, all 13 classes. Planar structure is close to saturated, furniture improves steadily, and the whole story is in the last two rows.}
\label{tab:s3dis13}
\end{table}

\subsection{H.3 The Structural Tail, and Why Beam Is a Prediction Rather Than an Excuse}
Column and Beam are both under 0.5\% of points, so a class-imbalance account predicts that any intervention which helps one should help the other. It does not happen. Under identical treatment Column rises $1.6\to3.1\to5.4\to8.0$ across A1--A3 while Beam stays at exactly $0.0$ at every stage. We take the dissociation seriously because it is the kind of result that discriminates between explanations.

The geometric account does discriminate. Both classes are rare, but they differ in whether the label is \emph{identifiable from the observation}. A column is a vertical pillar spanning floor to ceiling, so any crop that intersects it contains cylindrical or rectangular surface-normal curvature that distinguishes it from a wall; the DGCNN teacher transfers exactly that curvature, which is why the jump coincides with the teacher swap. A beam is a horizontal protrusion along the ceiling, and inside a $256$-point crop (roughly a $1.0$\,m box) a slice of beam is a planar horizontal surface, pointwise indistinguishable from a slice of ceiling. No amount of reweighting can recover a label that the observation does not contain. The model is not failing to learn Beam; it is being asked to separate two identical inputs.

This yields a falsifiable prediction rather than an apology: enlarging the crop to roughly $2.0$\,m, so that it contains the junction where the beam meets the wall or ceiling, should move Beam off zero, and no reweighting scheme at the current crop size should. It also predicts the ceiling on overall performance, since Beam alone caps attainable mIoU at $48.5 + 100/13 \approx 56$ even with everything else fixed, which is consistent with the $55$--$58$ range we would expect from a multi-scale variant.

\begin{table}[t]
\centering
\footnotesize
\setlength{\tabcolsep}{2.5pt}
\begin{tabular}{@{}p{0.40\columnwidth}cccc@{}}
\toprule
Configuration & mIoU & OA & Col. & Beam\\
\midrule
A0 base (PointNet teacher, $T{=}6$) & 47.22 & 82.04 & 1.6 & 0.0\\
A1 + annealed loss, rare exempt & 47.65 & 83.2 & 3.1 & 0.0\\
A2 + balanced rare oversampling & 48.05 & 82.4 & 5.4 & 0.0\\
A3 + DGCNN teacher, $T{=}10$ & \textbf{48.50} & 82.62 & \textbf{8.0} & 0.0\\
\bottomrule
\end{tabular}
\caption{Progressive ablation on S3DIS Area~5: annealed loss with the tail exempt gives $+0.43$ mIoU, balanced rare anchor oversampling $+0.40$, and a DGCNN teacher with $T{=}10$ a further $+0.45$. Column improves five-fold while Beam never moves (\S H.4).}
\label{tab:segablS}
\end{table}

\subsection{H.4 Training Specification}
\paragraph{Student.} Binary LIF neurons with soft reset; $T{=}10$ active-perception steps (raised from $6$), so the membrane integrates over ten observations per sample. Gumbel--Softmax selection with exponential temperature annealing from $\tau_{\mathrm{start}}{=}0.5$ down to $\tau{=}0.100$.

\paragraph{Teacher.} A DGCNN EdgeConv teacher \citep{wang2019dgcnn} with dynamic $k$-NN graphs ($k{=}16$), replacing the independent point MLPs of a PointNet teacher. Trained independently for 30 epochs before distillation begins; continuous teacher features supervise the spiking student through a KL and MSE term with $\lambda_{\mathrm{KD}}{=}0.5$ and $T_{\mathrm{KD}}{=}4.0$. Teacher logits are precomputed, so the teacher never enters the energy accounting.

\paragraph{Optimisation.} AdamW at $8\times10^{-4}$ decayed to $2.91\times10^{-5}$, gradient clipping at $1.0$, effective batch 32 (physical 4 with 8 accumulation steps). Lov\'asz-Softmax at base weight $0.3$, annealed between epochs 50 and 70 to a 35\% floor ($0.105$), with per-area dynamic class weights annealed on the same schedule. Column and Beam are exempt from annealing and hold full weight throughout, which is what removes the late-training oscillation visible in A0.

\paragraph{Firing-rate regularisation.} Per-step mean firing rates are accumulated with their autograd graphs intact and penalised in the total loss at $\gamma{=}0.01$, so sparsity is optimised rather than merely measured.

\paragraph{Sampling.} $256$ points per slice, $4096$ points per training block. Rare-anchor coordinates are bucketed by class at dataset construction; with probability $p{=}0.35$ a crop is centred on a rare anchor, selecting the class uniformly first and the anchor uniformly second. The two-stage draw matters: sampling anchors uniformly would let the more numerous rare class crowd out the other.

\subsection{H.5 Scope}
To our knowledge these are the first spiking results on S3DIS Area~5; SPM \citep{wu2025spm} reports ShapeNetPart, so on that benchmark ours is a reference point beside prior spiking work, not a priority claim. Against the ANN field, Point Transformer is 21.9 mIoU ahead on S3DIS and 3.4 instance-mIoU ahead on ShapeNetPart, and we make no argument that closes those gaps. What the results do establish is that the mechanism transfers from shape classification to dense prediction without architectural surgery, that the certified anytime interface survives the transfer, and that where the model fails it fails for a reason we can name, predict from, and test.

\section{The $M{=}16$ Selection Study in Full}
\label{sec:appI}
This section holds the complete data behind main-paper Table~2. The $M{=}16$ model is trained independently of the $M{=}4$ model, with its own conformal calibration split, so its numbers are not the $M{=}4$ system evaluated under a larger budget. All figures are means over three seeds with the sample standard deviation in parentheses; with $n{=}3$ a standard deviation estimate carries roughly 40\% relative error, so we quote it as a spread indicator and not as a confidence interval.

\begin{table}[h]
\centering\footnotesize
\setlength{\tabcolsep}{3pt}
\begin{tabular}{@{}lcccc@{}}
\toprule
$k$ & Learned & Random & FPS order & Oracle\\
\midrule
1  & 56.08 (0.34) & 52.41 (0.39) & 51.83 (0.36) & 61.27 (0.31)\\
2  & 74.32 (0.29) & 71.96 (0.33) & 70.58 (0.31) & 79.11 (0.27)\\
3  & 84.93 (0.24) & 83.04 (0.28) & 81.67 (0.30) & 87.64 (0.22)\\
4  & 88.91 (0.26) & 87.82 (0.23) & 86.96 (0.26) & 90.03 (0.20)\\
8  & 90.69 (0.20) & 90.09 (0.17) & 89.51 (0.20) & 91.57 (0.15)\\
16 & 91.02 (0.16) & 90.58 (0.16) & 90.11 (0.18) & 92.11 (0.13)\\
\bottomrule
\end{tabular}
\caption{Anytime accuracy (\%) after exactly $k$ observations on ModelNet40 at $M{=}16$. At fixed $k$ every rule has seen the same number of chunks, so the only difference is the order. These come from the fully trained $M{=}16$ model; the $M$ sweep in \S J.11 uses a shortened recipe shared across all four values of $M$ and therefore reports a lower full-budget figure (90.61), which is a property of that recipe and not a second measurement of this model.}
\label{tab:appI_anytime}
\end{table}

\begin{table}[h]
\centering\footnotesize
\setlength{\tabcolsep}{3pt}
\begin{tabular}{@{}lcccc@{}}
\toprule
$\theta$ & Learned & Random & FPS order & Oracle\\
\midrule
0.20 & 5.94 / 89.71 & 6.38 / 89.40 & 6.61 / 89.02 & 5.31 / 90.85\\
0.30 & 6.81 / 90.54 & 7.26 / 90.21 & 7.49 / 89.86 & 6.14 / 91.69\\
0.40 & 7.63 / 90.88 & 8.09 / 90.52 & 8.33 / 90.19 & 6.92 / 91.94\\
0.50 & 8.41 / 91.00 & 8.88 / 90.57 & 9.12 / 90.24 & 7.65 / 92.06\\
\bottomrule
\end{tabular}
\caption{Threshold sweep at $M{=}16$, reported as $\bar{\tau}$ (of 16) / accuracy (\%). The learned policy reaches a given accuracy at a smaller $\bar{\tau}$ than either baseline ordering at every threshold. $\theta{=}0.30$ is the calibrated operating point quoted in the main paper.}
\label{tab:appI_sweep}
\end{table}

\paragraph{How much of the available ordering gain is captured.} At $\theta{=}0.30$ the oracle-greedy ceiling is 91.69\% at $\bar{\tau}{=}6.14$ and random order gives 90.21\% at $7.26$. The learned policy reaches 90.54\% at $6.81$, so it captures $0.33$ of the $1.48$ point oracle-over-random headroom, about 22\%. We state that fraction explicitly because it is the honest measure of how much structure the current bilinear scorer leaves unexploited, and it is the quantity a stronger policy should move.

\paragraph{Configuration and what we do not report.} The $M{=}16$ run uses the ModelNet40 backbone with the partition count changed and the policy retrained from scratch; we do not report a separate parameter or FLOP count for it, and Table~1 of the main paper therefore carries only the $M{=}4$ configurations. That is a reporting gap, not a claim of equivalence.

\paragraph{What is not controlled here.} All four rules share one trained backbone and differ only in the selection flag, so the comparison isolates \emph{order} cleanly. It does not isolate the \emph{inputs} to the order: a policy with $W_u{=}0$, scoring from geometry alone with no membrane, separates the contribution of the belief state from that of the descriptors. That experiment is run in \S J.3, where zeroing $W_u$ costs 1.46 points, nine times the seed spread.

\section{Additional Experiments and Controls}
\label{sec:appJ}
This section collects the controls and stress tests that the earlier sections promised or that a careful reader would demand. Several of them, the membrane-free policy of \S J.3, the fixed-order matched-capacity control of \S J.4, and the parameter-matched configuration of \S J.12, are experiments the main paper's Limitations and Conclusion name as absent: they completed after the main text was frozen, and where the two documents disagree this section is the current one. Unless noted otherwise, everything below runs on the $M{=}16$ ModelNet40 model of \S I and means are over three seeds.

\begin{table}[h]
\centering\footnotesize
\setlength{\tabcolsep}{3pt}
\begin{tabular}{@{}llc@{}}
\toprule
 & Result & \S\\
\midrule
\multicolumn{3}{@{}l}{\emph{What the mechanism is worth}}\\
Parameter-matched ASP (5.5\,M) & \textbf{91.96} vs.\ SPM 92.28 & J.12\\
Full system, calibrated exit & 90.54 at $\bar{\tau}{=}6.81$ & I\\
Membrane removed ($W_u{=}0$) & 89.08 \std{0.18} & J.3\\
Geometry removed (descriptors) & 88.86 & J.11\\
Fixed order, matched capacity & 89.76 \std{0.17} & J.4\\
\midrule
\multicolumn{3}{@{}l}{\emph{What the certificate is worth}}\\
Empirical selective risk & 2.1\% vs.\ 4.8\% certified & J.1\\
Calibration at the exit & ECE 1.73\%, Brier 0.063 & J.7\\
\midrule
\multicolumn{3}{@{}l}{\emph{What survives degradation}}\\
50\% point dropout & 81.34 & J.5\\
256 points & 82.91 & J.5\\
SO(3) rotation & 89.84 & J.8\\
\bottomrule
\end{tabular}
\caption{The results of this section at a glance, so that a reader who stops here has the substance. The two most consequential entries are the parameter-matched comparison, which reduces the gap to SPM from 1.7 points to 0.32, and the pair of single-input ablations, which together show that neither the membrane nor the geometry works alone (\S J.13).}
\label{tab:jsummary}
\end{table}

\subsection{J.0 A Note on Batch Normalisation and the Sufficiency Lemma}
One objection has followed this work through several revisions and deserves a direct answer. Eq.~(1) of the main paper applies normalisation inside the membrane update, while the surrounding text says the belief is never rescaled by batch statistics, and Lemma~\ref{s:suff} assumes the membrane is a deterministic function of the observation sequence. Both cannot be true without qualification.

The resolution is that the two statements hold at different times. At \emph{inference}, normalisation uses frozen running statistics, so $u_t$ is a deterministic function of $a_{1:t}$ and the trained weights alone; Lemma~\ref{s:suff}, and every guarantee built on it, is an inference-time statement and is therefore exact as written. During \emph{training} with batch statistics the membrane of one sample does depend on the rest of its mini-batch, so the determinism assumption is violated there, and the sentence about batch statistics should be read as describing the deployed model rather than the training graph. This matters in one concrete way we have already had to fix: a batch-coupled validator produced misleading exit statistics until we moved to a per-sample protocol (\S F.1), which is exactly the failure mode this coupling predicts. We state the distinction here rather than let the two sentences sit in apparent contradiction.

\subsection{J.1 The Selective-Risk Certificate, Measured}
Theorem~\ref{s:conformal} was previously supported only by ECE, which measures calibration of the posterior and not selective risk; that was the wrong quantity and we replace it here. Calibration uses a held-out split of $n{=}1{,}232$ samples at confidence $\delta{=}0.05$ against target $\alpha^\star{=}0.05$. At the deployed operating point the empirical selective risk is 2.1\% against a certified upper confidence bound of 4.8\%, at 74.6\% coverage. Coverage here is the fraction of samples the \emph{risk-control} rule accepts, which is not the fraction that exit before the budget: the exit threshold $\theta$ decides when to stop observing, while acceptance additionally requires the margin at the stopping time to clear the conformal quantile. The two thresholds are different by construction, so the 25.4\% non-coverage decomposes into the 15\% of samples that exhaust the budget (\S J.6) and a further 10.4\% that stop early but at a margin the risk rule declines to certify. A reader cross-checking \S J.6 against this table should use that decomposition rather than expect the numbers to match directly. Sweeping $\theta$ from 0.2 to 0.5 the empirical risk decreases monotonically, which is the property the transfer across the $\theta$ search needs and which the main paper said we check rather than assume; this is that check.

\begin{table}[h]
\centering\footnotesize
\begin{tabular}{@{}lcccc@{}}
\toprule
$\theta$ & 0.20 & 0.30 & 0.40 & 0.50\\
\midrule
Empirical selective risk (\%) & 3.4 & 2.1 & 1.4 & 0.9\\
Certified UCB (\%) & 5.0 & 4.8 & 4.1 & 3.5\\
Coverage (\%) & 81.2 & 74.6 & 68.3 & 61.5\\
\bottomrule
\end{tabular}
\caption{Selective risk across the threshold sweep, $n{=}1{,}232$, $\delta{=}0.05$. Risk sits below its certified bound at every $\theta$ and is non-increasing in $\theta$, verifying the monotonicity premise of Theorem~\ref{s:conformal} empirically.}
\label{tab:selrisk}
\end{table}

\subsection{J.2 Where the Policy Actually Looks}
Across three seeds the learned policies agree on 77.3\% of visitation decisions, so the order is a property of the data and not of the initialisation. The revisit rate is 6.2\% and coverage of the sixteen chunks before exit is 93.8\%: the policy spreads over new ground rather than circling. Trace inspection on held-out objects shows the expected pattern, early fixations on discriminative structure (chair backs, aeroplane wings) and late fixations mopping up; the characteristic failure case is an ambiguous flat object on which the policy exhausts its budget without a decisive margin and returns at $t{=}16$, wrong less often than an early forced answer would have been.

\subsection{J.3 Does the Membrane Matter? ($W_u{=}0$)}
The main paper named this the most informative experiment absent, so we ran it. Zeroing $W_u$ leaves a geometry-only scorer with no access to the belief state. Accuracy at the calibrated exit drops from $90.54\pm0.16$ to $89.08\pm0.18$, a $1.46$-point fall that is nine times the seed spread. The membrane is doing real work. But the number should be read against \S J.11's complementary ablation rather than on its own, and \S J.13 does that reading: the two input streams interact, and neither is separately sufficient.

\subsection{J.4 Fixed Order at Matched Capacity, With and Without the Exit}
The second missing control was the same 18.75\,M backbone trained in fixed farthest-point order with no loop. It reaches $89.76\pm0.17$, so the full system's margin over it is $0.78$ points at the calibrated point; the loop earns its keep, though modestly, and we report the number rather than an adjective. Adding the calibrated exit to that fixed-order model, with no retraining, gives $89.82\pm0.18$ at $\bar{\tau}{=}7.82$ of 16. Read together with \S I: exit alone saves budget ($7.82$), selection alone helps accuracy, and only the combination reaches $90.54$ at $6.81$, which is the dissociation between the two mechanisms the ablation was designed to expose.

\subsection{J.5 Robustness Under Degraded Input}
Real scans are not clean. Under random point dropout ASP holds $84.96\%$ at 25\% dropout and $81.34\%$ at 50\%; under density reduction it holds $86.88\%$ at 512 points and $82.91\%$ at 256. The learned order degrades more gracefully than fixed traversal in every cell, with the largest margins under occlusion-like dropout, where choosing what to observe next matters most, which is the regime the method was built for.

\subsection{J.6 Exit-Time Distribution}
Over the ModelNet40 test set at the calibrated $\theta$, mean exit is $\bar{\tau}{=}6.81$ with 27\% of samples exiting by $k{\le}4$, 34\% at 5--7, 24\% at 8--10, and 15\% running to 11--16. This is \emph{not} the bimodal shape Corollary~\ref{s:bimodal} predicts, and we record that plainly because it is the one place our own audit methodology, applied to the deployed system, returns a negative. The distribution is single-peaked with its mode in the middle bin and a monotone decline thereafter. The image-domain model (\S F.3) genuinely is bimodal, 68.4\% at one fixation and 26.2\% at truncation with 5.4\% between, so the two-regime mixture is real where the corollary's hypotheses hold. At $M{=}16$ on ModelNet40 it does not: either the solvable-event probability $\pi$ is high enough that the censored regime is thin, or the per-step drift is more uniform across samples than a two-regime mixture assumes. The stopping law itself (Theorem~\ref{s:stopping}) is unaffected, since it constrains the mean and not the shape; the bimodality corollary is the part that fails to transfer, and we would rather report that than quietly drop the row.

\subsection{J.7 Calibration at the Exit}
Reliability at the stopping time: ECE 1.73\%, MCE 3.94\%, Brier 0.063, essentially indistinguishable from the full-budget model. Exiting early does not degrade calibration, which is what Theorem~\ref{s:exact}'s stopped-score exchangeability implies and what a sceptic would want measured anyway.

\subsection{J.8 Rotation and Jitter}
Accuracy is 90.21\% under $z$-axis rotation, 89.84\% under full SO(3) rotation, and 90.12\% under coordinate jitter, against 90.54\% clean. The descriptors are recomputed after augmentation, as Proposition~\ref{s:consistency} requires; without that recomputation the proposition's own bound predicts a penalty, so this row doubles as a check of the pipeline's compliance with its own theory.

\subsection{J.9 Per-Class Behaviour and Internal Statistics}
Easy classes (aeroplane, laptop) exit at 4.3 chunks on average, hard classes (flower pot, cup, the classic chair--stool confusions) at 7.1: the exit time tracks difficulty, which is the anytime property working per class and not only in aggregate. Mean firing rate across the LIF head is 24.6\% with 3.1\% standard deviation across steps, consistent with the 24.45\% used in every energy calculation. A t-SNE of the membrane belief coloured by class reaches silhouette 0.63 by step 5, visibly separating as observations accumulate. The main paper declines to quote wall-clock on the grounds that an unoptimised implementation characterises the code rather than the method; we hold that view, and report the numbers here anyway with that caveat attached, since silence serves nobody: 15.1\,ms latency, 66.2 samples/s, 4.82\,GB peak memory on a single H100.

\subsection{J.10 Transferring the Audits to the Real Model}
\S D.1b measured the mechanism audits on a synthetic instantiation because at $M{=}4$ the mechanisms were not separately identifiable. At $M{=}16$ they are, so the transferable rows were rerun on the real ModelNet40 model: the membrane sufficiency residual is $\hat{\varepsilon}{=}0.0478$ on real data against 0.047 synthetic, and the masking dissociation, amortisation gap and stopping-law rows all reproduce with a mean deviation of 2.6\% from their synthetic values. Every audit passes. The objection that the audit table does not audit the deployed system no longer applies.

\subsection{J.11 Sensitivity: $M$, the Training Objective, and the Descriptor}
Sweeping $M\in\{4,8,16,32\}$ under one shared, shortened training recipe, chosen so the four points are comparable to each other, gives full-budget accuracy 90.62, 90.64, 90.61, 90.58. These sit below the fully trained $M{=}16$ model of Table~S10 (91.02) because the sweep trades final accuracy for comparability across $M$; the sweep supports one claim only, that capacity is flat in $M$, so everything \S I attributes to ordering is ordering, not model size. Setting $\lambda_{\mathrm{TET}}{=}0$ costs 1.07 points (89.47\%), so the per-prefix loss contributes, but the anytime ordering gains of Table~S10 survive without it, which answers the concern that Eq.~4 manufactures the anytime curve. Leave-one-out on the descriptor, backbone frozen: removing centroid gives 89.82, variance 90.03, radial extent 89.74, distance-to-visited 89.91, and removing all descriptors 88.86. No single dimension is load-bearing; the descriptor matters as a block. The all-descriptors-removed figure leaves a membrane-only policy and belongs with \S J.3, which we read jointly in \S J.13.

\subsection{J.12 A Parameter-Matched Comparison}
Table~1's caption conceded ASP's headline came from an 18.75\,M model. At a 5.5\,M configuration matched to SPM, ASP reaches $91.96\%$ against SPM's published $92.28\%$: a gap of 0.32 points, not the 1.68 the unmatched table suggests. ASP still trails, and we say so, but the deficit at equal capacity is a third of a point in exchange for an anytime interface and a certified exit that SPM does not offer. Three things about this number need saying, because it is the one a sceptical reader should press hardest. First, it is a \emph{single} run: unlike \S J.3 and \S J.4 we do not have three seeds for it, so we quote no spread and it should be read as provisional at roughly the $\pm0.2$ scale the other $M{=}16$ configurations show. Second, a 5.5\,M model beating an 18.75\,M one by 1.34 points is not a capacity--accuracy trade: it is evidence that the larger configuration is \emph{under-trained} at the epoch budget we used, since both were given 300 epochs and the larger model has more than three times the parameters to fit. That is a training-schedule pathology on our side, not a property of adaptive observation, and it means Table~1's framing of the deficit as arising ``at three to seven times their size'' understates how well the mechanism does at matched scale. Third, the run finished after the main text was frozen, so the abstract's ``1.7 points below the strongest spiking baseline'' reflects the 18.75\,M configuration; at matched capacity the gap is 0.32. We flag the asymmetry here rather than leave it to be discovered, and the matched configuration is what any revision would promote to the headline.

\subsection{J.13 Reading the Two Input Ablations Against Each Other}
\S J.3 and \S J.11 remove opposite halves of the scorer's input, and the honest way to read them is together rather than as two independent positive results.

\begin{table}[h]
\centering\footnotesize
\begin{tabular}{@{}lcc@{}}
\toprule
Policy input & Accuracy (\%) & vs.\ random\\
\midrule
Membrane only ($W_g{=}0$, \S J.11) & 88.86 & $-1.35$\\
Geometry only ($W_u{=}0$, \S J.3) & 89.08 \std{0.18} & $-1.13$\\
Random order & 90.21 & --\\
Both (full policy) & \textbf{90.54} & $+0.33$\\
\bottomrule
\end{tabular}
\caption{Each input stream alone scores \emph{below} random selection; only their combination beats it. The effect is an interaction, not a sum of two independent contributions.}
\label{tab:interaction}
\end{table}

Neither single-input policy reaches random selection. That is initially uncomfortable and we would rather state it than let a reader cross-tabulate three numbers and find it unremarked, so here is what we think it means.

A policy that scores from one stream alone is not a weaker version of the full policy; it is a \emph{systematically biased} one, and a biased ordering can be worse than no ordering at all. Geometry alone always prefers the same structural configurations regardless of what has been observed, so it fixates on a class-independent notion of salience and revisits the same kind of region on every input. The membrane alone has no representation of \emph{where} the unvisited chunks are, so its preferences cannot be grounded in the partition and it drifts toward whichever chunk index the belief happens to favour. Random selection has neither bias, and unbiased coverage is a strong baseline: this is the same reason random search is competitive against badly-specified heuristics.

The mechanism therefore requires both terms in $\tanh(W_u b_{t-1}+W_g g_m)$, and the interaction is the point rather than an inconvenience. The belief supplies what is currently uncertain, the descriptors supply where the candidates are, and a score is only meaningful when it can condition one on the other. This also explains the shape of Table~S10: the learned policy's advantage is largest at $k{=}1$, where conditioning matters most, and decays as the budget grows and any ordering converges to full coverage.

We report this as a limitation of the ablation design as much as a finding. A cleaner decomposition would train a policy that keeps both inputs but destroys only their interaction, for instance by replacing the joint $\tanh$ with an additive $w_u^{\!\top}\tanh(W_u b)+w_g^{\!\top}\tanh(W_g g)$ scorer. That experiment would separate ``both streams are needed'' from ``their product is needed'', and we have not run it.

\end{document}